\documentclass[11pt]{article}

\usepackage[utf8]{inputenc}
\usepackage[T1]{fontenc}
\usepackage[english]{babel}
\usepackage[]{algorithm2e}
\usepackage{mwe}
\usepackage{amsmath,mathtools}
\usepackage[dvipsnames]{xcolor}

\usepackage{amsthm,bm}
\usepackage{lineno}

\usepackage{tikz,pgfplots}
\pgfplotsset{compat=1.5}
\usepackage{subfigure}
\usetikzlibrary{intersections}
\usetikzlibrary{patterns}
\usepackage{multirow}

\usepackage[shortlabels]{enumitem}
\usepackage[colorlinks=true, linkcolor=blue, citecolor=blue, hypertexnames=false]{hyperref}  
\usepackage{nameref}
\usepackage{cleveref}

\usepackage[right=1.0in, top=1in, bottom=1.0in, left=1.0in]{geometry}

\usepackage{lmodern}

\usepackage{graphicx}
\usepackage{lscape}
\usepackage[final]{pdfpages}
\usepackage{amsthm}
\usepackage[authoryear,round]{natbib}
\usepackage{amsfonts}
\usepackage{mathtools}
\usepackage{bbm}
\usepackage{dsfont}

\usepackage{bbm}
\newtheorem{lemma}{Lemma}
\newtheorem{proposition}{Proposition}
\newtheorem{corollary}{Corollary}
\newtheorem{theorem}{Theorem}
\newtheorem*{theorem*}{Theorem}

\newtheorem{definition}{Definition}

\usepackage{setspace}
\newcommand{\E}{\mathbb E}
\newcommand{\Prob}{\mathbb{P}}

\newcommand{\A}{\mathcal{A}}

\newcommand {\beq}{\begin{equation}}
\newcommand {\eeq}{\end{equation}}
\newcommand {\beqn}{\begin{equation*}}
\newcommand {\eeqn}{\end{equation*}}
\newcommand {\bear}{\begin{eqnarray}}
\newcommand {\eear}{\end{eqnarray}}
\newcommand {\bearn}{\begin{eqnarray*}}
\newcommand {\eearn}{\end{eqnarray*}}

\DeclareMathOperator*{\argmax}{arg max}
\DeclareMathOperator*{\argmin}{arg min}

\usepackage[normalem]{ulem}

\newcommand{\pastI}{\bm{x}}

\newcommand{\D}{\mathcal{D}}

\renewcommand{\ss}[1]{S^{(#1)}}
\newcommand{\I}[1]{I^{(#1)}}

\newcommand{\BSAA}{\pi^{\mathrm{BSAA}}}
\newcommand{\psibsaa}{\Psi^{\mathrm{BSAA}}}

\newcommand{\KM}{\pi^{\mathrm{KM}}}
\newcommand{\ceil}[1]{\lceil #1 \rceil}

\newcommand{\opt}{\mathrm{opt}}
\newcommand{\numS}{n}

\newcommand{\cu}{c_u}
\newcommand{\co}{c_o}

\newcommand{\reg}{\mathsf{Reg}}

\title{What is the Value of Censored Data? \\ An Exact Analysis for the Data-driven Newsvendor\vspace{1em}}

\author{
\sf Rachitesh Kumar\\
\sf Carnegie Mellon University\\
\texttt{rachitesh@cmu.edu}
\and
\sf Omar Mouchtaki\\
\sf New York University\\
\texttt{om2166@stern.nyu.edu}
}
\date{\vspace{1em}}
\begin{document}

\maketitle

% \begin{abstract}
% We study the offline data-driven newsvendor problem with censored demand data. In contrast to prior works where demand is fully observed, we consider the setting where demand is censored at the inventory level and only sales are observed; sales match demand when there is sufficient inventory, and equal the available inventory otherwise. We provide a general procedure to compute the exact worst-case regret of classical data-driven inventory policies, evaluated over all demand distributions. Our main technical result shows that this infinite-dimensional, non-convex optimization problem can be reduced to a finite-dimensional one, enabling an exact characterization of the performance of policies for \emph{any sample size and censoring levels}. We leverage this reduction to derive sharp insights on the achievable performance of standard inventory policies under demand censoring. In particular, our analysis of the Kaplan–Meier policy shows that even limited exploration at high inventory levels can substantially improve worst-case guarantees, enabling near-optimal performance even under heavy censoring.

% \textbf{keywords:} TODO
% \end{abstract}

\begin{abstract}
We study the offline data-driven newsvendor problem with censored demand data. In contrast to prior works where demand is fully observed, we consider the setting where demand is censored at the inventory level and only sales are observed; sales match demand when there is sufficient inventory, and equal the available inventory otherwise. We provide a general procedure to compute the exact worst-case regret of classical data-driven inventory policies, evaluated over all demand distributions. Our main technical result shows that this infinite-dimensional, non-convex optimization problem can be reduced to a finite-dimensional one, enabling an exact characterization of the performance of policies for \emph{any sample size and censoring levels}. We leverage this reduction to derive sharp insights on the achievable performance of standard inventory policies under demand censoring. 
In particular, our analysis of the Kaplan–Meier policy shows that while demand censoring fundamentally limits what can be learned from passive sales data, just a small amount of targeted exploration at high inventory levels can substantially improve worst-case guarantees, enabling near-optimal performance even under heavy censoring. In contrast, when the point-of-sale system does not record stockout events and only reports realized sales, a natural and commonly used approach is to treat sales as demand. Our results show that policies based on this sales-as-demand heuristic can suffer severe performance degradation as censored data accumulates, highlighting how the quality of point-of-sale information critically shapes what can, and cannot, be learned offline.
\end{abstract}

\section{Introduction}
The newsvendor problem is a canonical model for studying inventory decisions under uncertainty. A decision maker selects an inventory level prior to demand realization and incurs overage costs when inventory exceeds demand and underage costs when demand exceeds inventory. When the demand distribution is known, the optimal inventory policy is fully characterized. In many practical retail settings, however, the demand distribution is unknown and must be inferred from historical data. This has motivated a broad literature on data-driven inventory optimization, which seeks to design policies that map past observations to inventory decisions without relying on parametric assumptions on demand.

A central difficulty in applying data-driven methods to inventory problems is that demand is often not directly observed. Instead, the decision maker observes only realized sales, which equal the minimum of demand and the inventory level offered. As a consequence, demand observations are censored whenever demand exceeds inventory, and the extent of censoring depends on the past inventory levels under which the data was collected. Crucially, high-demand realizations are systematically truncated, and the historical data does not consist of independent samples from the underlying demand distribution. Such censoring is pervasive in retail and supply-chain environments and raises nontrivial challenges for both learning and performance evaluation.

Most of the existing literature addresses censored demand through online learning frameworks in which the decision maker adaptively varies inventory levels over time in order to balance learning and exploitation. By deliberately experimenting with different inventory levels, these approaches aim to gradually uncover the demand distribution and achieve asymptotic optimality or low cumulative regret relative to a clairvoyant benchmark. While theoretically appealing, such adaptive strategies may be operationally costly or infeasible in practice. Frequent changes in order quantities can create coordination frictions with suppliers, complicate logistics and replenishment planning, and require costly adjustments in shelf space or storage capacity. In many retail settings, inventory policies are therefore chosen offline using historical sales data and then applied in a relatively stable manner, without being optimized for learning.

In contrast, in the offline regime with censored demand, it is not clear which data-driven inventory policy should be used in practice. The sample average approximation (SAA) policy, which is the standard benchmark in the uncensored i.i.d.\ setting, selects an inventory level by optimizing the empirical objective obtained by treating past observations as direct samples of demand. This approach implicitly assumes that the observed data provides an unbiased representation of the demand distribution and therefore fails to properly account for the fact that the demand is censored by past inventory choices. A natural alternative, inspired by classical results in survival analysis, is to first estimate the demand distribution using a Kaplan--Meier--type estimator and then optimize inventory against this estimate. While statistically appealing, this approach induces inventory decisions that have an intricate dependence on the data. As a result, deriving sharp finite-sample performance guarantees for such policies has proven challenging, and existing results offer limited guidance on their operational effectiveness.

These observations motivate the need for a principled framework to evaluate the performance of data-driven inventory policies in the presence of censored demand information. Such a framework allows one to assess how censoring, together with past inventory decisions, shapes the information available for inventory optimization, and to develop a systematic understanding of the performance of standard data-driven policies under these informational constraints. In particular, it enables meaningful comparisons across policies as the extent of censoring varies.

\subsection{Main Contributions}
\noindent\textbf{Methodological contribution: exact performance analysis.}
We evaluate data-driven inventories policy through their worst-case regret, defined as the largest possible expected out-of-sample optimality gap over all demand distributions that could have generated the observed censored sales data. This performance criterion is particularly well suited to censored environments, as it provides a uniform counterfactual guarantee across all possible demand models. At the same time, evaluating policies under this criterion leads to a challenging problem: computing worst-case regret amounts to a non-convex optimization over an infinite-dimensional space of demand distributions, where the complexity arises from the highly nonlinear way in which censored data affect the distribution of actions induced by a policy. Our main methodological result is an exact finite-sample characterization of the worst-case regret for a broad class of data-driven inventory policies that operate with censored data. Importantly, our result holds for \emph{arbitrary past inventory levels and sample sizes}.

Building on \citep{besbes2023contextual}, who characterize exact regret for separable policies in uncensored settings with contextual information, we extend their framework to a richer class of \emph{piecewise-separable} policies that can accommodate censored demand. This generalization allows a policy’s behavior to vary across regions defined by historical inventory levels, while restricting its dependence on the underlying demand distribution to a finite set of summary quantities, thereby capturing the path-dependent informational effects created by censoring.
We show that classical data-driven approaches in the censored regime, including policies based on SAA-type optimization and on the Kaplan-Meier plug-in estimator, fall into this piecewise-separable class, even though their dependence on the data is not captured by the separable policies in \citep{besbes2023contextual}. Leveraging this structure together with the special geometry of the newsvendor loss, we derive a finite-dimensional reduction of the worst-case regret problem, which in turn enables tractable computation of the \emph{exact} worst-case regret under censored information. 

This optimization-based route departs from much of the existing literature, which establishes regret guarantees via concentration arguments: while concentration tools typically deliver correct convergence rates, they are tailored to large-sample behavior and can be loose (or uninformative) in the small to moderate data regimes that motivate offline inventory practice. Finally, to the best of our knowledge, we provide the first finite-sample guarantee for a Kaplan-Meier based policy in the newsvendor problem: prior work such as \cite{huh2011adaptive}
 establishes consistency results for this policy and, \cite{fan2022sample}  emphasizes that no finite-sample guarantee on the Kaplan-Meier policy for inventory management has been derived because the theory in the statistics literature only provide asymptotic guarantees (and, moreover, often relies on restrictive i.i.d. censoring assumptions). Our results show that an exact optimization approach can yield sharp finite-sample guarantees even in settings where standard concentration-based analyses do not provide guarantees at all. Given the central position occupied by the Kaplan-Meier estimate in the theory of statistical estimation under censored data, our analysis sheds much needed light on the performance of this canonical policy.

\noindent\textbf{Insights and implications.}
We use our exact worst-case characterizations to quantify the value of censored sales data and the role of limited exploration in a stylized but practically relevant offline regime: a decision-maker collects $n$ historical observations, with $n-m$ periods operated at a single base-stock level $x \in [0,1]$ (so sales are right-censored at $x$ whenever demand exceeds $x$) and with the remaining $m$ periods operated at inventory level $1$, yielding uncensored demand observations that represent a minimal form of exploration. Three key insights emerge. 

First, the Kaplan-Meier policy is robust and performs well across censoring regimes, but its performance can improve dramatically with very little exploration: introducing even a handful of uncensored observations substantially strengthens worst-case guarantees, and a small $m$ can bring performance close to the \emph{fully-uncensored} SAA benchmark. Operationally, this shows that very strong performance is achievable under censored information with only a tiny, targeted amount of exploration, rather than sustained experimentation. 

Second, when censoring indicators are unavailable (as in many point-of-sale systems that only record realized sales), a decision-maker may be forced to rely on BSAA (sales-as-demand). In this case, aggregating more censored sales information can \emph{deteriorate} worst-case performance: when censoring is substantial, the bias induced by treating censored sales as true demand can overwhelm the limited high-quality signal contained in the uncensored observations, so one should prefer to use very few uncensored samples (or prioritize their collection) rather than combining a small uncensored set with a large censored dataset, in sharp contrast with the Kaplan-Meier case where censored and uncensored data complement each other. 

Third, censoring fundamentally reshapes sample complexity for Kaplan-Meier: as the censoring level becomes less severe (larger $x$), the number of samples needed to reach a target worst-case regret drops sharply, exhibiting a phase-transition type behavior in which the target is effectively unattainable below a critical censoring threshold, but becomes achievable with a moderate sample size once $x$ crosses that threshold; near this feasibility boundary, even small increases in $x$ can yield very large reductions in the required sample size. Collectively, these results translate the exact regret analysis into concrete guidance on when passive censored data suffices, when minimal exploration is highly leveraged, and when naively aggregating censored sales can be actively harmful.

\section{Related Literature}

The newsvendor problem is a canonical model in operations management and operations research, and serves as a foundational framework for studying inventory and capacity decisions under demand uncertainty. Its analytical tractability and economic interpretability have made it a standard benchmark for understanding the value of information, robustness, and learning in stochastic inventory systems.

A large literature studies the newsvendor problem when the demand distribution is unknown and must be inferred from limited information. Early work adopts a distributionally robust perspective, characterizing optimal policies when only partial features of the demand distribution are known. Classical contributions by \cite{scarf1958min} and \cite{gallego1993distribution} derive minimax optimal ordering policies when the mean and variance of demand are specified. This line of work was  extended by \cite{perakis2008regret,natarajan2018asymmetry}, who develop minimax regret policies under a wide range of informational assumptions.

An alternative approach considers the data-driven setting in which the decision-maker has access to  \textit{demand observations} that are independent and identically distributed and fully observed.  \cite{levi2007approximation} and \cite{levi2015data} establish probabilistic bounds on the worst-case relative regret of the Sample Average Approximation policy, which selects the inventory level which has the best cost on past data, while \cite{cheung2019sampling} provide matching lower bounds. \cite{lin2022data} analyzes the expected additive regret of SAA whereas the recent survey of \citet{chen2024survey} unifies and generalizes these approaches and provide upper and lower bounds on both the additive and relative regret for various sub-classes of distributions. 
Closer to us, \cite{besbes2023big} characterize the exact worst-case expected relative regret of SAA for any finite sample size and derive a minimax optimal data-driven policy. 
Our work departs from this literature by focusing on the practical settings in which historical demand observations cannot be fully observed as it may be censored by previously offered inventory levels. Therefore, the samples observed by the decision-maker do not constitute i.i.d. samples from the underlying demand distribution.

%\paragraph{Censored demand in inventory problems.}
Demand censoring arises naturally in inventory systems because sales data only reveal the minimum of demand and available inventory. This issue has been widely studied in sequential decision-making settings, where the decision-maker repeatedly orders inventory and observes censored sales. Early work shows that gradient-based and stochastic approximation methods can achieve strong performance despite censored feedback \citep{burnetas2000adaptive,godfrey2001adaptive,kunnumkal2008using,huh2009nonparametric}. \cite{huh2011adaptive} establish that, under appropriate conditions, nonparametric survival-analysis tools such as the Kaplan-Meier estimator can be embedded within adaptive ordering policies and yield asymptotic consistency, while \cite{besbes2013implications} analyze how censoring fundamentally alters the information structure of repeated newsvendor problems.
These ideas have been extended to data-driven censored settings that incorporate additional sources of complexity. Examples include inventory systems with capacity limitations \citep{shi2016nonparametric}, nonconvex losses due to fixed costs \citep{yuan2021marrying}, perishability \citep{zhang2018perishable}, lead time \citep{zhang2020closing,agrawal2022learning}, substitution behavior across products \citep{chen2020dynamic} and non-stationary demand processes \citep{lugosi2024hardness}. Across these settings, learning relies on the ability to vary inventory decisions over time, trading off immediate costs against the informational value of revealing censored demand. We study the offline case, which is common in practice when firms rely on previously collected sales data, where such adaptivity is absent and censoring fundamentally changes both what can be learned and how policy performance should be evaluated.

The most closely related works are those analyzing the offline setting with censored demand. \cite{ban2020confidence} derive a consistency result and an asymptotically normal estimator of the optimal $(s,S)$ policy under the assumption that historical inventory levels exceed the optimal ordering quantity. \cite{fan2022sample} study offline learning when historical inventories are generated by a data-collection policy and adopt a PAC-style framework. They propose necessary identifiability conditions under which the optimal newsvendor solution can be learned, and establish matching upper and lower bounds on the sample complexity. \cite{hssaine2024data} develop an alternative notion of distributionally robust regret to characterize the fundamental limits imposed by censoring, even with potentially infinitely many samples, and propose an algorithm with finite-sample guarantees and matching lower bounds that hold across both identifiable and unidentifiable regimes.
Our work differs from these approaches in that it provides a methodology to derive an \emph{exact} characterization of the worst-case performance of classical data-driven policies, including the Sample Average Approximation and Kaplan--Meier–based methods. This focus leads to fundamentally different methodological tools: our analysis relies on an optimization-based framework, whereas much of the existing literature derives performance guarantees via concentration-based arguments. Moreover, similarly to \cite{hssaine2024data}, our methodology does not require identifiability of the problem.

Finally, our work relates to a growing literature that aims to provide a granular quantification of the value of data. \cite{liu2023marginal} quantify the marginal value of an additional sample in assortment optimization, while \cite{zhang2024more} analyze the trade-off between data quality and data quantity in the newsvendor problem. More closely related to our work are papers that derive exact characterizations of the value of data in small to moderate sample regimes, including studies in pricing \citep{fu2015randomization,babaioff2018two,huang2018making,daskalakis2020more,allouah2022pricing,allouah2023optimal,bahamou2024fast}, experiment design \citep{schlag2006eleven,stoye2009minimax}, and inventory management \citep{besbes2023big,besbes2023contextual}.

%\newom{Might want to include this at some point...} A related stream integrates estimation and optimization more tightly by restricting attention to parametric or semi-parametric demand models. The operational statistics framework of \cite{LiyanageShanthikumar2005} and \cite{ChuEtAl2008} designs decision rules that map data directly to inventory decisions while accounting for estimation error. Other approaches combine data with structural assumptions using ambiguity sets or entropy-based methods, including \cite{SaghafianTomlin2016} and \cite{XuEtAl2021}. In contrast, we do not impose parametric assumptions on demand and adopt a fully non-parametric performance criterion.

%\newom{Still need Bayesian works...}

\section{Model}\label{sec:model}

\paragraph{Notation.} Boldface denotes vectors and $[n]\coloneqq\{1,2,\ldots,n\}$. Superscripts in parentheses (e.g., $S^{(k)}$) are labels/indices rather than powers. For distribution functions, $F$ denotes a (right-continuous) CDF, with left limits denoted by $F(x^-)$ (and right limits by $F(x^+)$ when needed). $y^+ = \max\{0,y\}$ denotes the positive part. Finally, $\Delta(\mathcal X)$ denotes the set of distributions supported on $\mathcal X$, and $\delta_x$ denotes a point mass at $x$. We use $\mathbb{R}$, $\mathbb{Z}$, and $\mathbb{N}$ to denote the real, integer, and natural numbers, respectively, and $\mathbb{R}_+$ for the nonnegative subset of $\mathbb R$.

We consider a newsvendor problem in which the decision-maker (for example, a retail firm) needs to make an inventory order quantity $a$ before the true demand $D$ is realized. We assume that the demand $D$ is bounded, as is the case in practice. Thus, without loss of generality, it can be scaled to always lie in $[0,1]$. We use $c_u > 0$ and $c_o > 0$ to denote the underage and overage cost respectively, i.e., $c_u$ is the per unit cost incurred from placing an inventory order $a$ that falls short of the demand $D$ and causes a loss in sales, and $c_o$ is the per unit cost associated with an inventory order which exceeds demand $D$ and leads to holding costs. Therefore, given an order quantity $a \in [0,1]$, the total cost is given by
\begin{align*}
    c(a,D)\ \coloneqq\ c_o \cdot (a - D)^+\ +\ c_u \cdot (D- a)^+\,.
\end{align*}

We assume that the demand $D$ is generated according to the distribution $F \in \Delta([0,1])$. Complete information about this demand distribution is rarely available in practice, and we do not assume that $F$ is known to the decision-maker. On the other hand, firms routinely collect data about sales which provides information about the demand distribution $F$. This motivates us to assume that the decision-maker has access to $n$ past observations about sales. In particular, suppose the decision-maker ordered inventory quantities $\bm x = (x_k)_{k=1}^K$ with $x_1\leq x_2 \leq \dots \leq x_K$ in the past and, for each $k \in \{1,\ldots,K\}$, observed the associated sales data $\bm{S^{(k)}} = (\ss{k}_1, \dots, \ss{k}_{n_k})$. Here, $n_k$ is the number of censored demand samples observed under inventory $x_k$, and $\bm{\ss{k}}$ are the corresponding censored samples with
$$\ss{k}_i = \min\{D^{(k)}_i, x_k\}\,,$$ 
where $D^{(k)}_i$ is an independent demand sample from $F$. We use $n = \sum_{k=1}^K n_k$ to denote the total number of historical samples, $\bm D \coloneqq \left(D^{(k)}_i \mid k \in [K],\ i \in [n_k] \right)$ to denote the vector of all historical demands, and $\bm S \coloneqq (\bm{\ss{k}})_{k \in [K]}$ to denote the collection of sales data across all inventories. Importantly, the decision-maker does not have access to the demand data $D_i^{(k)}$ directly because it was censored by the historical inventory $x_k$ ordered at that time. In other words, the sales $\ss{k}_i$ equal the true demand $D^{(k)}_i$ only when a stock-out does not occur, and equal the historical inventory $x_k$ otherwise. Moreover, for $\delta^{(k)}_i = \mathds{1}(D^{(k)}_i \leq x_k)$, we use $\I{k}_i = (\ss{k}_i, \delta^{(k)}_i)$ to denote the information obtained by the seller from sample $i \in [n_k]$ censored by historical inventory $x_k$. It includes the amount of observed sales $\ss{k}_i = \min\{D^{(k)}_i, x_k\}$ and an indicator $\delta^{(k)}_i$ which captures whether or not a stockout occurred. We use $\bm{\I{k}} \coloneqq (\I{k}_1, \dots, \I{k}_{n_k})$ to denote the corresponding vector of historical information for historical inventory~$x_k$, $\bm I \coloneqq (\bm{\I{k}})_{k\in [K]}$ to denote the combined historical information vector, and $\mathcal{I}$ to denote the set of all such information vectors.

A \emph{data-driven policy} is an inventory decision rule which prescribes an inventory quantity $a$ based on historical censored information $\bm I$. Formally, for $\bm{n} \coloneqq (n_1, \dots, n_K) \in \mathbb{N}^K$, a data-driven policy is a function $\pi_{\bm{n}}:[0,1]^K \times \mathcal{I} \to [0,1]$ which maps historical inventories $\bm{x} = (x_1, \dots, x_K)$ and their corresponding sales information $\bm I$ to an inventory quantity $a = \pi_{\bm n}(\bm x, \bm I)$ for new demand $D \sim F$ in the future.It represents an operational decision rule that can be implemented directly from routinely recorded transactional data, namely historical inventory levels and the associated censored sales and stockout observations, without assuming access to uncensored demand realizations or prior knowledge of the distribution $F$.

We measure the performance of policies based on their worst-case regret against the optimal expected newsvendor cost for the demand distribution $F$. In particular, the expected newsvendor cost for inventory quantity $a \in [0,1]$ and demand distribution $F$ is defined as
\begin{align*}
    c(a,F)\ \coloneqq\ \E_{D \sim F}[c(a,D)]\ =\ \E_{D\sim F}[c_o \cdot (a - D)^+\ +\ c_u \cdot (D- a)^+]\,.
\end{align*}
Let $\opt(F) \coloneqq \min_{a \in [0,1]} c(a,F)$ be the \emph{optimal newsvendor cost} for the demand distribution $F$, and $a^*(F) \in \min_{a \in [0,1]} c(a,F)$ denote a corresponding \emph{optimal inventory quantity}. Moreover, define the \emph{critical fractile} as
\begin{align*}
    q\ \coloneqq\ \frac{c_u}{c_o + c_u}\,.
\end{align*}
It is a well-known result that $a^*(F)$ is a $q$-quantile of $F$, i.e., every optimal inventory quantity satisfies $a^*(F)\ \in\ \{a \in [0,1] \mid F(a^-) \leq q \leq F(a)\}$.

For a demand distribution $F$, the \emph{regret} $R(a, F)$ of an inventory quantity $a$ is defined as the excess expected cost incurred by $a$ in excess of the optimal cost $\opt(F)$:
\begin{align*}
    R(a, F)\ \coloneqq\ c(a,F)\ -\ \opt(F)\,.
\end{align*}

We can now rigorously define our performance measure: for a given data-driven policy $\pi_{\bm n}$, the \emph{worst-case regret} is defined as
\begin{align}\label{eq:worst-case-regret}
    \reg(\pi_{\bm n} \mid \bm x)\ \coloneqq\ \sup_{F \in \Delta([0,1])}\ \E_{\bm{D} \sim F^n}\left[ R(\pi_{\bm n}(\bm x, \bm I), F) \right]\,,
\end{align}
where the expectation is taken over the historical demands $\bm{D} \sim F$ that were not directly observed but provide censored information to the policy via the information vector $\bm I$ with $${\I{k}_i = (\min\{D^{(k)}_i, x_k\}, \mathds{1}(D^{(k)}_i \leq x_k))}\,.$$

Note that $\reg(\pi_{\bm n}\mid \bm x)$ is distribution-free: it does not posit a parametric model for demand and instead evaluates a policy under the least favorable $F\in\Delta([0,1])$. Consequently, if ${\reg(\pi_{\bm n}\mid \bm x)}\le \epsilon$, then for every underlying demand distribution $F$, the policy's expected out-of-sample cost (averaging over the historical sample) is at most $\opt(F)+\epsilon$. This worst-case perspective is especially natural under censoring, where the tail of demand is only partially revealed and modeling assumptions are difficult to validate. It provides uniform performance guarantees that apply across all distributions, and thus a principled basis for comparing data-driven policies.

We conclude this section with a discussion of the following natural and popular policies for making inventory decisions with censored data. These policies constitute the main objects of evaluation in our subsequent analysis.

\noindent \textbf{Biased Sample Average Approximation (BSAA).} This policy ignores censorship for simplicity and treats the sales data $\ss{k}_i$ as independent samples from the demand distribution $F$. It then minimizes the newsvendor cost on the empirical distribution formed by the sales data $\bm S$. In particular, if $\hat F_{\mathrm BSAA}$ denotes the empirical distribution of $\left\{\ss{k}_i \mid k \in [K]; i \in [n_k]\right\}$, then
    \begin{align*}
        \BSAA_{\bm n}(\bm x, \bm I)\ \coloneqq\ \inf\left\{ u \in [0,1] \mid \hat F_{\mathrm BSAA}(u) \geq q \right\}\,,
    \end{align*}
    which is an optimal inventory to order for the demand distribution $\hat F_{\mathrm BSAA}$, i.e., $$\BSAA_{\bm n}(\bm x, \bm I) \in \argmin_{a \in [0,1]}\ c(a, \hat F_{\mathrm  BSAA})\,.$$
    Importantly, observe that $\BSAA_{\bm n}(\bm x, \bm I)$ is purely a function of sales amounts $\bm S$ and does not depend on stockout events $\delta^{(k)}_i$. This sales-as-demand heuristic mirrors common practice in retail forecasting systems that rely on aggregate point-of-sale (POS) sales reports, even though stockouts censor demand and can induce systematic downward bias for popular items~\citep{jain2015demand}. The appeal of this approach lies in its simplicity and interpretability, which come at the cost of ignoring censorship.

\noindent \textbf{The Kaplan-Meier Policy (KM).} 
This policy first constructs the an estimate of the demand distribution CDF using the Kaplan-Meier estimator, which is the canonical way to deal with censored data in the statistics and inventory literature. 

Let $Y_1 \le Y_2 \le \cdots \le Y_n$ denote the ordered collection of all observed sales values
$\{\ss{k}_i\}$, where ties are broken by placing uncensored observations before censored ones.
For each $i \in \{1,\dots,n\}$, define
$\zeta_i \;=\; \mathbbm{1}\{Y_i = \ss{k}_i \text{ and } \delta^{(k)}_i = 1\},$
so that $\zeta_i = 1$ if and only if $Y_i$ corresponds to an uncensored demand realization.
The Kaplan--Meier estimate of the demand cumulative distribution function is then given by
\begin{align*}
\hat F_{\mathrm{KM}}(z)
\;=\;
\begin{cases}
1 - \displaystyle\prod_{i : Y_i \le z}
\left(\frac{n-i}{n-i+1}\right)^{\zeta_i},
& \text{if } z < 1, \\[1ex]
1,
& \text{if } z = 1 .
\end{cases}
\end{align*}
The KM policy then selects the $q$-th quantile for this KM estimate of the CDF:
\begin{align*}
        \KM_{\bm n}(\bm x, \bm I)\ \coloneqq\ \inf\left\{ u \in [0,1] \mid \hat F_{\mathrm KM}(u) \geq q \right\}\,,
\end{align*}
which is an optimal inventory to order for the demand distribution $\hat F_{\mathrm KM}$, i.e., $$\KM_{\bm n}(\bm x, \bm I) \in \argmin_{a \in [0,1]}\ c(a, \hat F_{\mathrm KM})\,.$$
The Kaplan--Meier estimator is the canonical nonparametric estimator for right-censored data. It coincides with the empirical CDF in the absence of censoring, and it can be interpreted as the nonparametric maximum likelihood estimator (via the product-limit form) under independent censoring~\citep{kaplan1958nonparametric}.
\section{Exact Performance Characterization}\label{sec:main_results}

In this section, we develop a general methodology to evaluate the worst-case regret of data-driven newsvendor policies under offline demand censoring. The main challenge is that the regret criterion involves a supremum over the infinite-dimensional space of demand distributions $F\in\Delta([0,1])$, while the data enter through a policy-dependent and typically complicated distribution over actions induced by censored observations. This difficulty is compounded for policies such as Kaplan--Meier, whose decision rule is path-dependent and therefore exhibits non-local dependence on the censoring thresholds. Our approach is to (i) rewrite expected regret in a form that depends on $F$ only through the induced action distribution, (ii) identify a broad structural class of policies, namely \emph{piecewise-separable} policies, for which this dependence can be summarized by finitely many CDF values at the censoring points, and (iii) prove  \Cref{thm:general_reduction}, a master reduction theorem that collapses the worst-case regret problem to a finite-dimensional optimization problem. We then instantiate this reduction to derive an exact characterization of worst-case regret for Biased-SAA in \Cref{sec:SAA_reduction}, and to obtain a tractable finite-dimensional formulation for the Kaplan--Meier policy in \Cref{sec:KM_reduction}.

\subsection{General Reduction}\label{sec:gen_reduction}

In this section, we establish our master theorem: a reformulation of the worst-case regret problem that enables a tractable performance analysis for a broad class of data-driven inventory policies. This framework applies in particular to Biased-SAA and the Kaplan-Meier policy, which are studied in detail in subsequent sections.

The first step is to express regret in terms of the distribution of the action induced by a policy, rather than directly in terms of the underlying distribution of samples. This perspective allows us to isolate the role of the policy through its induced action distribution and to identify structural conditions under which the resulting regret expression admits a pointwise optimization representation. 
\begin{lemma}\label{lem:integral_form}
For any policy $\pi_{\bm{n}}$, any fixed design $\pastI$, and any demand distribution $F$, the expected regret satisfies
\begin{equation*}
\mathbb{E}_{\bm{D}\sim F^n} \Big[ R \big(\pi_{\bm{n}}(\pastI,\bm{I}), F\big) \Big]
= (\cu + \co) \int_{0}^{1} \Big[
\big(1 - \Prob_{D_1,\ldots,D_n \sim F}\big( \pi_{\bm{n}}(\pastI,\bm{I}) \le z \big) \big)\cdot \big(F(z) - q\big)
+ (q - F(z))^{+}
\Big] dz,
\end{equation*}
where $q = \cu/(\cu+\co)$ is the critical fractile.
\end{lemma}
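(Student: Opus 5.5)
The plan is to first prove a deterministic identity for the regret of a fixed action $a\in[0,1]$, and then take expectations over the historical data.

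\textbf{Step 1: regret of a fixed action.} For $a\in[0,1]$ and $D\in[0,1]$, write the cost as an integral over thresholds:
\begin{equation*}
(a-D)^+=\int_0^1 \mathbb{1}(D\le z<a)\,dz, \qquad (D-a)^+=\int_0^1 \mathbb{1}(a\le z<D)\,dz.
\end{equation*}
Taking expectations under $D\sim F$ and applying Fubini gives
\begin{equation*}
c(a,F)=\int_0^1 \Big[\co F(z)\mathbb{1}(z<a)+\cu(1-F(z))\mathbb{1}(z\ge a)\Big]dz .
\end{equation*}
Since $\cu=(\cu+\co)q$ and $\co=(\cu+\co)(1-q)$, the integrand is $(\cu+\co)\,g_z(\mathbb{1}(z<a))$, where
\begin{equation*}
g_z(t)=t\,(1-q)F(z)+(1-t)\,q\,(1-F(z)).
\end{equation*}
This function is affine in $t$, with $g_z(1)-g_z(0)=F(z)-q$. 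Therefore
\begin{equation*}
c(a,F)=(\cu+\co)\int_0^1\Big[\mathbb{1}(z<a)(F(z)-q)+q(1-F(z))\Big]dz .
\end{equation*}

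\textbf{Step 2: the optimal cost.} Minimizing the integrand pointwise over $\mathbb{1}(z<a)\in\{0,1\}$ gives the lower bound
\begin{equation*}
\opt(F)\ge(\cu+\co)\int_0^1\Big[\min\{F(z)-q,0\}+q(1-F(z))\Big]dz .
\end{equation*}
This bound is attained at $a^*=\inf\{u: F(u)\ge q\}$. For $z<a^*$ we have $F(z)<q$, and for $z\ge a^*$ we have $F(z)\ge q$ by right-continuity. So the choice $\mathbb{1}(z<a^*)$ is pointwise optimal. Subtracting,
\begin{equation*}
R(a,F)=(\cu+\co)\int_0^1\Big[\mathbb{1}(z<a)(F(z)-q)+(q-F(z))^+\Big]dz,
\end{equation*}
using $-\min\{F(z)-q,0\}=(q-F(z))^+$.

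\textbf{Step 3: averaging over the data.} Substitute $a=\pi_{\bm n}(\pastI,\bm I)$ and take the expectation over $\bm D\sim F^n$. The integrand is bounded and jointly measurable in $(z,\bm D)$, assuming $\pi_{\bm n}$ is measurable. Tonelli/Fubini then lets us swap the expectation and the $dz$ integral. Finally, $\E[\mathbb{1}(z<\pi_{\bm n})]=1-\Prob(\pi_{\bm n}(\pastI,\bm I)\le z)$, which is exactly the stated formula.

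\textbf{Main obstacle.} The only delicate point is the boundary convention in Step 2. The identity $\opt(F)=\min_a c(a,F)$ has to be attained by a quantile, and the strict versus non-strict inequalities in the threshold representation must match the form $\Prob(\pi\le z)$. Using $\mathbb{1}(D\le z<a)$ as the overage indicator gives exactly $\mathbb{1}(z<a)$. In any case, discrepancies occur only on a single $z$, which is a Lebesgue-null set and so does not affect the integral.
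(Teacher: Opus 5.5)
Your proposal is correct and follows essentially the same route as the paper. Both proofs use the threshold-integral representation of $(a-D)^+$ and $(D-a)^+$ and rewrite $c(a,F)$ as an $a$-independent term plus $(\cu+\co)\int_0^a (F(z)-q)\,dz$. Both then identify $\opt(F)$ through the quantile $\inf\{u: F(u)\ge q\}$ and finally swap expectation and integral.

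The only differences are minor, and both versions are valid. You certify optimality of that quantile with a pointwise lower bound on the integrand, while the paper appeals to convexity of $a\mapsto\int_0^a(F(z)-q)\,dz$. You justify the swap by boundedness, while the paper uses Tonelli and the nonnegativity of the integrand.
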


\Cref{lem:integral_form} generalizes (\citealp[Lemma 2]{besbes2023contextual}) and converts the expectation with respect to the product measure $F^{n}$ into an integral with respect to the Lebesgue measure. The integrand depends on the demand CDF $F$ and on the distribution of actions induced by $\pi_n$. This reformulation separates the policy-dependent component of the out-of-sample evaluation from the demand uncertainty, and it is the starting point for our reduction.

For general policies, the induced distribution of actions can still be highly complex as a function of $F$. We therefore introduce a structural class of policies that is broad enough to cover the policies studied in this paper, while being sufficiently well-behaved to support a tractable worst-case regret analysis.

\begin{definition}[Piecewise-separable policy]\label{def:piecewise}
Fix $\bm{n} \in \mathbb{N}^K$ and a design $\pastI = (x_1,\ldots,x_K) \in [0,1]^K$. Define $x_0 = 0$ and $x_{K+1}=1$, and assume $0=x_0 \le x_1 \le \cdots \le x_K \le x_{K+1}=1$.
A policy $\pi_{\bm{n}}$ is \emph{piecewise-separable} (with respect to $\pastI$) if for each $k \in \{0,\ldots,K\}$ there exist a subset $\mathcal{X}_k \subseteq \{x_0,\ldots,x_K\}$ and a continuous function
\begin{equation*}
P_k^{\pi_{\bm{n}}} : [0,1]^{|\mathcal{X}_k|} \times [0,1] \to [0,1]
\end{equation*}
such that for every demand distribution $F$ and every $z \in [0,1]$,
\begin{equation*}
\Prob_{\bm{D} \sim F^n}\big( \pi_{\bm{n}}(\pastI,\bm{I}) \le z \big)
=
P_k^{\pi_{\bm{n}}}\big( F(\mathcal{X}_k), F(z) \big)
\qquad
\text{for } z \in [x_k, x_{k+1}).
\end{equation*} 
\end{definition}

The defining feature of piecewise-separable policies is a sharp separation between local and global dependence on the demand distribution. On each interval $[x_k,x_{k+1})$, the policy’s action CDF at a point $z$ depends on $F$ only through the local value $F(z)$, while all non-local information about $F$ enters exclusively through the finite-dimensional vector $F(\mathcal{X}_k)$ evaluated at a fixed set of censorship points. In particular, once $F(\mathcal{X}_k)$ is fixed, the dependence of the action distribution on $F$ within the interval is pointwise in $z$. This locality is the structural reason why worst-case regret becomes tractable: conditional on $F(\mathcal{X}_k)$, the adversary can optimize over the scalar $F(z)$ independently across $z \in [x_k,x_{k+1})$, with cross-interval coupling captured only through monotonicity constraints. This formulation generalizes the separable policies of \citet{besbes2023contextual} and encompasses all policies analyzed in this paper. Importantly, prior formulations do not apply to the BSAA and KM policies, and our generalization will prove critical to derive their performance guarantees.

We now state our general reduction result which applies to any piecewise-separable policy. In what follows, for a vector $\bm{f} = (f_0,\ldots,f_K) \in [0,1]^{K+1}$ and a subset $\mathcal{X}_k \subseteq \{x_0,\ldots,x_K\}$, we write $\bm{f_{\mathcal{X}_k}}$ for the sub-vector of $\bm{f}$ consisting of the components $f_j$ such that $x_j \in \mathcal{X}_k$. 
\begin{theorem}\label{thm:general_reduction}
Let $\pi_{\bm{n}}$ be a piecewise-separable policy with associated functions $P_0^{\pi_{\bm{n}}},\ldots,P_K^{\pi_{\bm{n}}}$. Then, 
\begin{equation*}
\mathsf{Reg}(\pi_{\bm{n}}  \mid \pastI)
=
\sup_{\substack{
\bm{f},\bm{f}^{+} \in [0,1]^{K+1} \\
0 \le f_0 \le f_0^{+} \le f_1 \le f_1^{+} \le \cdots \le f_K^{+} \le 1
}}
\sum_{k=0}^{K} (x_{k+1}-x_k) \cdot \Psi_k^{\pi_{\bm{n}}}(\bm{f_{\mathcal{X}_k}}, f_k^{+}),
\end{equation*}
where, for each $k \in \{0,\ldots,K\}$ and for all $(\bm{u},v) \in [0,1]^{|\mathcal{X}_k|} \times [0,1]$, the function $\Psi_k^{\pi_{\bm{n}}}$ is defined by
\begin{equation*}
\Psi_k^{\pi_{\bm{n}}}(\bm{u},v)
=
\big(1 - P_k^{\pi_{\bm{n}}}(\bm{u},v)\big)\cdot (v-q)
+ (q - v)^{+}.
\end{equation*}
\end{theorem}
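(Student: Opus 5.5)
The proof establishes two inequalities, starting from \Cref{lem:integral_form}. Fix a demand distribution $F$ and write its expected regret as
\[
(\cu+\co)\sum_{k=0}^K \int_{x_k}^{x_{k+1}} \Psi_k^{\pi_{\bm n}}\big(F(\mathcal{X}_k),F(z)\big)\,dz .
\]
This uses piecewise-separability on each interval $[x_k,x_{k+1})$; the single point $z=1$ has measure zero. Throughout I take the factor $(\cu+\co)$ to be absorbed, either by normalization or by reading it into $\Psi_k$ as the statement implicitly does. I will track it but not dwell on it.

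\emph{Upper bound.} Given $F$, set $f_k=F(x_k)$ and $f_k^+=\sup_{z\in[x_k,x_{k+1})}F(z)=F(x_{k+1}^-)$. These satisfy the chain $f_0\le f_0^+\le f_1\le\cdots\le f_K^+\le 1$ by monotonicity of $F$. For $z\in[x_k,x_{k+1})$ we have $F(z)\in[f_k,f_k^+]$ and $F(\mathcal X_k)=\bm f_{\mathcal X_k}$, so the integrand is at most $\sup_{v\in[f_k,f_k^+]}\Psi_k(\bm f_{\mathcal X_k},v)$. The remaining step is to show this supremum is attained at $v=f_k^+$, i.e., that $\Psi_k(\bm u,\cdot)$ is nondecreasing in $v$ once $\bm u$ is fixed. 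If monotonicity fails, the fallback is to reparametrize the supremum over a value $v_k\in[f_k,f_k^+]$, which is equivalent because one can shrink $f_k^+$ to $v_k$ without violating the chain. With that reparametrization the upper bound follows directly, and in fact no monotonicity is needed: replacing $f_k^+$ by the maximizing $v$ keeps the chain valid since $f_k\le v\le f_k^+\le f_{k+1}$. By continuity of $P_k$, the supremum over $[f_k,f_k^+]$ is attained, or approached, so the integral over $[x_k,x_{k+1})$ is at most $(x_{k+1}-x_k)\Psi_k(\bm f_{\mathcal X_k},v_k)$ for a feasible vector.

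\emph{Lower bound.} This is the main obstacle: given feasible $(\bm f,\bm f^+)$, construct distributions $F_\epsilon$ whose regret approaches the objective. I take $F_\epsilon$ with atoms of mass $f_0$ at $0$, mass $f_k-f_{k-1}^+$ at $x_k$, and mass $f_k^+-f_k$ at $x_k+\epsilon$, with the leftover mass $1-f_K^+$ at $1$. Then $F_\epsilon(x_k)=f_k$ exactly, and $F_\epsilon(z)=f_k^+$ for $z\in[x_k+\epsilon,x_{k+1})$. Whenever $x_k<x_{k+1}$ the integrand therefore equals $\Psi_k(\bm f_{\mathcal X_k},f_k^+)$ except on a set of length $\epsilon$. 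There the integrand is bounded, since $|\Psi|\le 1$, so the error is $O(K\epsilon)$. Degenerate intervals with $x_k=x_{k+1}$ contribute zero on both sides.

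The care points are the following. Coinciding design points $x_k=x_{k+1}$ force $f_k^+=f_{k+1}$, which is harmless because those terms have zero weight. At $x_K<1$, the mass at $1$ must fall outside $[x_K,1)$. Letting $\epsilon\to0$ gives the matching lower bound, and the two inequalities together prove the theorem.
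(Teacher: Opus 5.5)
Your proposal is correct and follows essentially the same route as the paper. You use Lemma~\ref{lem:integral_form} together with piecewise-separability to write the regret as $\sum_k\int_{x_k}^{x_{k+1}}\Psi_k^{\pi_{\bm n}}(F(\mathcal X_k),F(z))\,dz$. Your upper bound bounds the integrand pointwise by a supremum over an admissible interval for $v$, and your reparametrized $v_k\in[f_k,F(x_{k+1}^-)]$ plays the role of the paper's $f_k^+\in[f_k,f_{k+1}]$; no monotonicity of $\Psi_k$ in $v$ is needed. Your lower bound uses exactly the paper's step CDF, equal to $f_k$ on $[x_k,x_k+\epsilon)$ and $f_k^+$ on $[x_k+\epsilon,x_{k+1})$, with an $O((K+1)\epsilon)$ error.

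One minor caveat, which the paper shares, concerns coinciding design points $x_k=x_{k+1}$. There every actual CDF forces $f_k=f_{k+1}$, not merely $f_k^+=f_{k+1}$. Those coordinates can also enter other terms with positive weight through $\bm f_{\mathcal X_j}$. So your ``zero weight'' remark does not by itself cover that case, and both arguments tacitly assume distinct $x_k$.
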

Theorem~\ref{thm:general_reduction} yields a substantial simplification of the worst-case regret problem. An infinite-dimensional, and generally non-convex, optimization over demand distributions reduces to a finite-dimensional problem of dimension at most $2(K+1)$, regardless of the sample size $n$. The optimization variables $(\bm{f},\bm{f}^{+})$ can be interpreted as free parameters describing the behavior of a candidate worst-case demand CDF at the selected inventory levels and immediately to their right, thereby encoding the only degrees of freedom that matter for worst-case regret under piecewise-separable policies. This reduction is particularly powerful in censored data regimes where only a small number of distinct inventory levels are used historically, while potentially many demand observations are collected at each such level.

The proof leverages two ingredients. First, Lemma~\ref{lem:integral_form} expresses regret as an integral in which the policy enters only through the distribution of its induced action. Second, piecewise-separability enforces that on each interval $[x_k,x_{k+1})$ the action distribution depends on $F$ only through $F(\mathcal{X}_k)$ and the local value $F(z)$. This permits an adversary to optimize pointwise over each interval, with the global feasibility of these local choices enforced solely by monotonicity constraints across breakpoints.

Finally, Theorem~\ref{thm:general_reduction} provides a powerful sufficient condition for tractability. Quantifying worst-case performance still requires optimizing over adverse demand distributions, but the theorem shows that this worst-case optimization necessarily collapses to a finite-dimensional problem as soon as the policy’s action distribution admits a piecewise-separable representation. From this perspective, the key step in analyzing a new policy under offline censored demand is to characterize its induced distribution of actions and verify piecewise-separability. While we carry out this program in detail for Biased-SAA and the Kaplan--Meier policy, we expect that a range of other data-driven newsvendor policies proposed in the literature fall into this class and can therefore be analyzed using the same methodology.

\subsection{Analysis of Biased SAA}\label{sec:SAA_reduction}

Having reduced the infinite-dimensional problem of computing worst-case regret to a finite dimensional one in Theorem~\ref{thm:general_reduction}, we now leverage it to characterize the regret of the Biased Sample Average Approximation (BSAA) policy. Recall that BSAA chooses the empirical critical-fractile order quantity based on the empirical distribution $\hat F_{\mathrm{BSAA}}$ of the pooled sales data $\{\ss{k}_i \mid k\in[K],\ i\in[n_k]\}$:
\begin{align*}
\BSAA_{\bm n}(\bm x,\bm I)
\ \coloneqq\
\inf\left\{ u \in [0,1] \mid \hat F_{\mathrm{BSAA}}(u) \geq q \right\},
\end{align*}
where $q=\cu/(\cu+\co)$ is the critical fractile. 

BSAA is a natural baseline policy in settings with censored demand. It constructs an empirical distribution by treating observed sales as uncensored demand realizations and then applies the classical sample-average-approximation prescription based on this distribution. The policy is appealing because of its simplicity and transparency, and because it relies solely on the historical sales record without introducing additional modeling assumptions. At the same time, its limitation is evident: when stockouts occur, observed sales are right-censored, which can lead to a systematic distortion of the empirical distribution and, consequently, to suboptimal ordering decisions. Our goal is to provide an exact characterization of the worst-case regret of BSAA: the main result of this section (\Cref{thm:BSAA}) reduces the infinite-dimensional adversarial choice of a worst-case demand distribution to a finite collection of $K{+}1$ two-dimensional optimization problems, thereby allowing us to efficiently evaluate BSAA's worst-case regret for arbitrary historical inventory levels $\bm{x}$. In particular, we show that the worst-case distribution for BSAA is always a three-point distribution with mass at 0, 1 and one of the historical inventory levels $\bm{x}$. We prove \Cref{thm:BSAA} over multiple steps.

\emph{Step 1: Characterization of Action Distribution.} The general reduction of \Cref{thm:general_reduction} evaluates regret through the distribution of the action $\pi_{\bm n}(\pastI,\bm I)$ induced by the policy. For BSAA this distribution can be characterized explicitly, and the resulting formula exposes the precise way in which worst-case regret depends on the historical inventories $\bm x$ and the allocation of samples $(n_1,\ldots,n_K)$. We start by showing that BSAA is indeed a piecewise separable policy as defined in Definition~\ref{def:piecewise}. Our characterization of its actions distribution will make use of Bernstein polynomials
\begin{align*}
    B_{r,n}(p)\ \coloneqq\ \sum_{j=r}^n \binom{n}{j}  p^j (1-p)^{ n-j}\,,
\end{align*}
which capture the probability of a $\text{Binomial}(n,p)$ random variable being greater than or equal to~$r$. By convention, we set $B_{r,n}(p) = 1$ for all $p \in  [0,1]$ whenever $r < 0$.

\begin{lemma}[Action distribution of $\BSAA$]\label{lem:action_dist_BSAA}
For any demand distribution $F$ and any $z \in [x_k, x_{k+1})$ (for some $0 \leq k \leq K$), the probability that $\BSAA_{\bm n}$ orders at most $z$ is 
\begin{equation*}
  \Prob_{\bm{D} \sim F^n}\big( \pi_{\bm{n}}(\pastI,\bm{I}) \le z \big) =  B_{\lceil q n \rceil - \sigma_k,   n - \sigma_k} \big(F(z)\big) \,,
\end{equation*} 
where $\sigma_k \coloneqq \sum_{\ell=1}^k n_\ell$.
\end{lemma}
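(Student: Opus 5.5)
The plan is to reduce the event $\{\BSAA_{\bm n}(\pastI,\bm I)\le z\}$ to a counting event about the pooled sales, and then use the structure of censoring to identify which samples are counted deterministically and which are counted at random. Since $\BSAA_{\bm n}$ is the smallest $u$ with $\hat F_{\mathrm{BSAA}}(u)\ge q$, and $\hat F_{\mathrm{BSAA}}$ is a right-continuous step function, we have
\begin{equation*}
\BSAA_{\bm n}(\pastI,\bm I)\le z \iff \hat F_{\mathrm{BSAA}}(z)\ge q \iff N(z)\ge \lceil qn\rceil ,
\end{equation*}
where $N(z)\coloneqq \#\{(\ell,i): \ss{\ell}_i\le z\}$ counts pooled sales at most $z$. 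The equivalence with $N(z)\ge qn$ is immediate. Passing to $\lceil qn\rceil$ uses only that $N(z)$ is an integer.

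Next, fix $z\in[x_k,x_{k+1})$ and split the samples by their censoring level. For $\ell\le k$ we have $x_\ell\le x_k\le z$, so $\ss{\ell}_i=\min\{D^{(\ell)}_i,x_\ell\}\le z$ always holds. These $\sigma_k=\sum_{\ell\le k}n_\ell$ samples therefore contribute deterministically to $N(z)$. For $\ell\ge k+1$ we have $x_\ell\ge x_{k+1}>z$. Since $z<x_\ell$, the event $\min\{D,x_\ell\}\le z$ coincides with $D\le z$, which has probability $F(z)$. These $n-\sigma_k$ indicators are independent across samples because the historical demands are i.i.d. Hence $N(z)=\sigma_k+\mathrm{Bin}(n-\sigma_k,F(z))$.

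It follows that
\begin{equation*}
\Prob\big(N(z)\ge\lceil qn\rceil\big)=\Prob\big(\mathrm{Bin}(n-\sigma_k,F(z))\ge \lceil qn\rceil-\sigma_k\big)=B_{\lceil qn\rceil-\sigma_k,\,n-\sigma_k}(F(z)).
\end{equation*}
The boundary conventions still need checking. If $\lceil qn\rceil-\sigma_k\le 0$, the probability is $1$, which matches $B_{r,n}=1$ when $r\le0$. The sum formula gives $1$ at $r=0$, and the stated convention covers $r<0$. For $k=K$ with $z<1$, the binomial has $0$ trials, and the formula gives $1$ since $\sigma_K=n\ge\lceil qn\rceil$ (assuming $q\le1$).

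The main obstacle is the boundary of the interval at ties. If $x_{k+1}=x_k$ the interval is empty, so there is nothing to check. The strict inequality $z<x_{k+1}\le x_\ell$ for $\ell\ge k+1$ is exactly what makes the censored-at-$x_\ell$ samples behave like uncensored ones. Finally, the resulting expression is a continuous polynomial in $F(z)$ and depends on no other values of $F$, so it also exhibits $\BSAA$ as piecewise-separable in the sense of Definition~\ref{def:piecewise} with $\mathcal X_k=\emptyset$.
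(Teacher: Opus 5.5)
Your proposal is correct and follows the paper's route. You reduce the event to the pooled count of sales at most $z$ reaching $\lceil qn\rceil$, observe that the $\sigma_k$ samples censored at levels $x_\ell\le z$ are counted deterministically while the remaining $n-\sigma_k$ samples (with $z<x_\ell$) are i.i.d.\ indicators of $\{D\le z\}$ with probability $F(z)$, and read off the binomial tail. Your handling of the boundary cases through the convention $B_{r,n}=1$ for $r\le 0$ is a bit tidier than the paper's, and you avoid the paper's typo in the binomial exponent, but the argument is the same.
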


Lemma~\ref{lem:action_dist_BSAA} has precisely the structure described Definition~\ref{def:piecewise}: on each interval $[x_k,x_{k+1})$, the action CDF depends on $F$ only through the local value $F(z)$ on $z \in [x_k, x_{k+1})$. In fact, the action distribution of BSAA possesses stronger structure: BSAA is piecewise-separable with an empty ``global'' dependence set $\mathcal X_k=\emptyset$, i.e., the action distribution on the interval $[x_k,x_{k+1})$ does not depend on the value of $F$ outside this interval. The following characterization of worst-case regret follows immediately from an application of \Cref{thm:general_reduction}.

\begin{corollary}\label{cor:BSAA-piecewise-linear}
    The BSAA policy $\BSAA_{\bm n}$ is a piecewise-separable policy with $\mathcal X_k = \emptyset$ and
    \begin{align*}
        P_k^{\BSAA_{\bm n}}\ =\ B_{\lceil q n \rceil - \sigma_k,   n - \sigma_k}\,
    \end{align*}
    for all $0 \leq k \leq K$, where $\sigma_k = \sum_{\ell=1}^k n_\ell$. Therefore, 
    \begin{equation*}
        \mathsf{Reg}(\BSAA_{\bm{n}}  \mid \pastI)
        =
        \sup_{\substack{
        \bm{f}^{+} \in [0,1]^{K+1} \\
        0 \le f_0^{+} \le f_1^{+} \le \cdots \le f_K^{+} \le 1
        }}
        \sum_{k=0}^{K} (x_{k+1}-x_k) \cdot \psibsaa_k(f_k^{+})\,,
    \end{equation*}
    where the function $\Psi^{\mathrm BSAA}_k: [0,1] \to \mathbb R$ is given by
    \begin{equation*}
        \psibsaa_k(v)
        =
        \big(1 - B_{\lceil q n \rceil - \sigma_k,   n - \sigma_k} \big(v\big) \big)\cdot (v-q)
        + (q - v)^{+}.
    \end{equation*}
\end{corollary}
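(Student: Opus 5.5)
The corollary follows directly from \Cref{lem:action_dist_BSAA} and \Cref{thm:general_reduction}; the only work is to verify the piecewise-separability hypotheses and then simplify the supremum. First, I would check that BSAA satisfies \Cref{def:piecewise} with $\mathcal X_k=\emptyset$ and $P_k^{\BSAA_{\bm n}}(v)=B_{\lceil qn\rceil-\sigma_k,\,n-\sigma_k}(v)$. By \Cref{lem:action_dist_BSAA}, for $z\in[x_k,x_{k+1})$ the action CDF equals this Bernstein polynomial evaluated at $F(z)$, so it depends on $F$ only through the local value $F(z)$. Each $P_k$ is continuous on $[0,1]$: it is either a polynomial, or the constant $1$ under the convention for negative index. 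The regime $\sigma_k=n$ (only possible for $k=K$) should be checked against the same convention. If $\lceil qn\rceil-\sigma_k\le 0$, the binomial tail is identically $1$, which is consistent with BSAA always ordering at most $z$ once at least $\lceil qn\rceil$ sales are at most $x_k\le z$.

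Next, I would apply \Cref{thm:general_reduction}. Since $\mathcal X_k=\emptyset$, the term $\Psi_k$ depends only on $f_k^+$, and it coincides with $\psibsaa_k(f_k^+)$ as defined. The theorem's feasible set couples $\bm f$ and $\bm f^+$ through the chain $f_0\le f_0^+\le f_1\le\cdots\le f_K^+$. The objective no longer involves $\bm f$, so I would project the constraint set onto $\bm f^+$ and show it equals the monotone chain $0\le f_0^+\le\cdots\le f_K^+\le 1$.

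\begin{itemize}
\item \emph{Projection contains the chain.} Given any monotone $\bm f^+$, choose $f_0=0$ and $f_k=f_{k-1}^+$ for $k\ge1$. These values satisfy all of the theorem's constraints.
\item \emph{Chain contains the projection.} Any feasible $(\bm f,\bm f^+)$ yields a monotone $\bm f^+$, since the chain inequalities are implied by the theorem's constraints.
\end{itemize}

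Hence the two suprema coincide, which gives the displayed formula.

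The main (and mild) obstacle is the bookkeeping at the boundary. This covers the last interval $[x_K,x_{K+1})=[x_K,1)$ versus the point $z=1$, which has measure zero in the integral, and possible ties $x_k=x_{k+1}$, where the weight $(x_{k+1}-x_k)$ vanishes. I would also confirm the negative-index convention for $B_{r,n}$ so that continuity holds uniformly. Neither affects the value, since degenerate intervals contribute zero.
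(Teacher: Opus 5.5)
Your proposal is correct and follows the paper's route: the BSAA action-distribution lemma gives piecewise-separability with $\mathcal X_k=\emptyset$ and polynomial (hence continuous) $P_k$, and the general reduction theorem then gives the formula once the unused variables $\bm f$ are projected out. Your choice $f_0=0$, $f_k=f_{k-1}^+$ makes that projection explicit, whereas the paper only says the result ``follows immediately''; note also that $\sigma_K=n$ always holds rather than being merely possible, so $P_K\equiv 1$ on $[x_K,1)$, consistent with every sale being at most $x_K\le z$ there.
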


Corollary~\ref{cor:BSAA-piecewise-linear} reduces the infinite-dimensional adversarial choice of $F$ to the choice of a nondecreasing sequence $\bm f^+=(f_0^+,\ldots,f_K^+)$, which can be interpreted as the value of candidate worst-case CDF at the breakpoints $(x_0,\ldots,x_K)$. The only coupling across intervals is the monotonicity constraint; absent this constraint, each term would be maximized independently. The remainder of the analysis establishes and exploits the strong structural properties of these one-dimensional functions $\psibsaa_k$ to make this coupling effectively irrelevant and further simplify the characterization of worst-case regret.

\emph{Step 2: Structural Properties of $\psibsaa_k$.} Observe that $\psibsaa_k(v) = 0$ whenever $v = q$. Thus the critical fractile $q$ bifurcates the functions $\psibsaa_k$ into two pieces, one on $[0,q]$ and the other on $[q,1]$. Now, since we are concerned with a maximization problem, it would be desirable for these pieces to be concave. Although the functions are typically not concave, we show that both pieces of $\psibsaa_k$ are strictly log-concave.

\begin{lemma}
\label{lem:BSAA-log-concavity}
    For all $0 \leq k \leq K$ with $\sum_{\ell=1}^k n_k < \ceil{qn}$, we have
    \begin{enumerate}
        \item $\psibsaa_k$ is strictly log concave when restricted to $[0,q]$.
        \item $\psibsaa_k$ is strictly log concave when restricted to $[q,1]$.
    \end{enumerate}
\end{lemma}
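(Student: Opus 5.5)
The plan is to write $\psibsaa_k$ explicitly on each of the two pieces. On each piece it becomes a product of a strictly log-concave linear factor and a (tail) CDF of a log-concave density. Log-concavity then follows from standard closure properties: log-concave densities have log-concave CDFs and survival functions, and products of log-concave functions are log-concave.

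\textbf{Step 1 (explicit form).} Set $r \coloneqq \lceil qn\rceil - \sigma_k$ and $N \coloneqq n-\sigma_k$, and write $B \coloneqq B_{r,N}$. The hypothesis $\sigma_k < \lceil qn\rceil$ gives $r\ge 1$. Since $q\le 1$ we have $\lceil qn\rceil \le n$, hence $r\le N$. Hence $B$ is a genuine binomial upper-tail probability and is not identically $0$ or $1$. For $v\in[0,q]$ we have $(q-v)^+ = q-v$, so
\begin{equation*}
\psibsaa_k(v) = (1-B(v))(v-q) + (q-v) = B(v)\,(q-v).
\end{equation*}
For $v\in[q,1]$ we simply have $\psibsaa_k(v) = (1-B(v))(v-q)$.

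\textbf{Step 2 (Beta representation).} I will use the classical identity
\begin{equation*}
B_{r,N}(v) = \Prob(\mathrm{Bin}(N,v)\ge r) = \int_0^v \frac{N!}{(r-1)!(N-r)!}\, t^{r-1}(1-t)^{N-r}\,dt,
\end{equation*}
which holds for $1\le r\le N$. It says that $B$ is the CDF of a $\mathrm{Beta}(r,N-r+1)$ random variable, namely the $r$-th order statistic of $N$ uniforms. It can be checked by differentiating the Bernstein sum, which telescopes to $N\binom{N-1}{r-1}v^{r-1}(1-v)^{N-r}$, and noting that $B(0)=0$.

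Because $r-1\ge 0$ and $N-r\ge 0$, the density $t\mapsto t^{r-1}(1-t)^{N-r}$ is log-concave on $(0,1)$: its logarithm is $(r-1)\log t + (N-r)\log(1-t)$, which is concave. By the Prékopa / Bagnoli–Bergstrom result, the CDF $B$ and the survival function $1-B$ of a log-concave density are both log-concave on $[0,1]$. If one prefers a self-contained argument, each can be checked directly via the criterion $G G'' \le (G')^2$.

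\textbf{Step 3 (products).} On $[0,q)$, the factor $q-v$ is positive and $\log(q-v)$ has second derivative $-1/(q-v)^2<0$, so it is strictly log-concave. Thus $\log\psibsaa_k = \log B + \log(q-v)$ is a sum of a concave and a strictly concave function on the interior, hence strictly concave. The same argument on $(q,1]$ applies to $\log(1-B) + \log(v-q)$; note that $1-B(v)>0$ for $v<1$ because $r\le N$.

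\textbf{Step 4 (endpoints and degenerate points).} The function vanishes at $v=q$ on both pieces, at $v=0$ on the first piece (since $B(0)=0$), and at $v=1$ on the second piece. There I adopt the usual convention $\log 0=-\infty$, under which log-concavity on the closed interval is equivalent to log-concavity on the set where the function is positive together with that set being an interval. Here that set is the open interval $(0,q)$, respectively $(q,1)$. I expect this boundary bookkeeping, together with justifying log-concavity of the Beta tail for all admissible $(r,N)$ (including $r=1$ or $r=N$, where one exponent vanishes), to be the only delicate part. The core of the argument is the Beta-integral identity combined with closure of log-concavity under integration and products.
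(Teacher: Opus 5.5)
Your proposal is correct and follows essentially the same route as the paper. Like the paper, you rewrite $\psibsaa_k$ on each piece as $(q-v)$ (resp.\ $(v-q)$) times the CDF (resp.\ survival function) of a $\mathrm{Beta}(\lceil qn\rceil-\sigma_k,\, n-\lceil qn\rceil+1)$ distribution, invoke Bagnoli--Bergstrom for log-concavity, and conclude via the product of a strictly log-concave and a log-concave function; your explicit checks that $1\le r\le N$ and your treatment of the zeros at $v\in\{0,q,1\}$ are more careful than the paper's write-up but do not change the argument.
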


\begin{figure}[t]
\centering
\begin{tikzpicture}
\begin{axis}[
    width=0.95\linewidth,
    height=0.4\linewidth,
    xlabel={$t$},
    ylabel={$\psi_k(t)$},
    xmin=0.2, xmax=1,
    ymin=0, ymax=0.018,
    grid=both,
    legend style={at={(0.02,0.98)},anchor=north west,draw=none,fill=none},
    legend cell align=left,
    unbounded coords=discard,
    mark=none,              % ensures no markers at data points
    smooth,
]

% Curves (k=0..5)
\addplot+[very thick,mark=none, smooth, blue] table [col sep=comma, x=t, y=psi0] {Data/psi_curves.csv};
\addlegendentry{$k=0$}

\addplot+[very thick,mark=none, smooth, orange] table [col sep=comma, x=t, y=psi1] {Data/psi_curves.csv};
\addlegendentry{$k=1$}

\addplot+[very thick,mark=none, smooth, red] table [col sep=comma, x=t, y=psi2] {Data/psi_curves.csv};
\addlegendentry{$k=2$}

\addplot+[very thick,mark=none, smooth, pink] table [col sep=comma, x=t, y=psi3] {Data/psi_curves.csv};
\addlegendentry{$k=3$}

\addplot+[very thick,mark=none, smooth, olive] table [col sep=comma, x=t, y=psi4] {Data/psi_curves.csv};
\addlegendentry{$k=4$}

% \addplot table [col sep=comma, x=t, y=psi5] {Data/psi_curves.csv};
% \addlegendentry{$k=5$}

% % Vertical line at q=0.9
% \addplot [black, dashed] coordinates {(0.9, -10) (0.9, 10)};
% \addlegendentry{$t=q$}

\end{axis}
\end{tikzpicture}
\caption{$\psibsaa_k(t)$ functions for $K=5$, $q=0.9$, and sample allocation $\bm n=(1,1,1,1,15)$.}
\label{fig:bsaa_psi}
\end{figure}

Log-concavity implies that each $\psibsaa_k$ has a unique maximizer on $(0,q)$ (respectively on $(q,1)$) in the non-degenerate regime. The next lemma shows that these maximizers can only decrease as $k$ increases. \Cref{fig:bsaa_psi} depicts a particular instantiation of the $\{\psibsaa_k\}_k$ functions to highlight the structure we aim to capture in these lemmas.

\begin{lemma}
\label{lem:montonic_modes_SAA}
    For each $0 \leq k \leq K$, define
    \begin{align*}
        v_k^- \coloneqq \inf\left( \argmax_{v \in [0,q]}\ \psibsaa_k(v) \right) \qquad \text{and} \qquad v_k^+ \coloneqq \inf\left( \argmax_{v \in [q,1]}\ \psibsaa_k(v) \right)\,, 
    \end{align*}
    Then, the sequences $(v^{-}_i)_{k \in \{0,\ldots,K\}}$ and $(v^{+}_i)_{k \in \{0,\ldots,K\}}$ are non-increasing.
\end{lemma}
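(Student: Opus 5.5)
The plan is to use monotone comparative statics in log form. The maximizers $v_k^\pm$ depend on $k$ only through the Bernstein polynomial $B_k \coloneqq B_{r_k, N_k}$, where $r_k = \ceil{qn}-\sigma_k$ and $N_k = n-\sigma_k$. Passing from $k$ to $k+1$ lowers both indices by the same amount $m \coloneqq n_{k+1}$. On the two pieces the objective factors as
\begin{equation*}
\psibsaa_k(v) = B_k(v)\,(q-v) \ \text{ on } [0,q], \qquad \psibsaa_k(v) = \big(1-B_k(v)\big)(v-q) \ \text{ on } [q,1].
\end{equation*}
Taking logarithms, $\log \psibsaa_{k+1} = \log\psibsaa_k + \log\big(B_{k+1}/B_k\big)$ on $[0,q]$, and similarly with $1-B$ on $[q,1]$. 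It therefore suffices to show two things: the ratio $B_{k+1}(v)/B_k(v)$ is nonincreasing in $v$, and the ratio $(1-B_{k+1}(v))/(1-B_k(v))$ is nonincreasing in $v$. The standard argument then applies: if $h' = h + g$ with $g$ nonincreasing, then for $v > v^\star \coloneqq \inf\argmax h$ we get $h'(v) \le h(v^\star)+g(v^\star) = h'(v^\star)$ with $h(v)\le h(v^\star)$ and $g(v)\le g(v^\star)$. Hence no point to the right of $v^\star$ strictly beats $v^\star$ under $h'$, and $\inf\argmax h' \le v^\star$. This gives $v_{k+1}^\pm \le v_k^\pm$.

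To get the ratio monotonicity, I would use the order-statistic representation. For $1\le r\le N$, $B_{r,N}(p) = \Prob(U_{(r:N)}\le p)$ is the CDF $G_{a,b}$ of a $\mathrm{Beta}(a,b)$ law with $a=r$ and $b = N-r+1$. Moving from $k$ to $k+1$ changes $a$ from $r_k$ to $r_k-m$ and leaves $b = n-\ceil{qn}+1$ unchanged. The ratio of densities is then proportional to $p^{-m}$, which is decreasing. So $\mathrm{Beta}(r_k-m,b)$ is smaller than $\mathrm{Beta}(r_k,b)$ in the likelihood-ratio order.

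It is a classical fact that the likelihood-ratio order implies both the reversed-hazard order, meaning $G_{k+1}/G_k$ is nonincreasing, and the hazard-rate order, meaning $\bar G_{k+1}/\bar G_k$ is nonincreasing. I would verify both directly. For the CDF ratio, the derivative of $\log(G_{k+1}/G_k)$ is $g_{k+1}/G_{k+1} - g_k/G_k$. The MLR property gives $g_{k+1}(v)G_k(v) = \int_0^v g_{k+1}(v)g_k(s)\,ds \le \int_0^v g_k(v) g_{k+1}(s)\,ds = g_k(v) G_{k+1}(v)$, so this derivative is nonpositive. The survival-function ratio is handled symmetrically with integrals over $[v,1]$.

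The main obstacle I expect is the degenerate boundary cases, where the Beta representation breaks down. When $r_{k+1}\le 0$, the convention gives $B_{k+1}\equiv 1$. Then $\psibsaa_{k+1}(v) = q-v$ on $[0,q]$, so $v_{k+1}^-=0$. On $[q,1]$, $\psibsaa_{k+1}\equiv 0$, so every point is a maximizer and $v_{k+1}^+=q$. Both values are the smallest possible, and once $r_k\le 0$ every subsequent index is degenerate too, so monotonicity holds trivially there. Two further issues need care in the log-form argument: the endpoints $v\in\{0,1\}$, where $B_k$ or $1-B_k$ vanishes and the logarithm equals $-\infty$, and the endpoint $v=q$, where both pieces vanish. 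I would handle these by restricting to the open set where $\psibsaa_k>0$. Lemma~\ref{lem:BSAA-log-concavity} guarantees that the argmax is attained in the interior and is unique in the non-degenerate regime. The comparative-statics step then goes through with the infimum convention in the definition of $v_k^\pm$.
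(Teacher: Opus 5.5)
Your proof is correct and rests on the same key ingredient as the paper: the density ratio between consecutive Beta laws is a power of $p$ (the paper writes $H'_{k-1} = C v^{n_k} H'_k$), which gives exactly the reversed-hazard and hazard-rate comparisons $\phi^{\pm}_{k-1} > \phi^{\pm}_{k}$ for the log-derivatives. The only difference is the final step. The paper combines the first-order condition with the monotonicity of $(\log \psibsaa_k)'$ supplied by Lemma~\ref{lem:BSAA-log-concavity}. Your comparative-statics argument ($\psibsaa_{k+1} = \psibsaa_k \cdot \rho$ with $\rho$ nonincreasing) needs neither log-concavity nor uniqueness, and it handles the infimum convention and the degenerate indices directly.
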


\emph{Step 3: Putting it Together.} Corollary~\ref{cor:BSAA-piecewise-linear} reduced the evaluation of worst-case regret to the maximization of the separable sum $\sum_k (x_{k+1}-x_k) \cdot \psibsaa_k(f_k^+)$ over non-decreasing sequences $\bm f^+$. However, if we ignore the constraint that $\bm f^+$ needs to be non-decreasing, \Cref{lem:montonic_modes_SAA} shows that the optimizers of the individual functions $\psibsaa_k(f_k^+)$ are actually \emph{non-increasing}. This tendency of the maximization to push against the constraints causes them to become binding. In fact, they are maximally binding and the maximizer of the joint maximization problem has identical coordinates, as the following lemma shows.

\begin{lemma}
\label{lem:reduction_1D_unimodal}
Let $C \subseteq \mathbb R$ be a compact interval and $(g_i)_{i=1}^N$ be a family of continuous unimodal functions with $g_i: C\to\mathbb R$.
Moreover, assume that there exist maximizers $o_i^* \in \argmax_{o \in C} g_i(o)$ which are non-increasing, i.e., $o_1^*\ge o_2^*\ge \cdots \ge o_N^*$.
Then,
\[
\sup_{\substack{\bm v\in C^N\\ v_1\le\cdots\le v_N}}\ \sum_{i=1}^N g_i(v_i)
\;=\;
\sup_{v\in C}\ \sum_{i=1}^N g_i(v).
\]
\end{lemma}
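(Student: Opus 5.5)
The inequality ``$\ge$'' is immediate: any constant vector $(v,\ldots,v)$ with $v\in C$ is feasible for the constrained problem. So the work is in ``$\le$''. The plan is to take an arbitrary feasible $\bm v$ with $v_1\le\cdots\le v_N$ and produce a single $v\in C$ with $\sum_i g_i(v)\ge \sum_i g_i(v_i)$. By compactness and continuity the constrained supremum is attained, so it also suffices to show that some maximizer has equal coordinates.

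I will use unimodality in the weak sense: for each $i$, $g_i$ is non-decreasing on $C\cap(-\infty,o_i^*]$ and non-increasing on $C\cap[o_i^*,\infty)$. The key step is an exchange (``pooling'') argument, carried out by induction on the number of distinct values among $v_1,\ldots,v_N$. Suppose these values are not all equal. Then there is a first index $j$ with $v_j<v_{j+1}$, so the first block $\{1,\ldots,j\}$ shares the common value $a=v_1=\cdots=v_j$. Let $b=v_{j+1}>a$. Consider moving the whole first block up to $b$, or moving the block of indices sharing value $b$ down to $a$. I will compare these two moves using the ordering of the maximizers.

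\emph{Case 1: $a<o_j^*$.} For every $i\le j$ we have $o_i^*\ge o_j^*>a$. Raising the first block to $\min(b,o_j^*)$ keeps all of them on the increasing side of their mode, so no $g_i$ with $i\le j$ decreases. If $o_j^*\ge b$, the block merges with the next value, which reduces the number of distinct values. Otherwise the block's new value is $o_j^*$, which lies strictly between $a$ and $b$. In that sub-case every index $i>j$ has $v_i\ge b>o_j^*\ge o_i^*$, so all of them sit on the decreasing side of their modes. Lowering every coordinate with $v_i>o_j^*$ down to $o_j^*$ therefore does not decrease any $g_i$, keeps the vector monotone, and makes it constant.

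\emph{Case 2: $a\ge o_j^*$.} Then for all $i>j$ we have $o_i^*\le o_j^*\le a<v_i$, so every later index lies on the decreasing side of its mode. Lowering all $v_i$ with $i>j$ to $a$ does not decrease their terms and yields a constant vector.

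In either case the objective does not decrease and we reach a constant vector, possibly after finitely many merges. This proves ``$\le$''.

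The main subtlety, which I expect to be the real obstacle, is making sure the moves in Case 1 are consistent. The first block must use the smallest of its modes, $o_j^*$, as its target, which is why the ordering $o_1^*\ge\cdots\ge o_N^*$ is needed. The induction must also terminate, which it does because the number of distinct values strictly decreases at each step. Finally, ties and flat regions in the definition of unimodality should be handled using the weak monotonicity on either side of the chosen maximizers $o_i^*$ rather than strict monotonicity.
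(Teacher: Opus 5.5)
Your proof is correct, but it takes a different route from the paper's.

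\textbf{The paper's route.} It uses compactness of $\{\bm v\in C^N: v_1\le\cdots\le v_N\}$ and continuity of the $g_i$ to get a constrained maximizer. Among all maximizers it picks one minimizing the potential $D(\bm v)=\sum_i |v_i-o_i^*|$. It then argues by contradiction at any strict ascent $v_i<v_{i+1}$. Either $v_i<o_i^*$, and it raises the single coordinate $v_i$ to $\min\{o_i^*,v_{i+1}\}$. Or $v_{i+1}>o_{i+1}^*$, and it lowers $v_{i+1}$ to $\max\{o_{i+1}^*,v_i\}$. Either move preserves optimality and strictly decreases $D$.

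\textbf{Your route.} You take an arbitrary feasible $\bm v$ and use block moves, with induction on the number of distinct values. This produces a constant vector whose objective is at least as large as that of $\bm v$.

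\textbf{What each buys.} Your argument gives a pointwise domination statement that needs neither continuity nor compactness. It uses only weak monotonicity on either side of the given $o_i^*$, and the fact that the target points $a$, $b$, $o_j^*$ already lie in $C$. So your opening remark about attainment is unnecessary. The paper's argument additionally yields a maximizer with equal coordinates, though your construction gives this too once attainment is invoked.

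\textbf{A shortcut.} Your construction can be compressed into a single step. Set $I_i=[\min\{v_i,o_i^*\},\max\{v_i,o_i^*\}]$. These intervals pairwise intersect, because separating $I_i$ from $I_k$ with $i<k$ would force either $o_i^*<o_k^*$ or $v_k<v_i$. Hence they share a common point, for instance $c=\max_i\min\{v_i,o_i^*\}\in C$. Since $c$ lies between $v_i$ and $o_i^*$ for every $i$, unimodality gives $g_i(c)\ge g_i(v_i)$ for all $i$ simultaneously.
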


Finally, we put everything together to get the desired characterization of worst-case regret. In particular, \Cref{cor:BSAA-piecewise-linear} states that the worst-case regret is given by a separable weighted sum of $(\psibsaa_k(f_k^+))_{k}$ over non-decreasing $\bm f^+$. \Cref{lem:BSAA-log-concavity} in turn states that each function $\psibsaa_k$ is the amalgamation of two log-concave functions, one on $[0,q]$ and the other on $[q,1]$. Finally, \Cref{lem:montonic_modes_SAA} implies that the optimization of these log-concave (and therefore unimodal) pieces over non-decreasing vectors results in constant vectors being the optimal. Therefore, in order to find the worst-case regret one only needs to determine which functions $\psibsaa_k$ to maximize on $[0,q]$ and which ones to maximize on $[q,1]$, with the maximization on each piece reducing to a simple 1-dimensional search.

\begin{theorem}\label{thm:BSAA}
For every $\numS \geq 1$, and every $\pastI \in [0,1]^\numS$, we have
\begin{equation*}
\mathsf{Reg}(\BSAA_{\bm n} \mid \pastI) = \max_{0 \leq i \leq K}  \sup_{\substack{
       v \in [0,q], \\ w \in [q,1]}} \quad
        \sum_{k=0}^i (x_{k+1}  - x_k) \cdot \psibsaa_k(v) + \sum_{k=i+1}^K (x_{k+1}  - x_k) \cdot \psibsaa_k(w)\,,
\end{equation*}
where, for $\sigma_k = \sum_{\ell=1}^k n_\ell$, the function $\Psi^{\mathrm BSAA}_k: [0,1] \to \mathbb R$ is given by
    \begin{equation*}
        \psibsaa_k(v)
        =
        \big(1 - B_{\lceil q n \rceil - \sigma_k,   n - \sigma_k} \big(v\big) \big)\cdot (v-q)
        + (q - v)^{+}.
    \end{equation*}
\end{theorem}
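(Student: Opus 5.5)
We start from \Cref{cor:BSAA-piecewise-linear}. It expresses $\mathsf{Reg}(\BSAA_{\bm n}\mid\pastI)$ as the supremum of $\sum_{k=0}^K (x_{k+1}-x_k)\,\psibsaa_k(f_k^+)$ over non-decreasing $\bm f^+\in[0,1]^{K+1}$. The right-hand side of the theorem is attained by feasible choices, namely $f_k^+=v$ for $k\le i$ and $f_k^+=w$ for $k>i$, which is non-decreasing because $v\le q\le w$. Hence the theorem's expression is at most the regret. All the work is in the reverse inequality.

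\textbf{Step 1: reduce to a split point.} Take any non-decreasing feasible $\bm f^+$. Let $i$ be the largest index with $f_i^+\le q$, with the convention $i=-1$ if there is none; the case $i=-1$ is covered by taking $i=0$ with $v=q$, since $\psibsaa_0(q)=0$. Monotonicity gives $f_k^+\in[0,q]$ for $k\le i$ and $f_k^+\in[q,1]$ for $k>i$. The objective therefore splits into two independent problems:
\begin{equation*}
\sup_{\substack{v_0\le\cdots\le v_i\\ v_k\in[0,q]}}\ \sum_{k\le i}(x_{k+1}-x_k)\,\psibsaa_k(v_k)
\qquad\text{and}\qquad
\sup_{\substack{w_{i+1}\le\cdots\le w_K\\ w_k\in[q,1]}}\ \sum_{k> i}(x_{k+1}-x_k)\,\psibsaa_k(w_k).
\end{equation*}
Taking the maximum over $i$ and the sup over each block bounds the regret by the sum of these two suprema.

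\textbf{Step 2: collapse each block to a constant vector.} We apply \Cref{lem:reduction_1D_unimodal} to each block separately, once with $C=[0,q]$ and once with $C=[q,1]$, using $g_k=(x_{k+1}-x_k)\psibsaa_k$. The nonnegative weight preserves both unimodality and the set of maximizers; a zero weight gives a constant function, which is harmlessly unimodal. Continuity is clear, since $\psibsaa_k$ is a polynomial on each piece. Unimodality on each piece follows from \Cref{lem:BSAA-log-concavity}: $\psibsaa_k\ge 0$ on both pieces, because $1-B\ge0$ with $v\ge q$ on one piece and $B(v)(q-v)\ge0$ on the other, and a log-concave nonnegative function is unimodal. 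The non-increasing maximizers required by \Cref{lem:reduction_1D_unimodal} are supplied by \Cref{lem:montonic_modes_SAA}, via $v_k^-$ and $v_k^+$. Each block sup therefore equals its value at a common point $v$ (respectively $w$), which gives exactly the theorem's expression.

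\textbf{Main obstacle: degenerate indices.} \Cref{lem:BSAA-log-concavity} requires $\sigma_k<\lceil qn\rceil$, so the indices with $\sigma_k\ge\lceil qn\rceil$ need separate treatment. For these indices the Bernstein convention gives $B\equiv1$, so $\psibsaa_k(v)=(q-v)^+$. This function is zero on $[q,1]$ and decreasing, hence unimodal, on $[0,q]$, with maximizer $0$. Because $\sigma_k$ is non-decreasing in $k$, these indices form a terminal segment. The maximizer $0$ on $[0,q]$ is therefore compatible with non-increasing modes, and on $[q,1]$ the functions are constant, so any choice of maximizer, such as $q$, works. The delicate point is to check that \Cref{lem:montonic_modes_SAA} still yields a non-increasing selection of maximizers once these degenerate indices are appended. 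I would verify this explicitly, choosing the maximizer $q$ on $[q,1]$ for the degenerate indices, which is at most every $v_k^+$. The remaining concern is that log-concavity must hold where $\psibsaa_k$ vanishes, at $v=q$ and at the endpoints. Unimodality is all that is needed, and it survives because the function is nonnegative with a single positive log-concave hump.
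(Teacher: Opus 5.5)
\textbf{Gap: the case $f_0^+>q$.} Apart from this one case, your argument follows the paper's. You use \Cref{cor:BSAA-piecewise-linear}, get unimodality of each piece from \Cref{lem:BSAA-log-concavity}, take non-increasing modes from \Cref{lem:montonic_modes_SAA}, and apply \Cref{lem:reduction_1D_unimodal} blockwise. Your treatment of degenerate indices and zero weights is fine, and more careful than the paper's.

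The failing step is your treatment of $i=-1$, i.e.\ vectors with $f_0^+>q$, so that every $f_k^+\in(q,1]$. Moving $f_0^+$ down to $v=q$ replaces the $k=0$ term $(x_1-x_0)\psibsaa_0(f_0^+)$ by $0$. Since $\lceil qn\rceil\ge 1$ and $\sigma_0=0$, the function $\psibsaa_0(v)=(1-B_{\lceil qn\rceil,n}(v))(v-q)$ is strictly positive on $(q,1)$. So this move lowers the objective. It gives a lower bound on such a configuration's value, whereas you need an upper bound by some term of the formula.

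\textbf{The statement as written is false.} This cannot be repaired, because the formula genuinely misses this configuration: every term with $0\le i\le K$ evaluates $\psibsaa_0$ only on $[0,q]$. Take $K=1$, $n_1=1$, $x_1=1$, $c_u=0.2$, $c_o=0.8$, so $q=0.2$ and $c_u+c_o=1$.
\begin{itemize}
\item Since $x_2-x_1=0$ and $\psibsaa_0(v)=v(q-v)$ on $[0,q]$, the right-hand side equals $q^2/4=0.01$.
\item Under $F=0.6\,\delta_0+0.4\,\delta_1$, BSAA orders its single uncensored sample. It therefore orders $1$ with probability $0.4$, incurring regret $0.6c_o-0.4c_u=0.4$ in that event.
\item The expected regret is therefore $0.16=\max_{v\in[q,1]}(1-v)(v-q)$, far above $0.01$.
\end{itemize}
The gap persists with $x_1=1-\epsilon$: the formula is at most $0.01+0.2\epsilon$, while the regret is at least $0.16(1-\epsilon)$.

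\textbf{Fix.} The correct statement lets $i$ range over $\{-1,0,\ldots,K\}$. Equivalently, it adds the term $\sup_{w\in[q,1]}\sum_{k=0}^{K}(x_{k+1}-x_k)\psibsaa_k(w)$. With that change your Steps 1--2 go through unchanged; the new case is a single application of \Cref{lem:reduction_1D_unimodal} with $C=[q,1]$ to all indices. The paper's proof has the same omission: it assumes an index $0\le i\le K$ with $f_i^+\le q<f_{i+1}^+$ exists, which fails when $f_0^+>q$.
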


A useful interpretation of \Cref{thm:BSAA} is that, despite being a supremum over the infinite-dimensional class $\Delta([0,1])$, the worst-case regret of BSAA can always be realized by an extremely simple family of demand distributions. In particular, the adversary never needs a rich or diffuse distribution over $[0,1]$: it suffices to place probability mass on at most three points, namely $0$, one of the historical censoring thresholds $x_i$, and $1$. To see this, note that \Cref{thm:BSAA} shows that an optimal adversarial choice can be indexed by
(i) a ``crossing'' interval $i$ and (ii) two scalars $(v,w)$ with $v\in[0,q]$ and $w\in[q,1]$, such that the demand distribution $F$ satisfies
$F(x_k^+) = f_k^+=v$ for all $k\le i$ and $F(x_k^+) = f_k^+=w$ for all $k\ge i+1$.  In other words, the choice of $(i,v,w)$ corresponds to the three-point distribution
\[
F_{i,\alpha,\beta} \;=\; \alpha\,\delta_{0} \;+\; \beta\,\delta_{x_i} \;+\; (1-\alpha-\beta)\,\delta_{1},
\qquad \alpha,\beta\ge 0,\ \alpha+\beta\le 1,
\]
whose CDF satisfies $F_{i,\alpha,\beta}(z)=\alpha$ for $z\in(0,x_i)$, $F_{i,\alpha,\beta}(z)=\alpha+\beta$ for $z\in[x_i,1)$, and $F_{i,\alpha,\beta}(1)=1$. Conceptually, this reduction exhibits a sharp ``extremal'' structure: the least favorable demand distributions for BSAA concentrate mass at the boundaries and at a single censoring level, creating maximal ambiguity about whether observed sales reflect genuinely low demand or merely inventory-induced censoring.

Therefore, Theorem~\ref{thm:BSAA} provides a computationally tractable procedure to evaluate the worst-case regret of BSAA for $\emph{any}$ sample size $\bm{n}$ and \emph{any} configuration of historical inventory $\pastI=(x_1,\ldots,x_K)$.
%A priori, $\mathsf{Reg}(\BSAA_{\bm n}\mid\pastI)$ is defined by a supremum over an infinite-dimensional space of demand distributions, censored by historical inventories $\bm x$ in a complicated way. 
Indeed, the worst-case regret can be evaluated efficiently via a finite set of $K{+}1$ two-dimensional problems, indexed by a single crossing interval and two scalar CDF levels $(v,w)$ on either side of the critical fractile. This allows us to efficiently evaluate the \emph{value of censored sales data} for the BSAA policy: given historical sales data censored by arbitrary inventories $\bm x$, \Cref{thm:BSAA} exactly characterizes the out-of-sample performance that BSAA can guarantee with this data. We leverage this reduction in \Cref{sec:insights} to study the performance guarantees achievable by a decision-maker who relies on a coarse point-of-sale system that does not record stockout information.

\subsection{Analysis of the Kaplan-Meier Policy}\label{sec:KM_reduction}

We now specialize our general reduction in Theorem~\ref{thm:general_reduction} to the Kaplan-Meier policy $\KM_{\bm n}$ defined in \Cref{sec:model}.
This policy is widely used in inventory planning and applies the Kaplan--Meier product-limit estimator to construct an estimate $\hat F_{KM}$ of the demand CDF, after which it orders the $q^{th}$ quantile of $\hat F_{KM}$.

Theorem~\ref{thm:general_reduction} applies to $\KM_{\bm n}$ once we establish that it is a piecewise-separable policy. However, analyzing the KM policy is particularly challenging due to the nature of the estimator. Indeed, while a classical empirical cumulative distribution puts a uniform weight of $\frac{1}{n}$ over every sample observed, the weights involved in the KM estimator are non-uniform, and ``path-dependent'': the weight of a given sample depends on the number of censored samples observed before that sample.
The next lemma is the key structural step of our reduction. 

\begin{lemma}[Piecewise-separability of the Kaplan--Meier policy]\label{lem:KM_piecewise}
Fix $\bm{n} \in \mathbb{N}^K$ and historical inventories $\pastI = (x_1,\ldots,x_K)\in[0,1]^K$.
Define $x_0 = 0$ and $x_{K+1} = 1$, and assume $0=x_0\le x_1\le \cdots \le x_K \le x_{K+1}=1$.
Then the Kaplan--Meier policy $\KM_{\bm n}$ is piecewise-separable. Furthermore, we have that
$\mathcal{X}_0 = \emptyset$ and $\mathcal{X}_k = \{x_1,\ldots,x_k\}$, for every $k\in\{1,\ldots,K\}$.
\end{lemma}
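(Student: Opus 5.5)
The approach is to show that, for fixed $z \in [x_k, x_{k+1})$, the event $\{\KM_{\bm n}(\pastI,\bm I)\le z\}$ is determined by a finite vector of counts, and that the law of this count vector depends on $F$ only through $F(x_1),\ldots,F(x_k)$ and $F(z)$.

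First, since $\KM_{\bm n}$ is the generalized $q$-quantile of the right-continuous step function $\hat F_{\mathrm{KM}}$, we have
\[
\{\KM_{\bm n}(\pastI,\bm I)\le z\}=\{\hat F_{\mathrm{KM}}(z)\ge q\}.
\]
Note that $z<1$ always holds on the relevant intervals, so the special case $z=1$ in the definition of $\hat F_{\mathrm{KM}}$ never enters. It therefore suffices to understand the random variable $\hat F_{\mathrm{KM}}(z)$.

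The key step is to rewrite the product-limit form in ``block'' form. Partition $[0,z]$ into the blocks
\[
[0,x_1],\ (x_1,x_2],\ \ldots,\ (x_{k-1},x_k],\ (x_k,z].
\]
Within a block, the ranked uncensored observations $Y_i$ are consecutive in the risk-set ordering, with no censored observation interleaved. The only censored values that are $\le z$ are the $x_\ell$ with $\ell\le k$, and these sit at the right endpoints of the blocks; by the tie-breaking convention they are ranked after uncensored observations equal to $x_\ell$. Hence the factors $\frac{n-i}{n-i+1}$ over a block telescope to $\frac{R_j-d_j}{R_j}$. Here $R_j$ is the number at risk when the block starts and $d_j$ is the number of uncensored observations in the block; ties among equal uncensored values do not matter, since the telescoped product is the same whatever their order. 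We then get
\[
1-\hat F_{\mathrm{KM}}(z)=\prod_{j=1}^{k+1}\frac{R_j-d_j}{R_j}\qquad(\text{factor}=1 \text{ if } R_j=0),
\]
with $R_1=n$ and $R_{j+1}=R_j-d_j-c_j$, where $c_j$ is the number of samples of group $j$ with $D>x_j$ (those censored at $x_j$). Consequently, $\hat F_{\mathrm{KM}}(z)$ is a deterministic function $G_k$ of the counts $(d_1,\ldots,d_{k+1},c_1,\ldots,c_k)$.

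Next I compute the law of these counts. Each historical sample $D^{(\ell)}_i$ is classified by which cell of
\[
[0,x_1],\ (x_1,x_2],\ \ldots,\ (x_{k-1},x_k],\ (x_k,z],\ (z,1]
\]
it falls in. For group $\ell\le k$, the cells above $x_\ell$ are merged into the single outcome ``censored at $x_\ell$''. Groups $\ell>k$ have $x_\ell\ge x_{k+1}>z$, so for them the full classification up to $z$ is observed. The counts $d_j$ and $c_j$ are sums of these independent categorical indicators, whose cell probabilities are the differences $F(x_j)-F(x_{j-1})$, $F(z)-F(x_k)$ and $1-F(z)$, with $F(x_0)=F(0^-)=0$ understood. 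Since the cells are closed on the right, only the values $F(x_j)$ appear, not left limits. Thus
\[
\Prob\big(\KM_{\bm n}\le z\big)=\sum_{\text{count configurations}}\mathbb 1\{G_k(\cdot)\ge q\}\cdot\prod(\text{multinomial probabilities}),
\]
which is a polynomial in $(F(x_1),\ldots,F(x_k),F(z))$. This defines $P_k^{\KM_{\bm n}}$, which is continuous on $[0,1]^{k+1}$. For $k=0$ there are no censoring points below $z$, so the expression involves only $F(z)$ and $\mathcal X_0=\emptyset$.

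Equal historical inventories, $x_j=x_{j+1}$, only produce empty blocks and so cause no difficulty. The main obstacle is the telescoping step: one has to check carefully that the rank-based definition with the stated tie-breaking rule (uncensored before censored) exactly yields the block form. This includes uncensored demands equal to some $x_\ell$, which are counted in block $\ell$ before the group-$\ell$ removals, and risk sets that become empty. I would verify this by induction over the ranks $i$, tracking $n-i+1$ as the current risk-set size.
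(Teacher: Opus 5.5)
Your proposal is correct and follows essentially the same route as the paper's proof. Both arguments use the equivalence $\{\KM_{\bm n}(\pastI,\bm I)\le z\}=\{\hat F_{\mathrm{KM}}(z)\ge q\}$ and partition $[0,z]$ into $[0,x_1],(x_1,x_2],\ldots,(x_k,z]$. They telescope the product-limit factors within each block into a function of the uncensored counts and the censored counts at $x_1,\ldots,x_k$, and then use independent multinomial laws with cell probabilities given by differences of $F(x_1),\ldots,F(x_k),F(z)$ to obtain a polynomial $P_k^{\KM_{\bm n}}$. Your additional remarks are also correct: $z<1$ on every interval $[x_k,x_{k+1})$, so the paper's separate $z=1$ case is never needed, and coinciding inventory levels and empty risk sets cause no difficulty.
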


\Cref{lem:KM_piecewise} shows that for any $z \in [x_k,x_{k+1})$ the probability of the event $\{\KM_{\bm n}(\pastI,\bm I)\le z\}$ can be written solely as a function of the CDF values at the previous censoring points $x_1,\ldots,x_k$, together with the single value $F(z)$.
In contrast to Biased-SAA, whose action distribution on each piece depends only on $F(z)$ (cf.\ \Cref{lem:action_dist_BSAA}), the KM policy exhibits an inherently non-local dependence on the past censoring points.
This non-locality is intuitive, since the Kaplan--Meier estimator is path-dependent: its value at $z$ is obtained by multiplying incremental survival updates accumulated across earlier censoring levels.
The contribution of \Cref{lem:KM_piecewise} is to show that the entire non-local dependence can be summarized by a fixed set of scalars, namely $(F(x_1),\ldots,F(x_k),F(z))$, and nothing else about $F$.

The proof proceeds by re-expressing the Kaplan--Meier estimator evaluated at $z$ in terms of a collection of counts.
Specifically, for each $j\le k$ we track the number of observations censored at inventory level $x_j$, and we also track the number of observations whose realized demand falls into each interval $(x_{j-1},x_j]$, together with the terminal interval $[x_k,z]$.
These counts are distributing as independent multinomial random vectors, with cell probabilities determined entirely by the CDF values at the breakpoints $F(x_1),\ldots,F(x_k)$ and the within-piece value $F(z)$.
Once the estimator is written as a function of these multinomial counts, it follows that the distribution of $\KM_{\bm n}(\pastI,\bm I)$ on the piece $[x_k,x_{k+1})$ depends on $F$ only through $(F(x_1),\ldots,F(x_k),F(z))$, which is exactly the piecewise-separability statement.

Finally, \Cref{lem:KM_piecewise} underscores the need for a general reduction that applies to the broader family of piecewise-separable policies.
The separable policies studied by \citet{besbes2023contextual} restrict attention to decision rules whose behavior at a threshold $z$ depends only on the local quantity $F(z)$.
This locality restriction excludes policies with path-dependent structure, and therefore cannot accommodate procedures such as the Kaplan--Meier policy, whose distribution of actions on $[x_k,x_{k+1})$ depends on the past breakpoints values $(F(x_1),\ldots,F(x_k))$ in addition to $F(z)$.

Combining Lemma~\ref{lem:KM_piecewise} with Theorem~\ref{thm:general_reduction} yields the following reduction.
\begin{theorem}[Worst-case regret of the Kaplan--Meier policy]\label{thm:KM_reduction}
Fix $\bm{n} \in \mathbb{N}^K$ and a design $\pastI = (x_1,\ldots,x_K)\in[0,1]^K$ with $0=x_0\le x_1\le \cdots \le x_K \le x_{K+1}=1$.
Then
\begin{equation}\label{eq:KM_reduction}
\mathsf{Reg}(\KM_{\bm n} \mid \pastI)
=
\sup_{\substack{
\bm{f},\bm{f}^{+} \in [0,1]^{K+1} \\
0 \le f_0 \le f_0^{+} \le f_1 \le f_1^{+} \le \cdots \le f_K^{+} \le 1
}}
\sum_{k=0}^{K} (x_{k+1}-x_k)\cdot \Psi_k^{\KM_{\bm n}}\big((f_1,\ldots,f_k), f_k^{+}\big),
\end{equation}
where, for each $k\in\{0,\ldots,K\}$ and all $(\bm u,v)\in[0,1]^k\times[0,1]$,
\begin{equation}
\Psi_k^{\KM_{\bm n}}(\bm u,v)
=
\big(1 - P_k^{\KM_{\bm n}}(\bm u,v)\big)\cdot (v-q)
+ (q - v)^{+},
\end{equation}
and $P_k^{\KM_{\bm n}}:[0,1]^k\times[0,1]\to[0,1]$ is the piecewise-separability function from \Cref{def:piecewise} associated with $\KM_{\bm n}$ on the interval $[x_k,x_{k+1})$.
\end{theorem}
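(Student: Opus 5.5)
This result should follow directly by plugging \Cref{lem:KM_piecewise} into \Cref{thm:general_reduction}; the only work is checking that the index sets and hypotheses line up.

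\textbf{Step 1 (piecewise-separability).} By \Cref{lem:KM_piecewise}, $\KM_{\bm n}$ is piecewise-separable with respect to $\pastI$. The associated sets are $\mathcal{X}_0=\emptyset$ and $\mathcal{X}_k=\{x_1,\ldots,x_k\}$ for $k\ge1$. The lemma also supplies continuous functions $P_k^{\KM_{\bm n}}$ such that
\[
\Prob\big(\KM_{\bm n}(\pastI,\bm I)\le z\big)=P_k^{\KM_{\bm n}}\big((F(x_1),\ldots,F(x_k)),F(z)\big)\quad\text{for } z\in[x_k,x_{k+1}).
\]
Continuity should be checked explicitly. In the proof sketch of \Cref{lem:KM_piecewise}, the probability is a finite sum, over count configurations, of indicator-selected multinomial probabilities. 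Each multinomial cell probability is an affine function of the breakpoint CDF values and $F(z)$, so $P_k^{\KM_{\bm n}}$ is a polynomial in its arguments and hence continuous.

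\textbf{Step 2 (index bookkeeping).} Apply \Cref{thm:general_reduction} with these choices. The sub-vector $\bm{f}_{\mathcal{X}_k}$ consists of the components $f_j$ with $x_j\in\mathcal{X}_k$, which is exactly $(f_1,\ldots,f_k)$. For $k=0$ it is the empty vector, so $\Psi_0^{\KM_{\bm n}}$ depends on $f_0^+$ alone. Substituting gives the objective $\sum_k (x_{k+1}-x_k)\,\Psi_k^{\KM_{\bm n}}\big((f_1,\ldots,f_k),f_k^+\big)$ under the same monotone constraint set, which is \eqref{eq:KM_reduction}. The definition of $\Psi_k$ is copied verbatim.

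\textbf{Main obstacle: repeated historical levels.} When several $x_j$ coincide, $\mathcal{X}_k$ as a set of points has fewer elements than $k$. One must confirm that the general theorem's convention (indexing by $j$) is consistent with this. Two facts handle it:
\begin{itemize}
\item If $x_j=x_{j'}$, the monotone constraints force $f_j\le f_j^+\le\cdots\le f_{j'}$.
\item Any feasible $(\bm f,\bm f^+)$ realized by a CDF has $f_j=f_{j'}=F(x_j)$, and the degenerate intervals carry weight $x_{k+1}-x_k=0$.
\end{itemize}
So the extra coordinates either coincide or do not affect the objective. It is therefore enough to note that \Cref{thm:general_reduction} is stated with coordinate-indexed sub-vectors, and that \Cref{lem:KM_piecewise} can be read as giving $P_k$ as a function of $(f_1,\ldots,f_k)$. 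Any duplicates are consistent evaluations of the same CDF value. No further argument is needed.
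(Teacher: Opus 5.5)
\textbf{Ties are not harmless.} Your core step matches the paper's proof, which is just the one-line combination: \Cref{lem:KM_piecewise} gives piecewise-separability with polynomial (hence continuous) $P_k^{\KM_{\bm n}}$, and you substitute $\bm f_{\mathcal X_k}=(f_1,\ldots,f_k)$ into \Cref{thm:general_reduction}. That argument is complete for strictly increasing levels. What fails is your treatment of repeated levels, the point you single out as the main obstacle.

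The supremum in \eqref{eq:KM_reduction} ranges over \emph{all} monotone vectors, not only CDF-induced ones. So when $x_j=x_{j'}$ with $j<j'$, it contains points with $f_j<f_{j'}$. This has three consequences:
\begin{itemize}
\item No CDF realizes such points, and \Cref{def:piecewise} imposes nothing on $P_k$ there.
\item The lower-bound half of \Cref{thm:general_reduction} cannot be invoked, because its construction $F_{\bm f,\bm f^+,\epsilon}$ needs $x_k+\epsilon<x_{k+1}$.
\item The extra coordinates do not drop out: the polynomial built in the proof of \Cref{lem:KM_piecewise} assigns mass $f_{j'}-f_{j'-1}$ to the empty cell $(x_{j'-1},x_{j'}]$.
\end{itemize}

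\textbf{Counterexample.} Take $K=2$, $x_1=x_2=x=0.1$, $n_1=n_2=1$, $q=0.9$.
\begin{itemize}
\item The lemma's polynomial on $[x,1)$ is $P_2^{\KM_{\bm n}}(f_1,f_2,v)=f_2-f_1+f_1^2$. Every actual CDF instead gives $\Prob(\KM_{\bm n}(\pastI,\bm I)\le z)=F(x)^2$ there.
\item The feasible point $f_0=f_0^+=f_1=f_1^+=0$, $f_2=f_2^+=q/2$ makes the right-hand side at least $(1-x)q^2/4\approx 0.182$.
\item For every CDF, the bracketed integrand of \Cref{lem:integral_form} is pointwise at most $\max\{4q^3/27,\,1-q\}=0.108$.
\end{itemize}
So your conclusion that ``any duplicates are consistent evaluations of the same CDF value'' and ``no further argument is needed'' is false. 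With the lemma's $P_k$, the identity fails as written once levels tie.

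\textbf{The fix.} Reduce to distinct levels before applying the theorem.
\begin{itemize}
\item $\KM_{\bm n}$ sees the data only through the pooled pairs $(Y_i,\zeta_i)$. Merging tied levels into one level carrying the summed sample count therefore leaves the action distribution, and hence the worst-case regret, unchanged.
\item For $0=x_0<x_1<\cdots<x_K<x_{K+1}=1$, the $\epsilon$-construction realizes every feasible $(\bm f,\bm f^+)$.
\item If $x_1=0$ (resp.\ $x_K=1$), the coordinates $f_0,f_0^+$ (resp.\ $f_K,f_K^+$) appear only in a zero-weight term. They can be set to $f_1$ (resp.\ $1$) without changing the objective.
\end{itemize}
Equivalently, keep the original design but add the constraints $f_j=f_j^+=\cdots=f_{j'}$ whenever $x_j=x_{j'}$. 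The paper's one-line proof is silent on ties as well, so this is really a caveat on the statement itself. It is nonetheless exactly the point your proposal claims to settle, and the argument you give only ever considers CDF-realizable vectors.
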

\Cref{thm:KM_reduction} yields an \textit{exact} characterization of the worst-case regret of the Kaplan-Meier policy that holds for \emph{any} finite sample size and for \emph{any} past inventory levels $\bm{x}$. Moreover, it provides an explicit reduction of the original worst-case evaluation problem to a finite-dimensional optimization problem whose dimension depends only on the number of distinct historical inventory levels (and not on the sample size $n$). As a consequence, the exact worst-case regret of the Kaplan-Meier policy can be efficiently computed in finite samples, including in small and moderate data regimes.

To the best of our knowledge, such a finite-sample regret characterization has not been available for Kaplan-Meier based inventory policies. Existing work that applies Kaplan-Meier techniques in inventory control primarily establishes consistency or asymptotic optimality, see, for example, \citep{huh2011adaptive} or the discussion in \citep{fan2022sample}, and does not provide finite-sample guarantees. This limitation is inherent to the structure of the problem: the Kaplan-Meier estimator induces decision rules that depend on the entire history of censoring, so that evaluating regret requires controlling a nonlinear and path-dependent functional of heterogeneously censored observations. In the fixed-design censoring regime considered here, this dependence lies outside the scope of standard concentration-based analyses. While the survival-analysis literature derives nonasymptotic deviation inequalities for the Kaplan-Meier estimator under classical random-censoring assumptions (e.g., \citealp{gill1983large,csorgo1983rate,bitouze1999dkm}), those results do not apply to inventory settings, where censoring is induced by past inventory decisions and therefore cannot be assumed to be random. \Cref{thm:KM_reduction} shows that, despite these challenges, the Kaplan-Meier policy admits sufficient structural regularity under fixed censoring to allow an exact finite-sample worst-case regret characterization.

\pgfplotsset{
  m0/.style={blue,        mark=o},
  m1/.style={red,         mark=square*},
  m2/.style={brown,       mark=triangle*},
  m5/.style={violet,      mark=diamond*},
  m10/.style={ForestGreen,  mark=*},
  fullinfo/.style={black, dashed, line width=0.6mm, mark=none}
}

\section{Structural Insights}
\label{sec:insights}

We next leverage our theoretical reduction to quantify the exact finite-sample performance of data-driven policies in censored inventory regimes, and to study how demand censoring affects learning.
We consider a practical scenario in which a decision-maker has primarily operated at a single inventory level $x \in [0,1]$. Such a setting naturally arises, for instance, when the decision-maker follows an order-up-to policy for an extended period of time and gathers demand observations. While this approach facilitates stable operations, it typically results in censored sales data whenever realized demand exceeds the chosen inventory level.
To mitigate censoring and improve information about the underlying demand distribution, the decision-maker may occasionally place a larger order. We model this exploratory behavior by allowing the decision-maker to select the maximal inventory level $1$, corresponding to the upper bound of the demand support, for a subset of periods. Formally, we assume that the decision-maker has access to a total of $n$ sales observations, of which $n-m$ were generated under inventory  $x$, and the remaining $m$ were generated under inventory level $1$.

The pair $(n,m)$ captures both the overall amount of available data and the intensity of exploration. When $m=0$, all observations are censored at level $x$, whereas larger values of $m$ correspond to increasingly informative data due to reduced censoring. For each triple $(x,n,m)$, we use the analytical characterization developed in \Cref{sec:main_results} to compute the worst-case regret of the corresponding data-driven policy over all demand distributions.

\subsection{On the Value of Exploration for Kaplan-Meier}

We present in \Cref{fig:KM_worst_case} the worst-case performance of the Kaplan-Meier policy as a function of $n$ for different values of $x$ and $m$. As a benchmark, we also plot the worst-case performance of the Sample Average Approximation algorithm which has access to $n$ i.i.d samples of \textit{uncensored} demand. We note that this benchmark is a lower bound on the best achievable worst-case regret with censored data.

\begin{figure}[h!]
\centering

\pgfplotstableread[col sep=comma]{Data/effect_of_sample_size_KM_0.8.csv}\datatable

\pgfplotslegendfromname{kmlegend}

%\makebox[\textwidth][c]{\pgfplotslegendfromname{kmlegend}}
\vspace{0.4em}
% ===================== x = 0.7 =====================
\subfigure[$x=0.7$]{
\begin{tikzpicture}[scale=.75]
\begin{axis}[
  width=0.43\textwidth,
  height=0.35\textwidth,
  xlabel={$n$},
  ylabel={worst-case regret},
  ymin=0,
  ymax=.1,
  grid=both,
  scaled y ticks=false,
  yticklabel style={
    /pgf/number format/fixed,
    /pgf/number format/precision=2
  },
  every axis plot/.append style={mark size=1.2pt},
  unbounded coords=jump,  
  legend to name=kmlegend,
  legend columns=3,
  legend style={draw=none},
]
\addplot[m0] table[x=n,y=wc_regret_0_0.7]{\datatable};
\addlegendentry{$m=0$}

\addplot[m1] table[x=n,y=wc_regret_1_0.7]{\datatable};
\addlegendentry{$m=1$}

\addplot[m2] table[x=n,y=wc_regret_2_0.7]{\datatable};
\addlegendentry{$m=2$}

\addplot[m5] table[x=n,y=wc_regret_5_0.7]{\datatable};
\addlegendentry{$m=5$}

\addplot[m10] table[x=n,y=wc_regret_10_0.7]{\datatable};
\addlegendentry{$m=10$}

\addplot[fullinfo] table[x=n,y=wc_regret_full_info_0.7]{\datatable};
\addlegendentry{SAA (uncensored data)}

\end{axis}
\end{tikzpicture}
}
\hfill
% ===================== x = 0.8 =====================
\subfigure[$x=0.8$]{
\begin{tikzpicture}[scale=.75]
\begin{axis}[
  width=0.43\textwidth,
  height=0.35\textwidth,
  xlabel={$n$},
  ymin=0,
  ymax=.1,
  grid=both,
  scaled y ticks=false,
  yticklabel style={
    /pgf/number format/fixed,
    /pgf/number format/precision=2
  },
  unbounded coords=jump,
    every axis plot/.append style={mark size=1.2pt},
]
\addplot[m0] table[x=n,y=wc_regret_0_0.8]{\datatable};
\addplot[m1] table[x=n,y=wc_regret_1_0.8]{\datatable};
\addplot[m2] table[x=n,y=wc_regret_2_0.8]{\datatable};
\addplot[m5] table[x=n,y=wc_regret_5_0.8]{\datatable};
\addplot[m10] table[x=n,y=wc_regret_10_0.8]{\datatable};
\addplot[fullinfo] table[x=n,y=wc_regret_full_info_0.8]{\datatable};
\end{axis}
\end{tikzpicture}
}
% ===================== x = 0.9 =====================
\subfigure[$x=0.9$]{
\begin{tikzpicture}[scale=.75]
\begin{axis}[
  width=0.43\textwidth,
  height=0.35\textwidth,
  xlabel={$n$},
  ymin=0,
  ymax=.1,
  grid=both,
  scaled y ticks=false,
  yticklabel style={
    /pgf/number format/fixed,
    /pgf/number format/precision=2
  },
  unbounded coords=jump,
    every axis plot/.append style={mark size=1.2pt},
]
\addplot[m0] table[x=n,y=wc_regret_0_0.9]{\datatable};
\addplot[m1] table[x=n,y=wc_regret_1_0.9]{\datatable};
\addplot[m2] table[x=n,y=wc_regret_2_0.9]{\datatable};
\addplot[m5] table[x=n,y=wc_regret_5_0.9]{\datatable};
\addplot[m10] table[x=n,y=wc_regret_10_0.9]{\datatable};
\addplot[fullinfo] table[x=n,y=wc_regret_full_info_0.9]{\datatable};
\end{axis}
\end{tikzpicture}
}
%\hfill
% ===================== x = 0.94 =====================
% \subfigure[$x=0.94$]{
% \begin{tikzpicture}
% \begin{axis}[
%   width=0.43\textwidth,
%   height=0.35\textwidth,
%   xlabel={$n$},
%   ylabel={worst-case regret},
%   ymin=0,
%   ymax=.1,
%   grid=both,
%   scaled y ticks=false,
%   yticklabel style={
%     /pgf/number format/fixed,
%     /pgf/number format/precision=2
%   },
%   unbounded coords=jump,
% every axis plot/.append style={mark size=1.2pt},
% ]
% \addplot[m0] table[x=n,y=wc_regret_0_0.94]{\datatable};
% \addplot[m1] table[x=n,y=wc_regret_1_0.94]{\datatable};
% \addplot[m2] table[x=n,y=wc_regret_2_0.94]{\datatable};
% \addplot[m5] table[x=n,y=wc_regret_5_0.94]{\datatable};
% \addplot[m10] table[x=n,y=wc_regret_10_0.94]{\datatable};
% \addplot[fullinfo] table[x=n,y=wc_regret_full_info_0.94]{\datatable};
% \end{axis}
% \end{tikzpicture}
%}
\caption{KM policy, $q = 0.8$: effect of sample size $n$.}
\label{fig:KM_worst_case}
\end{figure}
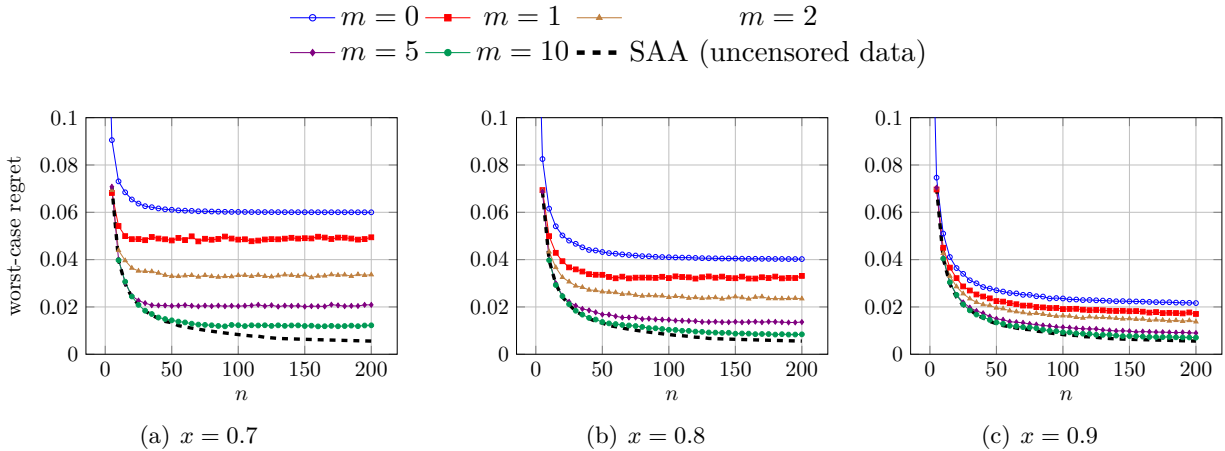

\Cref{fig:KM_worst_case} provides several insights into the performance of the KM policy and the value of censored information.
We first observe that for fixed values of $n$ and $m$, increasing the inventory level $x$ at which demand is censored  reduces the worst-case regret of the KM policy. This behavior is intuitive: higher values of $x$ reduce the extent of censoring in the observed data and therefore provide more information about the underlying demand distribution.

We also note that, in all settings, the KM policy converges to a strictly positive worst-case regret level when some observations are censored. This reflects the fundamental impossibility of fully learning the demand distribution from censored data alone: increasing the sample size $n$ cannot drive worst-case regret to zero in the absence of uncensored observations. Moreover, this convergence occurs very quickly: in all cases, the worst-case regret flattens after only tens of samples, indicating that once the informational content of censored data is exhausted, additional observations provide little marginal benefit.

The most striking insight from the experiments is that a small amount of exploration goes a very long way. Introducing even a handful of uncensored observations leads to disproportionately large improvements in worst-case performance. For example, when $x=0.7$ and $n=100$, the worst-case regret decreases by approximately $20\%$ when \textit{a single observation is uncensored}, and is reduced by a factor of three when $m=5$ uncensored observations are available. In fact, a decision-maker with $95$ observations collected at inventory level $x=0.7$ and only $5$ uncensored observations collected at inventory level $1$ achieves essentially the same worst-case regret guarantee as a decision-maker with $100$ observations collected at $x=0.9$, that is, one who commits to holding $30\%$ more inventory under an order-up-to policy.

Finally and quite notably, very limited exploration is sufficient to nearly match the fully uncensored benchmark. With $m=10$ uncensored observations out of $n=100$ samples, the worst-case regret guarantee of the KM policy lies within $50\%$, $25\%$, and $12.5\%$ of the regret achieved under fully uncensored data (SAA) when $x$ equals $0.7$, $0.8$, and $0.9$, respectively. Taken together, these results demonstrate that while censoring fundamentally constrains what can be learned from passive data, deliberate and limited exploration can dramatically improve worst-case guarantees, and can do so with only minimal departures from standard operational practices.
\subsection{Performance Behavior of the BSAA Policy}

We now turn to the Biased Sample Average Approximation (BSAA) policy in order to understand the consequences of learning from censored sales data when censoring information is not observed. This setting is particularly relevant in practice, as many point-of-sale (PoS) systems record only realized sales and do not track inventory levels or stockout events. In such environments, the decision-maker cannot identify which observations are censored and therefore cannot implement policies that explicitly account for censoring, such as the Kaplan-Meier policy. BSAA, which treats sales as realizations of demand, is then a natural and often unavoidable solution.

Figure~\ref{fig:BSAA_worst_case} reports the worst-case regret of the BSAA policy as a function of the sample size $n$, for different values of the base-stock level $x$ and the exploration parameter $m$. The experimental design is identical to that used in the previous subsection: $n-m$ observations are generated under inventory level $x$, and $m$ observations are generated under inventory level $1$. As before, we include the worst-case regret of the Sample Average Approximation (SAA) policy with access to $n$ i.i.d.\ samples of uncensored demand as a benchmark.

\begin{figure}[h!]
\centering

\pgfplotstableread[col sep=comma]{Data/effect_of_sample_size_BSAA_0.8.csv}\datatable

\pgfplotslegendfromname{kmlegend}

%\makebox[\textwidth][c]{\pgfplotslegendfromname{kmlegend}}
\vspace{0.4em}
% ===================== x = 0.7 =====================
\subfigure[$x=0.7$]{
\begin{tikzpicture}[scale=.75]
\begin{axis}[
  width=0.43\textwidth,
  height=0.35\textwidth,
  xlabel={$n$},
  ylabel={worst-case regret},
  ymin=0,
  ymax=.3,
  grid=both,
  scaled y ticks=false,
  yticklabel style={
    /pgf/number format/fixed,
    /pgf/number format/precision=2
  },
  every axis plot/.append style={mark size=1.2pt},
  unbounded coords=jump,  
  legend to name=kmlegend,
  legend columns=3,
  legend style={draw=none},
]
\addplot[m0] table[x=n,y=wc_regret_0_0.7]{\datatable};
\addlegendentry{$m=0$}

\addplot[m1] table[x=n,y=wc_regret_1_0.7]{\datatable};
\addlegendentry{$m=1$}

\addplot[m2] table[x=n,y=wc_regret_2_0.7]{\datatable};
\addlegendentry{$m=2$}

\addplot[m5] table[x=n,y=wc_regret_5_0.7]{\datatable};
\addlegendentry{$m=5$}

\addplot[m10] table[x=n,y=wc_regret_10_0.7]{\datatable};
\addlegendentry{$m=10$}

\addplot[fullinfo] table[x=n,y=wc_regret_full_info_0.7]{\datatable};
\addlegendentry{SAA (uncensored data)}

\end{axis}
\end{tikzpicture}
}
\hfill
% ===================== x = 0.8 =====================
\subfigure[$x=0.8$]{
\begin{tikzpicture}[scale=.75]
\begin{axis}[
  width=0.43\textwidth,
  height=0.35\textwidth,
  xlabel={$n$},
  ymin=0,
  ymax=.3,
  grid=both,
  scaled y ticks=false,
  yticklabel style={
    /pgf/number format/fixed,
    /pgf/number format/precision=2
  },
  unbounded coords=jump,
    every axis plot/.append style={mark size=1.2pt},
]
\addplot[m0] table[x=n,y=wc_regret_0_0.8]{\datatable};
\addplot[m1] table[x=n,y=wc_regret_1_0.8]{\datatable};
\addplot[m2] table[x=n,y=wc_regret_2_0.8]{\datatable};
\addplot[m5] table[x=n,y=wc_regret_5_0.8]{\datatable};
\addplot[m10] table[x=n,y=wc_regret_10_0.8]{\datatable};
\addplot[fullinfo] table[x=n,y=wc_regret_full_info_0.8]{\datatable};
\end{axis}
\end{tikzpicture}
}
% ===================== x = 0.9 =====================
\subfigure[$x=0.9$]{
\begin{tikzpicture}[scale=.75]
\begin{axis}[
  width=0.43\textwidth,
  height=0.35\textwidth,
  xlabel={$n$},
  ymin=0,
  ymax=.3,
  grid=both,
  scaled y ticks=false,
  yticklabel style={
    /pgf/number format/fixed,
    /pgf/number format/precision=2
  },
  unbounded coords=jump,
    every axis plot/.append style={mark size=1.2pt},
]
\addplot[m0] table[x=n,y=wc_regret_0_0.9]{\datatable};
\addplot[m1] table[x=n,y=wc_regret_1_0.9]{\datatable};
\addplot[m2] table[x=n,y=wc_regret_2_0.9]{\datatable};
\addplot[m5] table[x=n,y=wc_regret_5_0.9]{\datatable};
\addplot[m10] table[x=n,y=wc_regret_10_0.9]{\datatable};
\addplot[fullinfo] table[x=n,y=wc_regret_full_info_0.9]{\datatable};
\end{axis}
\end{tikzpicture}
}

\caption{BSAA policy, $c_u = 0.8$: effect of sample size $n$.}
\label{fig:BSAA_worst_case}
\end{figure}
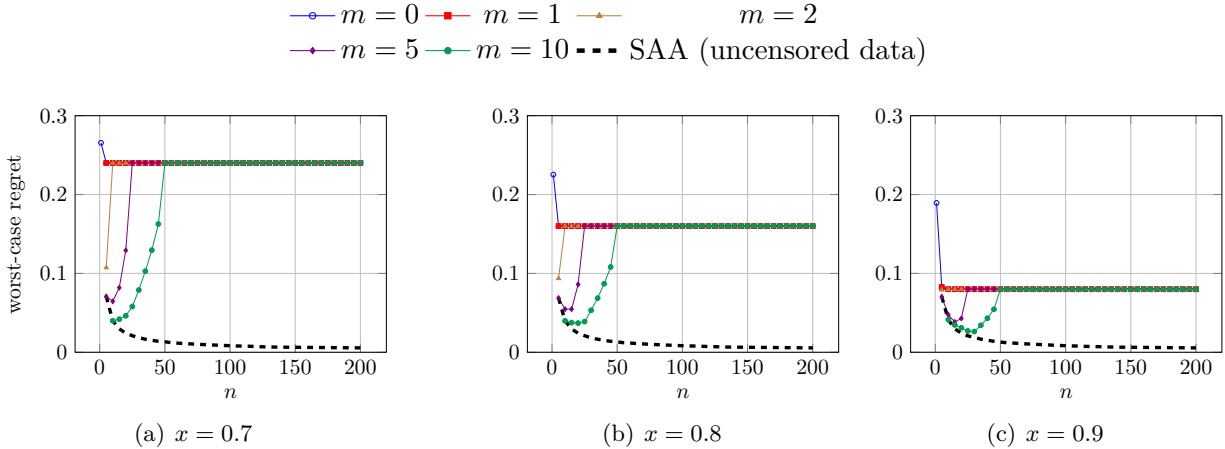

\Cref{fig:BSAA_worst_case} illustrates how censored and uncensored observations interact under sales-as-demand learning. We first note that the worst-case performance of the BSAA policy is markedly worse than that of the KM policy. Furthermore, in contrast with the KM policy, \Cref{fig:BSAA_worst_case} shows that while a small number of uncensored observations can reduce worst-case regret for small sample sizes, their influence diminishes rapidly as additional censored data are aggregated. Indeed, as the sample size of censored data increases, the bias induced by treating censored sales as demand becomes dominant, and the worst-case regret of BSAA converges to the same level across all values of $m$.

This dilution effect has important operational implications. In regimes with substantial censoring, such as $x=0.7$, aggregating additional censored data can be actively harmful. Indeed, for small values of $m$, the worst-case regret of BSAA is often lower when the policy relies almost exclusively on uncensored observations, and increases as more censored samples are added. In these settings, censored data overwhelm the limited high-quality information provided by uncensored samples and degrade overall performance.
When the censoring point is sufficiently high, for example $x=0.9$, censored observations are less distorted and may lead to moderate performance improvements. Even in this case, however, the gains from aggregating censored data are small relative to the potential deterioration that occurs once censored samples dominate the dataset. Compared to the dramatic benefits of limited exploration observed under the KM policy, the improvements achievable under BSAA are modest and fragile.

Taken together, these results suggest that when operating with a PoS system that cannot capture censoring information, the decision-maker should avoid myopically aggregating large volumes of censored sales data. Instead, performance is driven primarily by a small number of high-quality, uncensored observations. In such settings, deliberately prioritizing a limited amount of uncensored data can be far more effective than collecting large datasets dominated by censored sales.

In fact, our characterization in \Cref{thm:BSAA} can be leveraged to solve the meta problem of inventory design for exploration: before applying BSAA, the decision maker can choose which inventory levels to use while collecting data (and how many observations to collect at each level) subject to a single budget $B$ on total inventory deployed. 
\begin{align*}
    \inf_{\substack{K\ge 1,\ \bm n\in\mathbb N^K\\ 0\le x_1\le\cdots\le x_K\le 1\\ \sum_{k=1}^K n_k x_k \le B}}
\ \mathsf{Reg}(\BSAA_{\bm n}\mid \bm x)\ =\ \inf_{\substack{K\ge 1,\ \bm n\in\mathbb N^K\\ 0\le x_1\le\cdots\le x_K\le 1\\ \sum_{k=1}^K n_k x_k \le B}}
\ \sup_{F \in \Delta([0,1])}\ \E_{\bm{D} \sim F^n}\left[ R(\BSAA_{\bm n}(\bm x, \bm I), F) \right].
\end{align*}
We prove that this intricate min-max design problem is actually tractable for BSAA. Using our regret characterization, we show that one can reduce the minimax design problem to solving a finite collection of linear programs (one for each candidate number of historical samples $K$). This provides an implementable recipe for  efficiently computing the optimal exploratory inventory design for the BSAA policy, and for quantifying the corresponding value of information that can be collected with total inventory $B$. The precise statement can be found in \Cref{appendix:opt-bsaa-inventory}. 

We numerically applied this procedure for various inventory budgets $B \in \mathbb{N}$ and found that the optimal exploratory inventory is the one that obtains $B$ uncensored samples, i.e., $K =1$ with $x_1 = 1$ and $n_1 = B$. This reinforces our earlier message that the bias of BSAA increases with the number of censored samples, and therefore it is better to prioritize the collection of fewer uncensored demand samples, rather than many censored ones.

\subsection{Sample Complexity for Different Censoring Levels}
We next study the sample complexity required to achieve a target worst-case regret level under the Kaplan--Meier policy, for different values of the critical fractile $q$.
For illustrative purposes, we define the target as a percentage of the  minimax optimal regret achievable with sole knowledge of the support of the distribution \citep{perakis2008regret}. This regret is known to be equal to $q \cdot (1-q)$, and we will refer to it as the no-information regret.
We then consider the Kaplan--Meier policy in the setting where all historical sales information have been observed under a single base-stock level $x$. For each value of $x$, we compute the minimal number of samples required to ensure that the worst-case regret does not exceed $25\%$ of the no-information regret. 
Figure~\ref{fig:sample_complexity} plots this minimal sample requirement as a function of $x$ for two representative quantile levels.

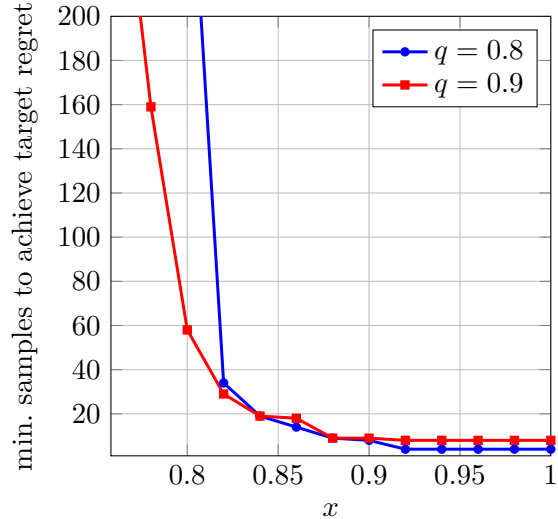
\begin{figure}[h!]
\centering
\begin{tikzpicture}

\begin{axis}[
    width=0.45\linewidth,
    height=0.45\linewidth,
    xlabel={$x$},
    ylabel={min.\ samples to achieve target regret},
    ymin=1,ymax=200,
    xmax = 1,
    ytick={20,40,60,80,100,120,140,160,180,200},
    grid=both,
    legend pos=north east,
]

% KM, c_u = 0.8
\addplot[mark=*,mark size=1.2pt, blue, line width = 0.4mm] table[col sep=comma,x=x,y={num_sample_0.04}]{Data/target_performance_KM_0.8.csv};
\addlegendentry{$q = 0.8$}

% KM, c_u = 0.9
\addplot[mark=square*,mark size=1.2pt, red, line width = 0.4mm] table[col sep=comma,x=x,y={num_sample_0.0225}]{Data/target_performance_KM_0.9.csv};
\addlegendentry{$q = 0.9$}
\end{axis}
\end{tikzpicture}
\caption{\textbf{Sample complexity.} Minimal sample size required to achieve the target worst-case regret for the Kaplan-Meier policy, for different critical quantiles. The target regret for each curve is $25\%$ of the no-information regret.}
\label{fig:sample_complexity}
\end{figure}

\Cref{fig:sample_complexity} reveals a sharp phase transition in the sample complexity of the Kaplan--Meier policy. 
For smaller values of $x$, the target regret is unattainable, resulting in effectively infinite sample requirements. 
Once $x$ exceeds a critical threshold, however, the sample complexity drops abruptly to moderate levels and subsequently stabilizes as $x$ increases further. 
Taken together, these observations suggest that, when learning from censored demand information, the target performance is either achievable, in which case a relatively small number of samples suffices, or unachievable, in which case collecting additional samples alone is ineffective and increasing the censoring point becomes the most reliable way to improve performance.

Moreover, when operating close to the feasibility boundary, modest increases in the censoring point can lead to exponential reductions in sample complexity. 
For example, for $q = 0.9$, increasing $x$ from $0.78$ to $0.80$ (a $2.6\%$ increase) reduces the required number of samples from $159$ to $58$, a reduction by nearly a factor of three. 
A further increase to $x = 0.82$ (an additional $2.5\%$ increase) halves the sample complexity again, bringing it down to $29$ samples.

\section{Conclusion}

We study the offline data-driven newsvendor problem under demand censoring, where historical observations consist only of realized sales. Our main contribution is an optimization-based framework that yields an exact characterization of the worst-case regret of data-driven policies for any sample size by reducing an infinite-dimensional and nonconvex adversarial problem to a finite-dimensional one. This reduction enables sharp regret analysis for a broad class of inventory policies, including classical ones. Our results show that the Kaplan-Meier policy is robust to censoring and that even minimal exploration can unlock near-uncensored performance guarantees, while policies that ignore censoring may suffer persistent performance losses. More broadly, our analysis clarifies how information constraints reshape the value of data and the sample complexity of learning in operational settings.
Beyond the specific application studied here, we view our optimization-based approach as a general tool for providing exact performance characterizations in data-driven operational problems. By avoiding reliance on asymptotic or concentration-based arguments, our framework opens the door to principled finite-sample analysis of a wide range of offline decision problems, including but not limited to inventory systems with censored information. Exploring these broader applications is an exciting direction for future research.

% \section{Acknowledgements}

% The authors used large language models (LLMs) for wording/stylistic suggestions, language editing, and coding assistance. All outputs were reviewed and validated by the authors. The mathematical, technical, and conceptual results were developed by the authors without reliance on LLMs. The authors assume full responsibility for the accuracy and content of the paper.

{\setstretch{1.0}
\bibliographystyle{agsm}
\bibliography{refs}}

@article{csorgo1983rate,
  author  = {Cs{\"o}rg{\H{o}}, Mikl{\'o}s and Horv{\'a}th, Lajos},
  title   = {The Rate of Strong Uniform Consistency for the Kaplan--Meier Estimator},
  journal = {Zeitschrift f{\"u}r Wahrscheinlichkeitstheorie und Verwandte Gebiete},
  volume  = {62},
  number  = {3},
  pages   = {411--426},
  year    = {1983}
}

@article{gill1983large,
  author  = {Gill, Richard D.},
  title   = {Large Sample Behaviour of the Product--Limit Estimator on the Whole Line},
  journal = {The Annals of Statistics},
  volume  = {11},
  number  = {1},
  pages   = {49--58},
  year    = {1983}
}

@article{bitouze1999dkm,
  author  = {Bitouz{\'e}, Denis and Laurent, Beatrice and Massart, Pascal},
  title   = {A {D}voretzky--{K}iefer--{W}olfowitz Type Inequality for the Kaplan--Meier Estimator},
  journal = {Annales de l'Institut Henri Poincar\'e, Probabilit\'es et Statistiques},
  volume  = {35},
  number  = {6},
  pages   = {735--763},
  year    = {1999}
}

@article{allouah2023optimal,
  title={Optimal pricing with a single point},
  author={Allouah, Amine and Bahamou, Achraf and Besbes, Omar},
  journal={Management Science},
  volume={69},
  number={10},
  pages={5866--5882},
  year={2023},
  publisher={INFORMS}
}

@article{bahamou2024fast,
  title={Fast revenue maximization},
  author={Bahamou, Achraf and Besbes, Omar and Mouchtaki, Omar},
  journal={arXiv preprint arXiv:2407.07316},
  year={2024}
}

@article{stoye2009minimax,
  title={Minimax regret treatment choice with finite samples},
  author={Stoye, J{\"o}rg},
  journal={Journal of Econometrics},
  volume={151},
  number={1},
  pages={70--81},
  year={2009},
  publisher={Elsevier}
}

@article{schlag2006eleven,
  title={ELEVEN-Tests needed for a Recommendation},
  author={Schlag, Karl H},
  year={2006},
  publisher={European University Institute}
}

@article{liu2023marginal,
  title   = {Marginal Value of One Data Point in Assortment Personalization},
  author  = {Liu, Mo and Cao, Junyu and Shen, Zuo-Jun Max},
  note    = {Working paper},
  year    = {2023},
  institution = {University of California, Berkeley and University of Texas at Austin},
}

@article{zhang2024more,
  title={More Data or Better Data? Impact of Costly Data Collection on the Newsvendor Problem},
  author={Zhang, Zijin and Ahn, Hyun-Soo and Baardman, Lennart},
  journal={Impact of Costly Data Collection on the Newsvendor Problem (September 06, 2024)},
  year={2024}
}

@article{hssaine2024data,
  title={The Data-Driven Censored Newsvendor Problem},
  author={Hssaine, Chamsi and Sinclair, Sean R},
  journal={arXiv preprint arXiv:2412.01763},
  year={2024}
}

@article{fan2022sample,
  title={Sample complexity of policy learning for inventory control with censored demand},
  author={Fan, Xiaoyu and Chen, Boxiao and Zhou, Zhengyuan},
  journal={Available at SSRN 4178567},
  year={2022}
}

@article{chen2020dynamic,
  title={Dynamic inventory control with stockout substitution and demand learning},
  author={Chen, Boxiao and Chao, Xiuli},
  journal={Management Science},
  volume={66},
  number={11},
  pages={5108--5127},
  year={2020},
  publisher={INFORMS}
}

@article{agrawal2022learning,
  title={Learning in Structured MDPs with Convex Cost Functions:: Improved Regret Bounds for Inventory Management},
  author={Agrawal, Shipra and Jia, Randy},
  journal={Operations Research},
  volume={70},
  number={3},
  pages={1646--1664},
  year={2022},
  publisher={INFORMS}
}

@article{ban2020confidence,
  title={Confidence intervals for data-driven inventory policies with demand censoring},
  author={Ban, Gah-Yi},
  journal={Operations Research},
  volume={68},
  number={2},
  pages={309--326},
  year={2020},
  publisher={INFORMS}
}

@article{zhang2020closing,
  title={Closing the gap: A learning algorithm for lost-sales inventory systems with lead times},
  author={Zhang, Huanan and Chao, Xiuli and Shi, Cong},
  journal={Management Science},
  volume={66},
  number={5},
  pages={1962--1980},
  year={2020},
  publisher={INFORMS}
}

@article{zhang2018perishable,
  title={Perishable inventory systems: Convexity results for base-stock policies and learning algorithms under censored demand},
  author={Zhang, Huanan and Chao, Xiuli and Shi, Cong},
  journal={Operations Research},
  volume={66},
  number={5},
  pages={1276--1286},
  year={2018},
  publisher={INFORMS}
}

@article{yuan2021marrying,
  title={Marrying stochastic gradient descent with bandits: Learning algorithms for inventory systems with fixed costs},
  author={Yuan, Hao and Luo, Qi and Shi, Cong},
  journal={Management Science},
  volume={67},
  number={10},
  pages={6089--6115},
  year={2021},
  publisher={Informs}
}

@article{shi2016nonparametric,
  title={Nonparametric data-driven algorithms for multiproduct inventory systems with censored demand},
  author={Shi, Cong and Chen, Weidong and Duenyas, Izak},
  journal={Operations Research},
  volume={64},
  number={2},
  pages={362--370},
  year={2016},
  publisher={INFORMS}
}

@article{natarajan2018asymmetry,
  title={Asymmetry and ambiguity in newsvendor models},
  author={Natarajan, Karthik and Sim, Melvyn and Uichanco, Joline},
  journal={Management Science},
  volume={64},
  number={7},
  pages={3146--3167},
  year={2018},
  publisher={INFORMS}
}

@article{burnetas2000adaptive,
  title={Adaptive ordering and pricing for perishable products},
  author={Burnetas, Apostolos N and Smith, Craig E},
  journal={Operations Research},
  volume={48},
  number={3},
  pages={436--443},
  year={2000},
  publisher={INFORMS}
}

@article{kunnumkal2008using,
  title={Using stochastic approximation methods to compute optimal base-stock levels in inventory control problems},
  author={Kunnumkal, Sumit and Topaloglu, Huseyin},
  journal={Operations Research},
  volume={56},
  number={3},
  pages={646--664},
  year={2008},
  publisher={INFORMS}
}

@article{chen2024survey,
  title={Survey of data-driven newsvendor: Unified analysis and spectrum of achievable regrets},
  author={Chen, Zhuoxin and Ma, Will},
  journal={arXiv preprint arXiv:2409.03505},
  year={2024}
}

@article{lin2022data,
  title={Data-driven newsvendor problem: Performance of the sample average approximation},
  author={Lin, Meichun and Huh, Woonghee Tim and Krishnan, Harish and Uichanco, Joline},
  journal={Operations Research},
  volume={70},
  number={4},
  pages={1996--2012},
  year={2022},
  publisher={INFORMS}
}

@article{bagnoli2005log,
  title={Log-concave probability and its applications},
  author={Bagnoli, Mark and Bergstrom, Ted},
  journal={Economic theory},
  volume={26},
  number={2},
  pages={445--469},
  year={2005},
  publisher={Springer}
}

@article{besbes2023contextual,
  title={From contextual data to newsvendor decisions: On the actual performance of data-driven algorithms},
  author={Besbes, Omar and Ma, Will and Mouchtaki, Omar},
  journal={arXiv preprint arXiv:2302.08424},
  year={2023}
}

@article{godfrey2001adaptive,
  author = {Godfrey, Gregory A and Powell, Warren B},
  journal = {Management Science},
  number = {8},
  pages = {1101--1112},
  publisher = {INFORMS},
  title = {An adaptive, distribution-free algorithm for the newsvendor problem with censored demands, with applications to inventory and distribution},
  volume = {47},
  year = {2001}}

@article{lugosi2024hardness,
  title={On the hardness of learning from censored and nonstationary demand},
  author={Lugosi, G{\'a}bor and Markakis, Mihalis G and Neu, Gergely},
  journal={INFORMS Journal on Optimization},
  volume={6},
  number={2},
  pages={63--83},
  year={2024},
  publisher={INFORMS}
}

@article{huh2009nonparametric,
  author = {Huh, Woonghee Tim and Rusmevichientong, Paat},
  journal = {Mathematics of Operations Research},
  number = {1},
  pages = {103--123},
  publisher = {INFORMS},
  title = {A nonparametric asymptotic analysis of inventory planning with censored demand},
  volume = {34},
  year = {2009}}

@article{huh2011adaptive,
  author = {Huh, Woonghee Tim and Levi, Retsef and Rusmevichientong, Paat and Orlin, James B},
  journal = {Operations Research},
  number = {4},
  pages = {929--941},
  publisher = {INFORMS},
  title = {Adaptive data-driven inventory control with censored demand based on Kaplan-Meier estimator},
  volume = {59},
  year = {2011}}

@article{besbes2013implications,
  author = {Besbes, Omar and Muharremoglu, Alp},
  journal = {Management Science},
  number = {6},
  pages = {1407--1424},
  publisher = {INFORMS},
  title = {On implications of demand censoring in the newsvendor problem},
  volume = {59},
  year = {2013}}

@article{perakis2008regret,
  author = {Perakis, Georgia and Roels, Guillaume},
  journal = {Operations Research},
  number = {1},
  pages = {188--203},
  publisher = {INFORMS},
  title = {Regret in the newsvendor model with partial information},
  volume = {56},
  year = {2008}}

@article{scarf1958min,
  author = {Scarf, Herbert},
  journal = {Studies in the mathematical theory of inventory and production},
  publisher = {Stanford Univ. Press},
  title = {A min-max solution of an inventory problem},
  year = {1958}}

@article{gallego1993distribution,
  author = {Gallego, Guillermo and Moon, Ilkyeong},
  journal = {Journal of the Operational Research Society},
  number = {8},
  pages = {825--834},
  publisher = {Taylor \& Francis},
  title = {The distribution free newsboy problem: review and extensions},
  volume = {44},
  year = {1993}}

@article{cheung2019sampling,
  title={Sampling-based approximation schemes for capacitated stochastic inventory control models},
  author={Cheung, Wang Chi and Simchi-Levi, David},
  journal={Mathematics of Operations Research},
  volume={44},
  number={2},
  pages={668--692},
  year={2019},
  publisher={INFORMS}
}

@article{besbes2023big,
  title={How big should your data really be? Data-driven newsvendor: Learning one sample at a time},
  author={Besbes, Omar and Mouchtaki, Omar},
  journal={Management Science},
  volume={69},
  number={10},
  pages={5848--5865},
  year={2023},
  publisher={INFORMS}
}

@article{levi2007approximation,
  author = {Levi, Retsef and P{\'a}l, Martin and Roundy, Robin O and Shmoys, David B},
  journal = {Mathematics of Operations Research},
  number = {2},
  pages = {284--302},
  publisher = {INFORMS},
  title = {Approximation algorithms for stochastic inventory control models},
  volume = {32},
  year = {2007}}

@article{levi2015data,
  title={The data-driven newsvendor problem: new bounds and insights},
  author={Levi, Retsef and Perakis, Georgia and Uichanco, Joline},
  journal={Operations Research},
  volume={63},
  number={6},
  pages={1294--1306},
  year={2015},
  publisher={INFORMS}
}

@article{huang2018making,
  title={Making the most of your samples},
  author={Huang, Zhiyi and Mansour, Yishay and Roughgarden, Tim},
  journal={SIAM Journal on Computing},
  volume={47},
  number={3},
  pages={651--674},
  year={2018},
  publisher={SIAM}
}

@article{allouah2022pricing,
author = {Allouah, Amine and Bahamou, Achraf and Besbes, Omar},
title = {Pricing with Samples},
journal = {Operations Research},
volume = {70},
number = {2},
pages = {1088-1104},
year = {2022}
}

@inproceedings{daskalakis2020more,
  title={More revenue from two samples via factor revealing SDPs},
  author={Daskalakis, Constantinos and Zampetakis, Manolis},
  booktitle={Proceedings of the 21st ACM Conference on Economics and Computation},
  pages={257--272},
  year={2020}
}

@inproceedings{babaioff2018two,
  title={Are Two (Samples) Really Better Than One?},
  author={Babaioff, Moshe and Gonczarowski, Yannai A and Mansour, Yishay and Moran, Shay},
  booktitle={Proceedings of the 2018 ACM Conference on Economics and Computation},
  pages={175--175},
  year={2018}
}

@inproceedings{fu2015randomization,
  title={Randomization beats second price as a prior-independent auction},
  author={Fu, Hu and Immorlica, Nicole and Lucier, Brendan and Strack, Philipp},
  booktitle={Proceedings of the Sixteenth ACM Conference on Economics and Computation},
  pages={323--323},
  year={2015}
}

@book{boyd2004convex,
  title={Convex optimization},
  author={Boyd, Stephen and Vandenberghe, Lieven},
  year={2004},
  publisher={Cambridge university press}
}

@article{jain2015demand,
  title={Demand estimation and ordering under censoring: Stock-out timing is (almost) all you need},
  author={Jain, Aditya and Rudi, Nils and Wang, Tong},
  journal={Operations Research},
  volume={63},
  number={1},
  pages={134--150},
  year={2015},
  publisher={Informs}
}

@article{kaplan1958nonparametric,
  title={Nonparametric estimation from incomplete observations},
  author={Kaplan, Edward L and Meier, Paul},
  journal={Journal of the American statistical association},
  volume={53},
  number={282},
  pages={457--481},
  year={1958},
  publisher={Taylor \& Francis}
}

\newpage
\appendix

\section{Proofs of Results in \Cref{sec:gen_reduction}}

\begin{proof}[\textbf{Proof of \Cref{lem:integral_form}}]
The structure of the argument follows (\citealp[Lemma 2]{besbes2023contextual}).
Fix $\pastI$ and a demand distribution $F$ on $[0,1]$, and recall that
$\pi_{\bm{n}}(\pastI,\bm{I})$
denote the (random) action induced by $\bm{D}=(D_1,\ldots,D_n)\sim F^n$.

For any deterministic $a\in[0,1]$, we can write the two newsvendor terms pointwise as

\begin{equation*}
 (a-D)^+ = \int_{0}^{a} \mathbbm{1}\{D \le z\}\,dz \quad \mbox{and} \quad (D-a)^+ = \int_{a}^{1} \mathbbm{1}\{D > z\}\,dz
\end{equation*}
Therefore, recalling $q=\cu/(\cu+\co)$ and $c(a,F)=\mathbb{E}_{D\sim F}[\co(a-D)^+ + \cu(D-a)^+]$, we obtain
\begin{equation*}
c(a,F)
= \co \int_{0}^{a} F(z)\,dz + \cu \int_{a}^{1} \big(1-F(z)\big)\,dz
= \cu\int_{0}^{1}\big(1-F(z)\big)\,dz + (\cu+\co)\int_{0}^{a}\big(F(z)-q\big)\,dz.
\end{equation*}
The first term does not depend on $a$. Since $F$ is nondecreasing, the map
$a \mapsto \int_{0}^{a}(F(z)-q)\,dz$ is convex and is minimized at any $q$-quantile of $F$; 
In particular, for
$a^\star := \inf\{z\in[0,1]: F(z)\ge q\}$ we have
\begin{equation*}
    \opt(F) = c(a^*,F) = \cu\int_{0}^{1}\big(1-F(z)\big)\,dz - (\cu + \co) \int_{0}^{1} \big(q-F(z)\big)^{+}\,dz,
\end{equation*}
where the equality uses that $(q-F(z))^{+}=q-F(z)$ for $z<a^\star$ and $(q-F(z))^{+}=0$ for $z\ge a^\star$.
Hence, for any $a\in[0,1]$,
\begin{align*}
R(a,F)
&= c(a,F)-\opt(F)\\
&= (\cu+\co)\left(\int_{0}^{a}\big(F(z)-q\big)\,dz + \int_{0}^{1}\big(q-F(z)\big)^{+}\,dz\right)\\
&= (\cu+\co)\int_{0}^{1}\Big[\mathbf{1}\{a>z\}\big(F(z)-q\big) + \big(q-F(z)\big)^{+}\Big]dz.
\end{align*}
Applying this identity with $a=\pi_{\bm{n}}(\pastI,\bm{I})$ and using Tonelli's theorem to exchange expectation and integration (the integrand is nonnegative) gives
\begin{equation*}
\mathbb{E}_{\bm{D}\sim F^n}\big[R(\pi_{\bm{n}}(\pastI,\bm{I}),F)\big]
= (\cu+\co)\int_{0}^{1}\Big[\Prob_{D_1,\ldots,D_n\sim F}(\pi_{\bm{n}}(\pastI,\bm{I})>z)\cdot\big(F(z)-q\big) + \big(q-F(z)\big)^{+}\Big]dz.
\end{equation*}
Finally, $\Prob(\pi_{\bm{n}}(\pastI,\bm{I})>z)=1-\Prob(\pi_{\bm{n}}(\pastI,\bm{I})\le z)$, which yields the claimed representation.
\end{proof}

\begin{proof}[\textbf{Proof of \Cref{thm:general_reduction}}]
Let $F \in \Delta \left( [0,1] \right)$. We have that, 
\begin{align*}
\mathbb{E}_{\bm{D}\sim F^n} \Big[ R \big(\pi_{\bm{n}}(\pastI,\bm{I}), F\big) \Big]
&\stackrel{(a)}{=} (\cu+\co) \int_{0}^{1} \Big[
\big(1 - \Prob_{D_1,\ldots,D_n \sim F}\big( \pi_{\bm{n}}(\pastI,\bm{I}) \le z \big) \big)\cdot \big(F(z) - q\big)
+ (q - F(z))^{+} \Big] dz\\ 
&\stackrel{(b)}{=} (\cu+\co) \sum_{k=0}^{K} \int_{x_{k}}^{x_{k+1}}\Big[
\big(1 - P_k^{\pi_{\bm{n}}}\big( F(\mathcal{X}_k), F(z) \big) \big)\cdot \big(F(z) - q\big)
+ (q - F(z))^{+} \Big] dz
\end{align*}
where (a) follows from \Cref{lem:integral_form} and (b) from the definition of a piecewise-separable policy. 
For each $k \in \{0,\ldots,K\}$ and for all $(\bm{u},v) \in [0,1]^{|\mathcal{X}_k|} \times [0,1]$, define the mapping 
\begin{equation*}
\Psi_k^{\pi_{\bm{n}}}(\bm{u},v) = \big(1 - P_k^{\pi_{\bm{n}}}(\bm{u},v)\big)\cdot (v-q) + (q - v)^{+},
\end{equation*}
and note that this mapping is continuous by continuity of $P_k^{\pi_{\bm{n}}}$.

We have just established that,
\begin{equation*}
\mathbb{E}_{\bm{D}\sim F^n} \Big[ R \big(\pi_{\bm{n}}(\pastI,\bm{I}), F\big) \Big] = (\cu+\co) \sum_{k=0}^{K} \int_{x_{k}}^{x_{k+1}} \Psi_k^{\pi_{\bm{n}}}(F(\mathcal{X}_k),F(z))dz.
\end{equation*}
We next show that,
\begin{equation}
\label{eq:reduction}
\sup_{F \in \Delta([0,1])} \sum_{k=0}^{K} \int_{x_{k}}^{x_{k+1}} \Psi_k^{\pi_{\bm{n}}}(F(\mathcal{X}_k),F(z))dz
= 
\sup_{\substack{
\bm{f},\bm{f^{+}} \in [0,1]^{K+1} \\
0 \le f_0 \le f_0^{+} \le f_1 \le f_1^{+} \le \cdots \le f_K^{+} \le 1
}}
\sum_{k=0}^{K} (x_{k+1}-x_k) \cdot \Psi_k^{\pi_{\bm{n}}}(\bm{f_{\mathcal{X}_k}}, f_k^{+}).
\end{equation}

\textit{Step 1:}
Let $\epsilon >0$ small enough.
For every $\bm{f},\bm{f^{+}} \in [0,1]^{K+1}$, 
such that $0 \leq f_0 \leq f_0^{+} \leq f_1 \leq f^{+}_1 \leq \ldots \leq f^{+}_{K} \leq 1$,  define the function $F_{\bm{f},\bm{f^{+}},\epsilon}$ such that for every $z \in [0,1]$,
\begin{equation*}
F_{\bm{f},\bm{f^{+}},\epsilon}(z) = \begin{cases}
f_{k} \quad \text{if $z \in [x_{k},x_{k} + \epsilon)$}\\
f^{+}_{k} \quad \text{if $z \in [x_{k} + \epsilon,x_{k+1})$}\\
1 \quad \text{if $z = 1$.}  
\end{cases}
\end{equation*}
Note that $F$ is a cdf on $[0,1]$ because it is non-decreasing from $[0,1]$ to $[0,1]$ (by definition of $\bm{f},\bm{f^{+}}$) and it is cad-lag.

Let $C > 0$ be an upper bound on the family of functions $(\Psi_k^{\pi_{\bm{n}}})_{k \in \{0,\ldots,K\}}$. By definition of $F_{\bm{f},\bm{f^{+}},\epsilon}$, we have
\begin{align*}
\sum_{k=0}^{K} \int_{x_{k}}^{x_{k+1}} \Psi_{k}^{\pi_{\bm{n}}}(F_{\bm{f},\bm{f^{+}},\epsilon}(\mathcal{X}_k),F_{\bm{f},\bm{f^{+}},\epsilon}(z))dz 
&=\sum_{k=0}^{K} \int_{x_{k}}^{x_{k}+\epsilon} \Psi_{k}^{\pi_{\bm{n}}}(\bm{f_{\mathcal{X}_k}},f_k)dz 
    + \int_{x_{k}+\epsilon}^{x_{k+1}} \Psi_{k}^{\pi_{\bm{n}}}(\bm{f_{\mathcal{X}_k}},f^{+}_k) dz\\
&= \sum_{k=0}^{K} (x_{k+1}-x_k) \cdot \Psi_k^{\pi_{\bm{n}}}(\bm{f_{\mathcal{X}_k}}, f_k^{+}) \\
 &\qquad  + \epsilon \cdot \sum_{k=0}^{K} \left( \Psi_{k}^{\pi_{\bm{n}}}(\bm{f_{\mathcal{X}_k}},f_k) - \Psi_{k}^{\pi_{\bm{n}}}(\bm{f_{\mathcal{X}_k}},f^{+}_k) \right) \\
&\geq  \sum_{k=0}^{K} (x_{k+1}-x_k) \cdot \Psi_k^{\pi_{\bm{n}}}(\bm{f_{\mathcal{X}_k}}, f_k^{+}) - (K+1) \cdot C \cdot \epsilon ,
 \end{align*}
where the last inequality holds  because for every $k \in \{0,\ldots,K\}$, the function $0 \leq \Psi_k^{\pi_{\bm{n}}} \leq C$.

Consequently, for every $\epsilon > 0$,
 \begin{align*}
\sup_{F \in \Delta([0,1])} \sum_{k=0}^{K} \int_{x_{k}}^{x_{k+1}} \Psi_k^{\pi_{\bm{n}}}(F(\mathcal{X}_k),F(z))dz
&\geq \sum_{k=0}^{K} \int_{x_{k}}^{x_{k+1}} \Psi_{k}^{\pi_{\bm{n}}}(F_{\bm{f},\bm{f^{+}},\epsilon}(\mathcal{X}_k),F_{\bm{f},\bm{f^{+}},\epsilon}(z))dz\\ 
&\geq  \sum_{k=0}^{K} (x_{k+1}-x_k) \cdot \Psi_k^{\pi_{\bm{n}}}(\bm{f_{\mathcal{X}_k}}, f_k^{+}) - (K+1) \cdot C \cdot \epsilon.
\end{align*}
By taking a supremum over $\bm{f},\bm{f^{+}}$ and sending $\epsilon$ to $0$, we obtain,
\begin{equation*}
     \label{eq:first_ineq}
\sup_{F \in \Delta([0,1])} \sum_{k=0}^{K} \int_{x_{k}}^{x_{k+1}} \Psi_k^{\pi_{\bm{n}}}(F(\mathcal{X}_k),F(z))dz \geq
\sup_{\substack{
\bm{f},\bm{f^{+}} \in [0,1]^{K+1} \\
0 \le f_0 \le f_0^{+} \le f_1 \le f_1^{+} \le \cdots \le f_K^{+} \le 1
}}
\sum_{k=0}^{K} (x_{k+1}-x_k) \cdot \Psi_k^{\pi_{\bm{n}}}(\bm{f_{\mathcal{X}_k}}, f_k^{+}).
\end{equation*}

\textit{Step 2:} To prove the reverse inequality, we note that,
\begin{align}
    \sup_{F \in \Delta([0,1])} \sum_{k=0}^{K} \int_{x_{k}}^{x_{k+1}} \Psi_k^{\pi_{\bm{n}}}(F(\mathcal{X}_k),F(z))dz &= 
    \sup_{\substack{ \bm{f} \in [0,1]^{K+1} \\ 0 \le f_0 \le  f_1 \le \cdots \le f_K \le 1 }} 
    \sup_{\substack{ F \in \Delta([0,1]) \\ F(x_k) = f_{k} \, \text{for all} \, k }} \sum_{k=0}^{K} \int_{x_{k}}^{x_{k+1}} \Psi_k^{\pi_{\bm{n}}}(F(\mathcal{X}_k),F(z))dz  \nonumber  \\
    &= \sup_{\substack{ \bm{f} \in [0,1]^{K+1} \\ 0 \le f_0 \le  f_1 \le \cdots \le f_K \le 1 }} 
    \sup_{\substack{ F \in \Delta([0,1]) \\ F(x_k) = f_k \, \text{for all} \, k }} \sum_{k=0}^{K} \int_{x_{k}}^{x_{k+1}} \Psi_k^{\pi_{\bm{n}}}(\bm{f_{\mathcal{X}_k}},F(z))dz. \label{eq:sup_decomposition}
\end{align}
Let $\bm{f} \in[0,1]^{K+1}$ be a non decreasing vector, and let $F \in \Delta ([0,1])$ such that $F(x_k) = f_k$ for every $k \in \{0,\ldots,K\}$. Fix $k \in \{0,\ldots,K\}$, and note that for every $z \in [x_{k},x_{k+1})$ (or $z \in [x_{K},x_{K+1}]$ if $k = K$), we have
\begin{equation*}
\label{eq:pointwise_bound}
\Psi_k^{\pi_{\bm{n}}}(\bm{f_{\mathcal{X}_k}},F(z)) \leq \sup_{f_{k}^{+} \in [F(x_{k}),F(x_{k+1})]} \Psi_{k}^{\pi_{\bm{n}}}(\bm{f_{\mathcal{X}_k}},f_{k}^{+}) = \sup_{f^{+}_{k} \in [f_k,f_{k+1}]} \Psi_{k}^{\pi_{\bm{n}}}(\bm{f_{\mathcal{X}_k}},f_{k}^{+}),
\end{equation*}
where the inequality holds because $F$ is non-decreasing hence $F(z) \in [F(x_{k}),F(x_{k+1})]$. 

This implies that for every non-decreasing vector $\bm{f} \in[0,1]^{K+1}$, 
\begin{align*}
  \sup_{\substack{ F \in \Delta([0,1]) \\ F(x_j) = f_j \text{ for all }  j \in \{0,\ldots,K\} }} \sum_{k=0}^{K+1}\int_{x_{k}}^{x_{k+1}} \Psi_k^{\pi_{\bm{n}}}(\bm{f_{\mathcal{X}_k}},F(z))dz
  &\leq \sum_{k=0}^{K+1}   \sup_{\substack{ F \in \Delta([0,1]) \\ F(x_j) = f_j \text{ for all }  j \in \{0,\ldots,K\} }}  \int_{x_{k}}^{x_{k+1}} \Psi_k^{\pi_{\bm{n}}}(\bm{f_{\mathcal{X}_k}},F(z))dz \\
  &\leq \sum_{k=0}^{K+1} \sup_{f^{+}_{k} \in [f_k,f_{k+1}]}  (x_{k+1} - x_{k}) \cdot \Psi_{k}^{\pi_{\bm{n}}}(\bm{f_{\mathcal{X}_k}},f_{k}^{+})\\
  &=  \sup_{\substack{ \bm{f^{+}} \in [0,1]^{K+1} \\ f^{+}_k \in [f_k,f_{k+1}] \, \text{for all} \, k \in \{0,\ldots,K\} }}  \sum_{k=0}^{K+1} (x_{k+1} - x_{k}) \cdot \Psi_{k}^{\pi_{\bm{n}}}(\bm{f_{\mathcal{X}_k}},f_{k}^{+}).
\end{align*}
By combining this inequality to \eqref{eq:sup_decomposition}, we conclude that,
\begin{align*}
        \sup_{F \in \Delta([0,1])} \sum_{k=0}^{K} \int_{x_{k}}^{x_{k+1}} \Psi_k^{\pi_{\bm{n}}}(F(\mathcal{X}_k),F(z))dz 
        &\leq 
        \sup_{\substack{ \bm{f} \in [0,1]^{K+1} \\ 0 \le f_0 \le  f_1 \le \cdots \le f_K \le 1 }} 
        \sup_{\substack{ \bm{f^{+}} \in [0,1]^{K+1} \\ f^{+}_k \in [f_k,f_{k+1}] \, \text{for all} \, k }}  \sum_{k=0}^{K+1} (x_{k+1} - x_{k}) \cdot \Psi_{k}^{\pi_{\bm{n}}}(\bm{f_{\mathcal{X}_k}},f_{k}^{+})\\
        &= \sup_{\substack{
\bm{f},\bm{f^{+}} \in [0,1]^{K+1} \\
0 \le f_0 \le f_0^{+} \le f_1 \le f_1^{+} \le \cdots \le f_K^{+} \le 1
}}
\sum_{k=0}^{K} (x_{k+1}-x_k) \cdot \Psi_k^{\pi_{\bm{n}}}(\bm{f_{\mathcal{X}_k}}, f_k^{+}).
\end{align*}
Combining the two steps, we have established that \eqref{eq:reduction} holds.
\end{proof}

\section{Proofs for Results in \Cref{sec:SAA_reduction}}

\begin{proof}[\textbf{Proof of \Cref{lem:action_dist_BSAA}}]
Fix a $k \in [K]$ and $z \in [x_k, x_{k+1})$. Define
\begin{align*}
    \mathcal S_{k+1}(z)\ \coloneqq\ \left| \left\{ (\ell, i) \mid \ell \geq k+1,\ S^{(\ell)}_i \leq z \right\} \right|
\end{align*}
to be the number of historical sales samples larger than $z$ which were observed for some historical inventory $x_\ell \geq x_{k+1}$. Then, $\mathcal S_{k+1}(z) \leq \mathcal S_{k+1}(1) = \sum_{\ell=k+1}^K n_\ell = n - \sigma_k$.

As $\hat F_{\mathrm BSAA}$ is the empirical distribution of the sales data $\{\ss{k}_i\}$, note that
\begin{align*}
    \inf\left\{ u \in [0,1] \mid \hat F_{\mathrm BSAA}(u) \geq q \right\} \leq z \qquad \iff \qquad \left| \left\{ (\ell,i) \mid \ss{\ell}_i \leq z \right\}\right| \geq \ceil{qn}\,.
\end{align*}

Next, observe that $\ss{\ell}_i \leq x_\ell \leq x_k$ for all $\ell \leq k$ and $i \in [n_\ell]$. Consequently, all of the $\sigma_k = \sum_{\ell=1}^k n_\ell$ samples corresponding to the inventory levels $\{x_1, \dots \leq x_k\}$ are \emph{always} less than $z$. Therefore, we must have 
\begin{align}\label{eq:inter_1_action_dist_BSAA}
    \BSAA_{\bm n}(\bm x, \bm I) \leq z \qquad \iff \qquad \mathcal S_{k+1}(z) \geq \ceil{qn} - \sum_{\ell = 1}^k n_\ell\,.
\end{align}

If $\sigma_k > \ceil{qn}$, then we always have $\BSAA_{\bm n}(\bm x, \bm I) \leq z$ and the lemma holds trivially. Suppose $\sigma_k \leq \ceil{qn}$. We can now evaluate the desired probability:
\begin{align*}
     \Prob_{\bm{D} \sim F^n}\big( \pi_{\bm{n}}(\pastI,\bm{I}) \le z \big) &=  \Prob_{\bm{D} \sim F^n}\left( \mathcal S_{k+1}(z) \geq \ceil{qn} - \sum_{\ell = 1}^k n_\ell \right)\\
     &= \sum_{j = \ceil{qn} - \sigma_k}^{n - \sigma_k}\ \Prob_{\bm{D} \sim F^n}\left( \mathcal S_{k+1}(z) = j \right)\\
     &= \sum_{j = \ceil{qn} - \sigma_k}^{n - \sigma_k}\ \binom{n-\sigma_k}{j} \cdot F(z)^j \cdot (1 - F(z))^j\,,
\end{align*}
where the last equality follows from the fact that $\Prob(\ss{\ell}_i \leq z) = \Prob(D^{(\ell)}_i \leq z)$ because $z < x_\ell$ for all $\ell \geq k+1$ and therefore
\begin{align*}
    \mathds{1}(\ss{\ell}_i \leq z) = \mathds{1}(\D^{(\ell)}_i \leq z \text{ or } x_\ell \leq z) = \mathds{1}(\D^{(\ell)}_i \leq z)\,.
\end{align*}
Plugging in the definition of the Bernstein polynomial $B_{\lceil q n \rceil - \sigma_k,   n - \sigma_k}$ completes the proof.
\end{proof}

\begin{proof}[\textbf{Proof of \Cref{lem:BSAA-log-concavity}}]
Fix a $k \in \{0,1,\dots,K\}$ and set $\sigma_k = \sum_{\ell=1}^k n_k < \ceil{qn}$. 

We first prove part 1; part 2 follows analogously.   Then, for every $v \leq q$, we get
\begin{align*}
        \psibsaa_k(v) &= \big( 1 - B_{\lceil q n \rceil - \sigma_k,   n - \sigma_k} \big(v\big) \big)\cdot (v-q)
        + (q - v)^{+}\\
        &= (q - v) \cdot B_{\lceil q n \rceil - \sigma_k,   n - \sigma_k} \big(v\big)\\
        &= (q - v) \cdot \left(1 - \underbrace{\sum_{j=0}^{\ceil{qn} - \sigma_k - 1} \binom{n - \sigma_k}{j} \cdot v^j \cdot (1 - v)^{n-\sigma_k-j}}_{H(v)} \right)
\end{align*}
Note that $H(v)$ is the probability of a $\text{Binomial}(n - \sigma_k, v)$ random variable being less than or equal to $\ceil{qn} - \sigma_k - 1$. Therefore, we get that $H(v) = 1 - \Prob(X \leq v)$ for $X \sim \text{Beta}(\ceil{qn} - \sigma_k, n - \ceil{qn} + 1)$, i.e., $1 - H$ is the CDF of the $\text{Beta}(\ceil{qn} - \sigma_k, n - \ceil{qn} + 1)$ distribution. Thus, we can use the fact that the CDF of a $\text{Beta}(\alpha, \beta)$-distribution is log concave whenever $\alpha \geq 1$ and $\beta \geq 1$ \citep{bagnoli2005log}. Furthermore, it is easy to see that $v \mapsto (q - v)$ is strictly log concave. Combining these facts allows us to conclude that $\psibsaa_k$ is strictly log-concave on $[0,q]$, because it is the product of a strictly log-concave function $v \mapsto (q - v)$ and the log-concave function $\psibsaa_k$ (which is positive on $(0,q)$).

For part 2, when $v \geq q$, we get
\begin{align*}
        \psibsaa_k(v) &= \big(1 - B_{\lceil q n \rceil - \sigma_k,   n - \sigma_k} \big(v\big) \big)\cdot (v-q)
        + (q - v)^{+}\\
        &= (v - q) \cdot (1 - B_{\lceil q n \rceil - \sigma_k,   n - \sigma_k} \big(v\big))\\
        &= (v - q) \cdot \sum_{j=0}^{\ceil{qn} - \sigma_k - 1} \binom{n - \sigma_k}{j} \cdot v^j \cdot (1 - v)^{n-\sigma_k-j}
\end{align*}
The complementary CDF $(1 - H)$ of a $\text{Beta}(\alpha, \beta)$-distribution is also log concave whenever $\alpha \geq 1$ and $\beta \geq 1$ \citep{bagnoli2005log}. Hence, the argument from part 1 implies $\psibsaa_k$ is strictly log concave on $[q,1]$.
\end{proof}

\begin{proof}[\textbf{Proof of \Cref{lem:montonic_modes_SAA}}]

\noindent \textbf{Case 1:} $v \in [0,q]$ and the sequence $(v^{-}_i)_{k \in \{0,\ldots,K\}}$.

If $q = 0$, the monotonicity of $(v^{-}_i)_{k \in \{0,\ldots,K\}}$ is trivial; assume $q > 0$. Consider $k \in \{0,\ldots,K\}$. We can focus on $k$ such that $\sigma_k \coloneqq \sum_{\ell=1}^k < \ceil{qn}$, because otherwise the unique maximizer of $\psibsaa_k$ is $v_k^- = 0$.

Let $H_{\alpha, \beta}$ denote the CDF of the $\text{Beta}(\alpha, \beta)$ distribution. Recall that in the proof of \Cref{lem:BSAA-log-concavity} we showed
\begin{align*}
    \psibsaa_k = (q - v) \cdot H_{\ceil{qn} - \sigma_k, n - \ceil{qn} + 1}(v)
\end{align*}
As $\psibsaa_k$ is strictly log concave (by \Cref{lem:BSAA-log-concavity}) and strictly positive on $(0,q)$, it must have a unique maximizer $v_k^- \in (0,q)$. Moreover, the maximizer $v_k^-$ must satisfy the following first order optimality condition
\begin{align*}
    \phi^-_k(v^-_k) = 0 \qquad \text{where} \qquad \phi^-_k(v) \coloneqq (\log \psibsaa_k(v))' = \frac{H'_{\ceil{qn} - \sigma_k, n - \ceil{qn} + 1}(v)}{H_{\ceil{qn} - \sigma_k, n - \ceil{qn} + 1}(v)} - \frac{1}{q-v} \,.
\end{align*}
    
Next, we argue that $\phi^{-}_{k-1}(v) > \phi^{-}_k(v)$ for every $v \in (0,q)$ and $k \geq 1$ with $\sigma_k < \ceil{qn}$. Observe that, as $H'_{\alpha,\beta}$ is the density of a $\text{Beta}(\alpha,\beta)$ distribution, there exists a constant $C > 0$ which only depends on $(\alpha, \beta)$, such that
    \begin{equation*}
            C \cdot v^{n_{k}} \cdot H'_{\ceil{qn} - \sigma_{k}, n - \ceil{qn} + 1}(v) = H'_{\ceil{qn} - \sigma_{k-1}, n - \ceil{qn} + 1}(v)
    \end{equation*}
Therefore, for $v \in (0,q)$, we get
\begin{align*}
    H_{\ceil{qn} - \sigma_{k-1}, n - \ceil{qn} + 1}(v) &= \int_0^v H'_{\ceil{qn} - \sigma_{k-1}, n - \ceil{qn} + 1}(t) dt \\
    &= C \cdot \int_0^v   t^{n_k} \cdot H'_{\ceil{qn} - \sigma_{k}, n - \ceil{qn} + 1}(t) \cdot dt\\
    &< C \cdot v^{n_{k}} \cdot \int_0^v H'_{\ceil{qn} - \sigma_{k}, n - \ceil{qn} + 1}(t) dt\\
    &= C \cdot v^{n_{k}} \cdot H_{\ceil{qn} - \sigma_{k}, n - \ceil{qn} + 1}(v)\,.
\end{align*}
Hence, the definition of $\phi_i^-$ implies $\phi^{-}_{k-1}(v) > \phi^{-}_k(v)$ for every $v \in (0,q)$. Additionally, note that $\phi_k^-$ is non-increasing because it is the derivative of a concave function. Combining these facts implies $v_{k-1}^- \geq v_k^-$: for contradiction, suppose $v_{k-1}^- < v_k^-$ and note
\begin{align*}
    0 = \phi_{k-1}^-(v_{k-1}^-) > \phi_k^-(v_{k-1}^-) \geq \phi_k^-(v_{k}^-) = 0,
\end{align*}
which is a contradiction. Therefore, the sequence $(v^-_k)_k$ is non-increasing as desired.

\noindent \textbf{Case 2:} $v \in [q,1]$ and the sequence $(v^{+}_i)_{k \in \{0,\ldots,K\}}$.

If $q = 1$, the monotonicity of $(v^{+}_i)_{k \in \{0,\ldots,K\}}$ is trivial; assume $q < 1$. Consider $k \in \{0,\ldots,K\}$. We can focus on $k$ such that $\sigma_k \coloneqq \sum_{\ell=1}^k < \ceil{qn}$, because otherwise $\psibsaa_k = 0$ on $[q,1]$ and $v_k^- = q$.

Let $\bar H_{\alpha, \beta}$ denote the complementary CDF of the $\text{Beta}(\alpha, \beta)$ distribution. Recall that in the proof of \Cref{lem:BSAA-log-concavity} we showed
\begin{align*}
    \psibsaa_k = (v - q) \cdot \bar H_{\ceil{qn} - \sigma_k, n - \ceil{qn} + 1}(v)
\end{align*}
As $\psibsaa_k$ is strictly log concave (by \Cref{lem:BSAA-log-concavity}) and strictly positive on $(q,1)$, it must have a unique maximizer $v_k^+ \in (q,1)$. Moreover, the maximizer $v_k^+$ must satisfy the following first order optimality condition
\begin{align*}
    \phi^+_k(v^+_k) = 0 \qquad \text{where} \qquad \phi^+_k(v) \coloneqq (\log \psibsaa_k(v))' = - \frac{ H'_{\ceil{qn} - \sigma_k, n - \ceil{qn} + 1}(v)}{\bar H_{\ceil{qn} - \sigma_k, n - \ceil{qn} + 1}(v)} + \frac{1}{v-q} \,.
\end{align*}
    
Next, we argue that $\phi^{+}_{k-1}(v) > \phi^{+}_k(v)$ for every $v \in (q,1)$ and $k \geq 1$ with $\sigma_k < \ceil{qn}$. As $H'_{\alpha,\beta}$ is the density of a $\text{Beta}(\alpha,\beta)$ distribution, there exists a constant $C > 0$ which only depends on $(\alpha, \beta)$, such that
    \begin{equation*}
            C \cdot v^{n_{k}} \cdot H'_{\ceil{qn} - \sigma_{k}, n - \ceil{qn} + 1}(v) = H'_{\ceil{qn} - \sigma_{k-1}, n - \ceil{qn} + 1}(v)
    \end{equation*}
Therefore, for $v \in (q,1)$, we get
\begin{align*}
    \bar H_{\ceil{qn} - \sigma_{k-1}, n - \ceil{qn} + 1}(v) &= \int_v^1 H'_{\ceil{qn} - \sigma_{k-1}, n - \ceil{qn} + 1}(t) dt \\
    &= C \cdot \int_v^1   t^{n_k} \cdot H'_{\ceil{qn} - \sigma_{k}, n - \ceil{qn} + 1}(t) \cdot dt\\
    &> C \cdot v^{n_{k}} \cdot \int_v^1 H'_{\ceil{qn} - \sigma_{k}, n - \ceil{qn} + 1}(t) dt\\
    &= C \cdot v^{n_{k}} \cdot \bar H_{\ceil{qn} - \sigma_{k}, n - \ceil{qn} + 1}(v)\,.
\end{align*}
Hence, the definition of $\phi_i^+$ implies $\phi^{+}_{k-1}(v) > \phi^{+}_k(v)$ for every $v \in (q,1)$. Additionally, note that $\phi_k^-$ is non-increasing because it is the derivative of a concave function. Combining these facts implies $v_{k-1}^+ \geq v_k^+$: for contradiction, suppose $v_{k-1}^+ < v_k^+$ and note
\begin{align*}
    0 = \phi_{k-1}^+(v_{k-1}^+) > \phi_k^+(v_{k-1}^+) \geq \phi_k^+(v_{k}^+) = 0,
\end{align*}
which is a contradiction. Therefore, the sequence $(v^+_k)_k$ is non-increasing as desired.
\end{proof}

\begin{proof}[\textbf{Proof of \Cref{lem:reduction_1D_unimodal}}]
Let $\mathcal V := \{\bm v\in C^N:\ v_1\le\cdots\le v_N\}$. Since $C^N$ is compact and $\mathcal V$
is closed, $\mathcal V$ is compact by the Bolazno-Weierstrass Theorem.  Furthermore, the objective
\[
F(\bm v):=\sum_{i=1}^N g_i(v_i)
\]
is continuous, hence it attains a maximum on $\mathcal V$ by Weierstrass Extreme Value Theorem. Let
\[
\mathcal V^* := \argmax_{\bm v\in \mathcal V} F(\bm v),
\]
be the set of optimal solutions, which is nonempty and compact.

Fix $i$ and $o_i^*\in\argmax_K g_i$. Unimodality of $g_i$ implies
\begin{align}
x\le y\le o_i^* &\implies g_i(x)\le g_i(y), \label{eq:mono_left}\\
o_i^*\le x\le y &\implies g_i(x)\ge g_i(y). \label{eq:mono_right}\,.
\end{align}

\textbf{Tie–break among optimizers.}
Define the continuous functional
\[
D(\bm v):=\sum_{i=1}^N |v_i-o_i^*|.
\]
Since $\mathcal V^*$ is compact, there exists $\bm v^*\in\mathcal V^*$ minimizing $D$ over $\mathcal V^*$.

We claim that $\bm v^*$ must be constant, i.e. $v_1^*=\cdots=v_N^*$. Assume for contradiction that $\bm v^*$ is not constant. Then there exists an index $i$ such that
\[
v_i^*<v_{i+1}^*.
\]
We first note that either $v_i^*<o_i^*$ or $v_{i+1}^*>o_{i+1}^*$. Indeed, if $v_i^*\ge o_i^*$ then
\[
v_{i+1}^*>v_i^*\ge o_i^*\ge o_{i+1}^*,
\]
so $v_{i+1}^*>o_{i+1}^*$. We now treat the two cases.

\medskip
\noindent \textbf{Case 1: $v_i^*<o_i^*$.}
Set
\[
t:=\min\{o_i^*,\,v_{i+1}^*\}\in C,
\qquad
\tilde v_j :=
\begin{cases}
v_j^* & j\neq i,\\
t & j=i.
\end{cases}
\]
Then $\tilde{\bm v}\in\mathcal V$ because $v_{i-1}^*\le v_i^*\le t\le v_{i+1}^*$ and all other
inequalities are unchanged.

Moreover, $v_i^*\le t\le o_i^*$, so by \eqref{eq:mono_left},
\[
g_i(t)\ge g_i(v_i^*),
\]
hence
\[
F(\tilde{\bm v}) \;=\; F(\bm v^*) - g_i(v_i^*) + g_i(t)\;\ge\; F(\bm v^*).
\]
Since $\bm v^*$ already maximizes $F$ on $\mathcal V$, we have $F(\tilde{\bm v})=F(\bm v^*)$, so
$\tilde{\bm v}\in\mathcal V^*$.

Finally, because $v_i^*<t\le o_i^*$, we get
\[
|t-o_i^*| = o_i^*-t \;<\; o_i^*-v_i^* = |v_i^*-o_i^*|.
\]
All other coordinates are unchanged, so $D(\tilde{\bm v})<D(\bm v^*)$, contradicting the choice of
$\bm v^*$ as a minimizer of $D$ over $\mathcal V^*$.

\medskip

\noindent \textbf{Case 2: $v_{i+1}^*>o_{i+1}^*$.}
Set
\[
t:=\max\{o_{i+1}^*,\,v_i^*\}\in C,
\qquad
\tilde v_j :=
\begin{cases}
v_j^* & j\neq i+1,\\
t & j=i+1.
\end{cases}
\]
Then $\tilde{\bm v}\in\mathcal V$ because $v_i^*\le t\le v_{i+1}^*\le v_{i+2}^*$ (when $i+2\le N$;
the endpoint cases are trivial).

Also, $o_{i+1}^*\le t\le v_{i+1}^*$, so by \eqref{eq:mono_right},
\[
g_{i+1}(t)\ge g_{i+1}(v_{i+1}^*),
\]
and the same argument shows $\tilde{\bm v}\in\mathcal V^*$.

Since $o_{i+1}^*\le t < v_{i+1}^*$, we have
\[
|t-o_{i+1}^*| = t-o_{i+1}^* \;<\; v_{i+1}^*-o_{i+1}^* = |v_{i+1}^*-o_{i+1}^*|,
\]
so again $D(\tilde{\bm v})<D(\bm v^*)$, a contradiction.

Both cases contradict the minimality of $D(\bm v^*)$, so no such $i$ can exist. Therefore
$v_1^*=\cdots=v_N^*$ and the lemma holds.
\end{proof}

\begin{proof}[\textbf{Proof of \Cref{thm:BSAA}}]
    First, we apply \Cref{cor:BSAA-piecewise-linear} to write
    \begin{equation*}
        \mathsf{Reg}(\BSAA_{\bm{n}}  \mid \pastI)
        =
        \sup_{\substack{
        \bm{f}^{+} \in [0,1]^{K+1} \\
        0 \le f_0^{+} \le f_1^{+} \le \cdots \le f_K^{+} \le 1
        }}
        \sum_{k=0}^{K} (x_{k+1}-x_k) \cdot \psibsaa_k(f_k^{+})\,,
    \end{equation*}

    Define $\bar k = \max\{k \in [K] \mid \sum_{\ell=1}^k n_\ell < \ceil{qn}\}$. Then, the definition of $\psibsaa$ (as given in \Cref{cor:BSAA-piecewise-linear}, implies that $\psibsaa_k(v) = (q - v)^+$ for all $k > \bar k$.

    Next, note that \Cref{lem:BSAA-log-concavity} implies that $\psibsaa_k$ is strictly log concave on both $[0,q]$ and $[q,1]$ for all $k \leq k^*$. As log-concave functions are quasiconcave, $\psibsaa_k$ is unimodal on both $[0,q]$ and $[q,1]$ for all $0 \leq k \leq K$ (e.g., see Section~3.4.2 of \citealt{boyd2004convex}). Moreover, \Cref{lem:montonic_modes_SAA} implies that there exist non-increasing sequences $(v_k^-)_i$ and $(v_k^+)_i$ such that
    \begin{align*}
        v_k^- \in \argmax_{v \in [0,q]}\ \psibsaa_k(v) \qquad \text{and} \qquad v_k^+ \in \argmax_{v \in [q,1]}\ \psibsaa_k(v)\,. 
    \end{align*}
    Therefore, \Cref{lem:reduction_1D_unimodal} applies with $g_k(t) = (x_{k+1} - x_k) \cdot \psibsaa_k(t)$.

    Now, consider any feasible solution $\bm{f}^{+} \in [0,1]^{K+1}$ with the index $0 \leq i \leq K$ defined to satisfy
    \begin{align*}
        0 \le f_0^{+} \le \dots \leq f_i \leq q < f_{i+1} \le \dots \leq f_K^{+} \le 1\,.
    \end{align*}
    In other words, $f_k^+ \in [0,1]$ for all $k \leq i$ and $f_k^+ \in [q,1]$ for all $k > i$. Thus, by \Cref{lem:reduction_1D_unimodal}, there exist $v \in [0,q]$ and $w \in [q,1]$ such that
    \begin{align*}
        \sum_{k=0}^{K} (x_{k+1}-x_k) \cdot \psibsaa_k(f_k^{+}) \ \leq \ \sum_{k=0}^i (x_{k+1}  - x_k) \cdot \psibsaa_k(v) + \sum_{k=i+1}^K (x_{k+1}  - x_k) \cdot \psibsaa_k(w)\,.
    \end{align*}
    As this is true for all feasible $\bm f^+$, we get
    \begin{align*}
        \mathsf{Reg}(\BSAA_{\bm{n}}  \mid \pastI)
        =
        \max_{i \in \{0, \dots, K\}} \sup_{\substack{
       v \in [0,q], \\ w \in [q,1]}} \quad
        \sum_{k=0}^i (x_{k+1}  - x_k) \cdot \psibsaa_k(v) + \sum_{k=i+1}^K (x_{k+1}  - x_k) \cdot \psibsaa_k(w)\,,
    \end{align*}
    thereby establishing the theorem.
\end{proof}

\section{Proof of Results in \Cref{sec:KM_reduction}}

\begin{proof}[\textbf{Proof of \Cref{lem:KM_piecewise}}]
Fix $k\in\{0,1,\ldots,K\}$ and $z\in[x_k,x_{k+1})$, and let $n\coloneqq \sum_{\ell=1}^K n_\ell$ be the total number of
historical samples.

Recall the definition of the KM estimator from \Cref{sec:model}. Let $Y_1\le \cdots \le Y_n$ denote the sorted list of
sales values $\{\ss{\ell}_i : \ell\in[K],\, i\in[n_\ell]\}$, where ties are broken by placing uncensored observations before
censored ones. Let $\zeta_i\in\{0,1\}$ be the corresponding uncensoring indicator, i.e., $\zeta_i=1$ if and only if $Y_i$
comes from some $(\ell,i)$ with $\delta_i^{(\ell)}=1$. For $z<1$,
\begin{equation}\label{eq:KM_def_local}
\hat F_{KM}(z)
=
1-\prod_{i:\,Y_i\le z}\left(\frac{n-i}{n-i+1}\right)^{\zeta_i}.
\end{equation}

By construction, $\hat F_{KM}(\cdot)$ is nondecreasing in $z$ (the index set $\{i:Y_i\le z\}$ expands with $z$, and each new
factor in \eqref{eq:KM_def_local} lies in $[0,1]$). Therefore,
\begin{equation}\label{eq:KM_quantile_event}
\KM_{\bm n}(\pastI,\bm I)\le z
 \iff
\hat F_{KM}(z)\ge q.
\end{equation}
If $z=1$, then $\hat F_{KM}(1)=1$ by definition and \eqref{eq:KM_quantile_event} implies
$\Prob(\KM_{\bm n}(\pastI,\bm I)\le 1)=1$.
Henceforth assume $z<1$.

\paragraph{Step 0. The case $k=0$.}
If $k=0$, then $z\in[x_0,x_1)=[0,x_1)$ and thus $x_\ell>z$ for every $\ell\in[K]$.
Consequently, any observation with $Y_i\le z$ must be uncensored (indeed $Y_i=\ss{\ell}_j=\min\{D_j^{(\ell)},x_\ell\}\le z$
forces $D_j^{(\ell)}\le z < x_\ell$).
Define
\begin{equation*}
N(z) = \sum_{\ell=1}^K\sum_{i=1}^{n_\ell}\mathds{1}\!\big(D_i^{(\ell)}\le z\big).
\end{equation*}
Then the first $N(z)$ order statistics satisfy $Y_i\le z$ and $\zeta_i=1$, and \eqref{eq:KM_def_local} yields
\begin{equation*}
\hat F_{KM}(z)
=
1-\prod_{i=1}^{N(z)}\frac{n-i}{n-i+1}
=
1 - \frac{n-N(z)}{n}
=
\frac{N(z)}{n}.
\end{equation*}
Combining with \eqref{eq:KM_quantile_event},
\begin{equation*}
\Prob\big(\KM_{\bm n}(\pastI,\bm I)\le z\big)
=
\Prob\big(N(z)\ge qn\big).
\end{equation*}
Since the $D_i^{(\ell)}$ are i.i.d.\ from $F$, we have $N(z)\sim \mathrm{Binomial}(n,F(z))$, and therefore
\begin{equation}\label{eq:P0_def}
\Prob\big(\KM_{\bm n}(\pastI,\bm I)\le z\big)
=
\sum_{m=\lceil qn\rceil}^{n}\binom{n}{m}\,F(z)^m\big(1-F(z)\big)^{n-m}.
\end{equation}
The right-hand side is a polynomial in $F(z)$ and hence continuous in $F(z)$.
Thus KM satisfies the piecewise-separable condition for $k=0$ with $\mathcal{X}_0=\emptyset$ and with $P_0^{\KM_{\bm n}}$ given by the
(polynomial) right-hand side of \eqref{eq:P0_def}.

\paragraph{Step 1. Fixing $k\ge 1$: a partition of $[0,z]$ and the relevant count vectors.}
Assume now that $k\ge 1$.
Define the disjoint sets
\begin{equation*}
\A_1 \coloneqq [0,x_1],
\qquad
\A_j \coloneqq (x_{j-1},x_j]\ \ \text{for}\ \ j=2,\ldots,k,
\qquad
\A_{k+1}\coloneqq (x_k,z].
\end{equation*}
(If $z=x_k$, then $\A_{k+1}=\emptyset$) These sets form a partition of $[0,z]$.

Let $L\coloneqq \{1,\ldots,k\}$ and $R\coloneqq \{k+1,\ldots,K\}$, and let $n_R\coloneqq \sum_{\ell\in R}n_\ell$
(with the convention $n_R=0$ if $R=\emptyset$).
For each $\ell\in L$, define the count vector $\bm M^{(\ell)}=(M^{(\ell)}_0,M^{(\ell)}_1,\ldots,M^{(\ell)}_\ell)\in
\mathbb{N}^{\ell+1}$ by
\begin{equation}\label{eq:M_left_def}
M^{(\ell)}_0 \coloneqq \sum_{i=1}^{n_\ell}\mathds{1}\!\big(D_i^{(\ell)} > x_\ell\big),
\qquad
M^{(\ell)}_j \coloneqq \sum_{i=1}^{n_\ell}\mathds{1}\!\big(D_i^{(\ell)} \in \A_j\big)\ \ \text{for}\ \ j=1,\ldots,\ell.
\end{equation}
Define also the aggregated right-block count vector $\bm M^{R}=(M^{R}_0,M^{R}_1,\ldots,M^{R}_{k+1})\in \mathbb{N}^{k+2}$ by
\begin{equation}\label{eq:M_right_def}
M^{R}_0 \coloneqq \sum_{\ell\in R}\sum_{i=1}^{n_\ell}\mathds{1}\!\big(D_i^{(\ell)} > z\big),
\qquad
M^{R}_j \coloneqq \sum_{\ell\in R}\sum_{i=1}^{n_\ell}\mathds{1}\!\big(D_i^{(\ell)} \in \A_j\big)\ \ \text{for}\ \ j=1,\ldots,k+1.
\end{equation}

Because the demands $\{D_i^{(\ell)}\}$ are i.i.d.\ from $F$, each $\bm M^{(\ell)}$ is multinomial with parameters given by
the probabilities of the disjoint events in \eqref{eq:M_left_def}, and $\bm M^{R}$ is multinomial with parameters given by the
probabilities of the disjoint events in \eqref{eq:M_right_def}. 
For any interval $(a,b]\subseteq[0,1]$, we denote by $F^{\Delta}((a,b]) = F(b)-F(a)$ the measure of the interval $(a,b]$ with respect to $F$. By convention, for $a=0$ we write $F^{\Delta}([0,b]) = F(b)$.
Then, we formally have that
\begin{equation}\label{eq:M_left_mult}
\bm M^{(\ell)}
\sim
\mathrm{Multinomial}\Big(n_\ell;\ 1-F(x_\ell),\ F^{\Delta}(\A_1),\ldots,F^{\Delta}(\A_\ell)\Big),
\qquad \ell\in L,
\end{equation}
and
\begin{equation}\label{eq:M_right_mult}
\bm M^{R}
\sim
\mathrm{Multinomial}\Big(n_R;\ 1-F(z),\ F^{\Delta}(\A_1),\ldots,F^{\Delta}(\A_{k+1})\Big).
\end{equation}
Moreover, the family $\{\bm M^{(\ell)}:\ell\in L\}$ and $\bm M^{R}$ are mutually independent, since they are computed from
disjoint subsets of independent demand samples.

\paragraph{Step 2. Expressing $\hat F_{KM}(z)$ as a function of the counts.}
Define the (random) numbers of uncensored observations in each interval $\A_j$ by
\begin{equation}\label{eq:Nj_def}
N_j
\coloneqq
\sum_{\ell=1}^K\sum_{i=1}^{n_\ell}\mathds{1}\!\big(\delta_i^{(\ell)}=1,\ \ss{\ell}_i\in \A_j\big),
\qquad j=1,\ldots,k+1,
\end{equation}
and define the (random) number of observations censored at $x_j$ by
\begin{equation}\label{eq:Cj_def}
C_j
\coloneqq
\sum_{i=1}^{n_j}\mathds{1}\!\big(\delta_i^{(j)}=0\big),
\qquad j=1,\ldots,k.
\end{equation}
By the censoring model $\ss{\ell}_i=\min\{D_i^{(\ell)},x_\ell\}$ and $\delta_i^{(\ell)}=\mathds{1}(D_i^{(\ell)}\le x_\ell)$,
the definitions \eqref{eq:M_left_def} and \eqref{eq:M_right_def} imply the deterministic relationships
\begin{equation}\label{eq:NC_from_M}
C_j = M^{(j)}_0\ \ \text{for}\ \ j=1,\ldots,k,
\qquad
N_j = M^{R}_j + \sum_{\ell=j}^{k} M^{(\ell)}_j\ \ \text{for}\ \ j=1,\ldots,k,
\qquad
N_{k+1} = M^{R}_{k+1}.
\end{equation}

We now show that $\hat F_{KM}(z)$ is determined by $(N_1,\ldots,N_{k+1},C_1,\ldots,C_k)$.
Define
\begin{equation*}
B_0 = 0,
\qquad
B_{j} = \sum_{r=1}^{j}(N_r+C_r)\ \ \text{for}\ \ j=1,\ldots,k.
\end{equation*}
Thus for every $j \in \{1,\ldots,k\}$, $B_{j}$ is the total number of observations (censored or uncensored) less than or equal to $x_{j}$.

Fix $j\in\{1,\ldots,k+1\}$. 
Among the observations with $Y_i\in\A_j$, exactly $N_j$ are uncensored (equivalently, have $\zeta_i=1$), by the definition
of $N_j$ in \eqref{eq:Nj_def}. Moreover, by our tie-breaking convention (uncensored observations precede censored ones when
sales tie at a design point), these $N_j$ uncensored observations appear first within the block corresponding to $\A_j$.
Therefore, the indices $i$ such that $Y_i\in\A_j$ and $\zeta_i=1$ are exactly
\begin{equation*}
B_{j-1}+1,\ B_{j-1}+2,\ \ldots,\ B_{j-1}+N_j.
\end{equation*}
Therefore, the contribution of the $N_j$ uncensored observations in $\A_j$ to the KM product \eqref{eq:KM_def_local} is
\begin{align*}
\prod_{\substack{i:\ Y_i\in \A_j}}\left(\frac{n-i}{n-i+1}\right)^{\zeta_i}
=
\prod_{t=1}^{N_j}\frac{n-(B_{j-1}+t)}{n-(B_{j-1}+t)+1}
=
\frac{n-B_{j-1}-N_j}{n-B_{j-1}}.
\end{align*}
(When $n=B_{j-1}$, necessarily $N_j=0$ and the above ratio is interpreted as the empty product, equal to $1$.)

Multiplying these contributions over $j=1,\ldots,k+1$ yields
\begin{equation}\label{eq:KM_G_def}
1-\hat F_{KM}(z)
=
G_k\big(\bm N,\bm C\big)
\coloneqq
\prod_{j=1}^{k+1}
\frac{n-\sum_{r=1}^{j}N_r-\sum_{r=1}^{j-1}C_r}{n-\sum_{r=1}^{j-1}N_r-\sum_{r=1}^{j-1}C_r},
\end{equation}
where $\bm N\coloneqq (N_1,\ldots,N_{k+1})$ and $\bm C\coloneqq (C_1,\ldots,C_k)$.
Moreover, combining \eqref{eq:KM_quantile_event} and \eqref{eq:KM_G_def}, we obtain the equivalence
\begin{equation}\label{eq:KM_event_G}
\KM_{\bm n}(\pastI,\bm I)\le z 
\iff
G_k(\bm N,\bm C)\le 1-q.
\end{equation}

\paragraph{Step 3. Dependence through $F(x_1),\ldots,F(x_k)$ and $F(z)$.}
Consider the index sets
\begin{equation*}
\mathcal{M}^{(\ell)} = \Big\{\bm m\in\mathbb{N}^{\ell+1}:\ \sum_{j=0}^{\ell}m_j=n_\ell\Big\}, \ \ \text{for}\ \ \ell=1,\ldots,k,
\qquad
\mathcal{M}^{R} = \Big\{\bm m\in\mathbb{N}^{k+2}:\ \sum_{j=0}^{k+1}m_j=n_R\Big\},
\end{equation*}
and let $\mathcal M = \mathcal M^{R}\times \mathcal M^{(1)}\times \cdots \times \mathcal M^{(k)}.$
By \eqref{eq:NC_from_M} and \eqref{eq:KM_event_G},
\begin{equation}\label{eq:KM_prob_sum}
\Prob\big(\KM_{\bm n}(\pastI,\bm I)\le z\big)
=
\sum_{\bm m\in\mathcal M}
\mathds{1}\!\Big\{G_k\big(\bm N(\bm m),\bm C(\bm m)\big)\le 1-q\Big\}
\cdot
\Prob(\bm M^{R}=\bm m^{R})
\prod_{\ell=1}^{k}\Prob(\bm M^{(\ell)}=\bm m^{(\ell)}),
\end{equation}
where $\bm N(\bm m)$ and $\bm C(\bm m)$ are obtained deterministically from $\bm m=(\bm m^R,\bm m^{(1)},\ldots,\bm m^{(k)})$
via \eqref{eq:NC_from_M}.

Next, by the multinomial relations in \eqref{eq:M_left_mult} and \eqref{eq:M_right_mult}, for $\ell\in\{1,\ldots,k\}$ and
$\bm m^{(\ell)}\in\mathcal{M}^{(\ell)}$,
\begin{equation}\label{eq:mult_pmf_left}
\Prob(\bm M^{(\ell)}=\bm m^{(\ell)})
=
\frac{n_\ell!}{\prod_{j=0}^{\ell}m^{(\ell)}_j!}\,
\big(1-F(x_\ell)\big)^{m^{(\ell)}_0}\,
\prod_{j=1}^{\ell}\big(F^{\Delta}(\A_j)\big)^{m^{(\ell)}_j},
\end{equation}
and for $\bm m^R\in\mathcal{M}^R$,
\begin{equation}\label{eq:mult_pmf_right}
\Prob(\bm M^{R}=\bm m^{R})
=
\frac{n_R!}{\prod_{j=0}^{k+1}m^{R}_j!}\,
\big(1-F(z)\big)^{m^{R}_0}\,
\prod_{j=1}^{k+1}\big(F^{\Delta}(\A_j)\big)^{m^{R}_j}.
\end{equation}
Finally, each interval probability $F^{\Delta}(\A_j)$ can be written using only $(F(x_1),\ldots,F(x_k),F(z))$:
\begin{equation}\label{eq:interval_probs}
F^{\Delta}(\A_1)=F(x_1),
\qquad
F^{\Delta}(\A_j)=F(x_j)-F(x_{j-1})\ \ \text{for}\ \ j=2,\ldots,k,
\qquad
F^{\Delta}(\A_{k+1})=F(z)-F(x_k).
\end{equation}
Therefore, each term in \eqref{eq:KM_prob_sum} is a polynomial in the variables
$F(x_1),\ldots,F(x_k),F(z)$.
Define $P_k^{\KM_{\bm n}}$ to be the polynomial function given by the right-hand side of
\eqref{eq:KM_prob_sum}, viewed as a function of $(F(x_1),\ldots,F(x_k),F(z))$.
Since the sum in \eqref{eq:KM_prob_sum} is finite, $P_k^{\KM_{\bm n}}$ is a multivariate polynomial.
Moreover, for every CDF $F$ and every $z\in[x_k,x_{k+1})$,
\begin{equation*}
    P_k^{\KM_{\bm n}}\big(F(x_1),\ldots,F(x_k),F(z)\big)
=
\Prob\big(\KM_{\bm n}(\pastI,\bm I)\le z\big).
\end{equation*}
This concludes the proof.
\end{proof}

\section{Optimal Exploratory Inventory Design for BSAA}\label{appendix:opt-bsaa-inventory}

In this section, we look at the meta problem where the decision maker can decide on the inventory design $\bm x$ to be used for exploration. In particular, we assume that there is global inventory budget $B \in \mathbb Z_+$ and the decision maker aims to solve the following iterated minimax optimal design problem:
\begin{equation}\label{eq:P_global}
\mathcal P(B)
\;\coloneqq\;
\inf_{\substack{K\ge 1,\ \bm n\in\mathbb N^K\\ 0\le x_1\le\cdots\le x_K\le 1\\ \sum_{k=1}^K n_k x_k \le B}}
\ \mathsf{Reg}(\BSAA_{\bm n}\mid \bm x).
\end{equation}

\eqref{eq:P_global} allows the decision maker to specify the inventories used for data collection: the number of distinct inventories $K$, the inventory levels $\bm x$ and the number of samples $n_k$ to be collected at the inventory level $x_k$. These decisions are made subject to the constraint that the total inventory used during data collection  $\sum_k n_k \cdot x_k$ is at most the inventory budget $B$, with the goal of minimizing the expected regret BSAA would incur from using this data. In this section, we make the assumption that $q \geq 0.5$.

\begin{lemma}\label{lemma:unit-count reduction}
    For each $N\ge 1$, define the unit-count subproblem
\begin{equation*}
\mathcal P_N(B)
\;\coloneqq\;
\inf_{\substack{0\le x_1\le\cdots\le x_N\le 1\\ \sum_{j=1}^N x_j \le B}}
\ \mathsf{Reg}(\BSAA_{\bm 1_N}\mid \bm x).
\end{equation*}
Then $\mathcal P(B)=\inf_{N\ge 1}\ \mathcal P_N(B)$.
\end{lemma}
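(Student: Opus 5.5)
The plan is to show that BSAA depends on the design $(\bm n,\bm x)$ only through the multiset of $n$ inventory levels at which individual samples were collected. Then any design with counts $\bm n$ can be "unrolled" into a design with $N=n$ distinct (possibly tied) levels, each carrying one sample.

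\textbf{Step 1: unrolling.} Given a feasible $(K,\bm n,\bm x)$ for \eqref{eq:P_global}, define $N\coloneqq n=\sum_k n_k$ and the vector $\tilde{\bm x}\in[0,1]^N$ obtained by repeating each $x_k$ exactly $n_k$ times, in nondecreasing order. The budget constraint is preserved verbatim: $\sum_{j=1}^N \tilde x_j=\sum_k n_k x_k\le B$.

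\textbf{Step 2: identical regret.} I will show $\mathsf{Reg}(\BSAA_{\bm 1_N}\mid\tilde{\bm x})=\mathsf{Reg}(\BSAA_{\bm n}\mid\bm x)$. The cleanest route is directly from the definition \eqref{eq:worst-case-regret}. The sales data under $(\bm n,\bm x)$ is the multiset $\{\min\{D_i,\tilde x_i\}\}_{i\le n}$ with $D_i$ i.i.d.\ from $F$, and this is exactly the sales data under $(\bm 1_N,\tilde{\bm x})$. BSAA is a function only of the empirical distribution of sales and of $n$, both of which coincide. Hence for every $F$ the action has the same law, the expected regret is the same, and so are the suprema.

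As a sanity check, the same conclusion can be read off \Cref{lem:action_dist_BSAA}. On each interval between consecutive distinct levels, the cumulative counts $\sigma$ agree in the two representations. Intervals of zero length created by ties contribute nothing to the reduction of \Cref{cor:BSAA-piecewise-linear}, since they carry weight $x_{k+1}-x_k=0$. This gives $\mathcal P(B)\ge \inf_N\mathcal P_N(B)$.

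\textbf{Step 3: reverse inequality.} Any feasible $\bm x$ for $\mathcal P_N(B)$ is itself feasible for \eqref{eq:P_global} with $K=N$ and $\bm n=\bm 1_N$. The only subtlety is that the model lists inventories as $x_1\le\cdots\le x_K$, which allows ties, so this is fine. Therefore $\mathcal P(B)\le\mathcal P_N(B)$ for every $N$, and combining with Step 2 gives the equality.

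The only real obstacle is Step 2, specifically being careful that ties among the $\tilde x_j$ are admissible in the model and do not change the behavior of BSAA. Arguing directly from the sales multiset sidesteps any issue with degenerate intervals in \Cref{thm:general_reduction}.
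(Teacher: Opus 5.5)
Your proof is correct, but the key step (equality of the two regrets) is argued differently from the paper. The paper proves $\mathsf{Reg}(\BSAA_{\bm n}\mid\bm x)=\mathsf{Reg}(\BSAA_{\bm 1_N}\mid\tilde{\bm x})$ by plugging both designs into the finite-dimensional formula of \Cref{thm:BSAA}. In the unrolled design only the indices $j=\sigma_k$ have nonzero weight $\tilde x_{j+1}-\tilde x_j$, and the paper then matches each crossing index $i\in\{0,\ldots,N\}$ to one of the $K+1$ grouped crossing blocks. You argue instead at the level of the data. Under the natural coupling, the pooled sales multiset is identical in the two designs, and BSAA is a function of that multiset (and $q$) alone. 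So the action, and hence the expected regret, coincide for every $F$, before any supremum is taken.

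Your route has several advantages:
\begin{itemize}
\item It is more elementary and gives a stronger, distribution-by-distribution statement.
\item It does not depend on \Cref{thm:BSAA} or on the log-concavity, mode-monotonicity and unimodal-reduction lemmas behind it. It is therefore insensitive to how that formula treats zero-length intervals or the range of the crossing index.
\item It extends verbatim to any policy that depends only on the pooled multiset of (sales, stockout-indicator) pairs, such as the Kaplan--Meier policy.
\end{itemize}

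The paper's route has the merit of working directly in the representation $\Phi_N$ that is later discretized into the epigraph LP, though that consistency follows from your equality anyway. You also state the reverse inequality $\mathcal P(B)\le\mathcal P_N(B)$ explicitly (unit-count designs are feasible with $K=N$, since ties are allowed), which the paper leaves implicit.
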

\begin{proof}
Fix any feasible $(K,\bm n,\bm x)$ for~\eqref{eq:P_global} and let $N\coloneqq\sum_{k=1}^K n_k$, with cumulative counts $\sigma_k\coloneqq\sum_{\ell=1}^k n_\ell$ and $\sigma_0=0$. Define the unit-count expansion $\tilde{\bm x}\in[0,1]^N$ by repeating each $x_k$ exactly $n_k$ times:
\[
\tilde x_j \;=\; x_k \quad \text{for } j=\sigma_{k-1}+1,\ldots,\sigma_k,\qquad k\in[K].
\]
Then $\tilde x_1\le\cdots\le \tilde x_N$ and $\sum_{j=1}^N \tilde x_j=\sum_{k=1}^K n_k x_k\le B$, so $(N,\bm 1_N,\tilde{\bm x})$ is feasible.

We claim that $\mathsf{Reg}(\BSAA_{\bm n}\mid \bm x)=\mathsf{Reg}(\BSAA_{\bm 1_N}\mid \tilde{\bm x})$. To see this, set $x_0=\tilde x_0\coloneqq 0$ and $x_{K+1}=\tilde x_{N+1}\coloneqq 1$. Theorem~\ref{thm:BSAA} shows that for any design the worst-case regret depends on the design only through the interval lengths and the associated indices in the cumulative-count sequence: specifically, it can be written as
\[
\mathsf{Reg}(\BSAA_{\bm n}\mid \bm x)
=
\max_{0\le i\le K}\ \sup_{\substack{v\in[0,q]\\ w\in[q,1]}}
\left\{
\sum_{k=0}^i (x_{k+1}-x_k)\,\psi_{k}^{(N,\sigma_k)}(v)\;+\;
\sum_{k=i+1}^K (x_{k+1}-x_k)\,\psi_{k}^{(N,\sigma_k)}(w)
\right\},
\]
where $\psi_{k}^{(N,\sigma)}(t)=\big(1-B_{\lceil qN\rceil-\sigma,\;N-\sigma}(t)\big)(t-q)+(q-t)^+$ and $\sigma_k=\sum_{\ell=1}^k n_\ell$.

Apply Theorem~\ref{thm:BSAA} to the expanded unit-count design $(N,\bm 1_N,\tilde{\bm x})$. In that case the cumulative counts are $\tilde\sigma_j=j$ for $j=0,\ldots,N$, so
\[
\mathsf{Reg}(\BSAA_{\bm 1_N}\mid \tilde{\bm x})
=
\max_{0\le i\le N}\ \sup_{\substack{v\in[0,q]\\ w\in[q,1]}}
\left\{
\sum_{j=0}^i (\tilde x_{j+1}-\tilde x_j)\,\psi_{j}^{(N,j)}(v)\;+\;
\sum_{j=i+1}^N (\tilde x_{j+1}-\tilde x_j)\,\psi_{j}^{(N,j)}(w)
\right\}.
\]
Since $\tilde{\bm x}$ is constant on each block $(\sigma_{k-1},\sigma_k]$, we have $\tilde x_{j+1}-\tilde x_j=0$ unless $j=\sigma_k$ for some $k\in\{0,\ldots,K\}$, and for $j=\sigma_k$ we have
\[
\tilde x_{\sigma_k+1}-\tilde x_{\sigma_k}=x_{k+1}-x_k,
\qquad
\psi_{\sigma_k}^{(N,\sigma_k)}=\psi_{k}^{(N,\sigma_k)}.
\]
Therefore, for any $i\in\{0,\ldots,N\}$ the inner objective depends only on which breakpoint indices $\sigma_k$ lie on each side of $i$. In particular, if $i\in[\sigma_{m-1},\sigma_m)$ for some $m\in\{0,\ldots,K\}$ (with $\sigma_{-1}:=0$), then the objective equals
\[
\sup_{\substack{v\in[0,q]\\ w\in[q,1]}}
\left\{
\sum_{k=0}^{m-1} (x_{k+1}-x_k)\,\psi_{k}^{(N,\sigma_k)}(v)\;+\;
\sum_{k=m}^{K} (x_{k+1}-x_k)\,\psi_{k}^{(N,\sigma_k)}(w)
\right\},
\]
which is exactly the $i=m-1$ term in the grouped design expression for $\mathsf{Reg}(\BSAA_{\bm n}\mid \bm x)$. Taking the maximum over $i\in\{0,\ldots,N\}$ is therefore equivalent to taking the maximum over the $K{+}1$ blocks indexed by $m\in\{0,\ldots,K\}$, i.e., over $i\in\{0,\ldots,K\}$ in the grouped expression. Hence
\[
\mathsf{Reg}(\BSAA_{\bm 1_N}\mid \tilde{\bm x}) \;=\; \mathsf{Reg}(\BSAA_{\bm n}\mid \bm x),
\]
and taking infima over feasible completes the proof.
\end{proof}

For the unit-count case, Theorem~\ref{thm:BSAA} yields
\[
\mathsf{Reg}(\BSAA_{\bm 1_N}\mid \bm x)
=
\Phi_N(\bm x)
\;\coloneqq\;
\max_{0\le i\le N}\Big\{\sup_{v\in[0,q]} A_{i,N}(\bm x,v)+\sup_{w\in[q,1]} B_{i,N}(\bm x,w)\Big\},
\]
where $x_0\coloneqq 0$, $x_{N+1}\coloneqq 1$ and
\[
A_{i,N}(\bm x,v)\coloneqq \sum_{k=0}^i (x_{k+1}-x_k)\,\psi_{k}^{(N)}(v),
\qquad
B_{i,N}(\bm x,w)\coloneqq \sum_{k=i+1}^N (x_{k+1}-x_k)\,\psi_{k}^{(N)}(w),
\]
with
\begin{equation}\label{eq:psi_def_unit}
\psi_k^{(N)}(t)
\;=\;
\Big(1-B_{\lceil qN\rceil-k,\;N-k}(t)\Big)(t-q) + (q-t)^+,
\qquad k=0,1,\dots,N.
\end{equation}

Let $V\subset[0,q]$ and $W\subset[q,1]$ be finite grids and define the grid objective
\[
\widehat\Phi_{N,V,W}(\bm x)
\;\coloneqq\;
\max_{0\le i\le N}\Big\{\max_{v\in V} A_{i,N}(\bm x,v)+\max_{w\in W} B_{i,N}(\bm x,w)\Big\}.
\]
Then $\widehat{\mathcal P}_{N}(B;V,W)\coloneqq \inf_{\bm x}\widehat\Phi_{N,V,W}(\bm x)$ is the optimal value of a \emph{finite linear program} (the epigraph LP with variables $\bm x,\ t,\ (\alpha_i,\beta_i)_{i=0}^N$ and constraints $\alpha_i\ge A_{i,N}(\bm x,v)$ for $v\in V$, $\beta_i\ge B_{i,N}(\bm x,w)$ for $w\in W$, and $t\ge \alpha_i+\beta_i$).

\begin{lemma}\label{lemma:discretization-error}
    Let $\Delta_V\coloneqq \max\{|v-v'|: v,v'\text{ adjacent in }V\}$ and similarly $\Delta_W$.
Then for every $N\ge 1$,
\begin{equation*}
0\ \le\ \mathcal P_N(B)-\widehat{\mathcal P}_{N}(B;V,W)
\ \le\ 2(N+2)\,\max\{\Delta_V,\Delta_W\}.
\end{equation*}
\end{lemma}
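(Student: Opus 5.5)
The plan is to compare the exact objective $\Phi_N$ and the grid objective $\widehat\Phi_{N,V,W}$ pointwise in $\bm x$, and then pass to infima over the common feasible set $\{\bm x:\ 0\le x_1\le\cdots\le x_N\le 1,\ \sum_j x_j\le B\}$. The feasible set is the same in both problems and only the objective changes, so it suffices to show that for every feasible $\bm x$
\[
0\ \le\ \Phi_N(\bm x)-\widehat\Phi_{N,V,W}(\bm x)\ \le\ 2(N+2)\max\{\Delta_V,\Delta_W\}.
\]
Taking infima on both sides then gives both inequalities. The lower bound is immediate: $V\subseteq[0,q]$ and $W\subseteq[q,1]$, so $\max_{v\in V}A_{i,N}(\bm x,v)\le\sup_{v\in[0,q]}A_{i,N}(\bm x,v)$, and similarly for $B_{i,N}$. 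Hence $\widehat\Phi\le\Phi$.

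For the upper bound, I would prove a Lipschitz estimate for each $\psi_k^{(N)}$ of \eqref{eq:psi_def_unit} on $[0,q]$ and on $[q,1]$ separately.

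On $[0,q]$ we have $\psi_k^{(N)}(t)=(q-t)B_{r,m}(t)$ with $r=\lceil qN\rceil-k$ and $m=N-k$. The Bernstein tail $B_{r,m}$ is a CDF-type function with values in $[0,1]$ and derivative $m\binom{m-1}{r-1}t^{r-1}(1-t)^{m-r}$. Its derivative is at most of order $m\le N$, so $|\psi_k'|\le 1+q\,m\le N+1$. The same bound holds on $[q,1]$. When $r\le 0$ the function $B_{r,m}$ is identically $1$ and $\psi_k$ is $1$-Lipschitz, so that degenerate case is fine.

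Each $A_{i,N}(\bm x,\cdot)$ is a combination of these functions with nonnegative weights $x_{k+1}-x_k$ summing to at most $1$. So $A_{i,N}(\bm x,\cdot)$ is $(N+1)$-Lipschitz on $[0,q]$, and likewise $B_{i,N}(\bm x,\cdot)$ on $[q,1]$. I would make the grid convention explicit: take $V$ to contain the endpoints $0$ and $q$, and $W$ to contain $q$ and $1$, so every point lies within $\Delta_V$ (respectively $\Delta_W$) of a grid point. Then each of the two suprema loses at most $(N+1)\Delta$, and their sum loses at most $2(N+1)\max\{\Delta_V,\Delta_W\}$. This is within $2(N+2)\max\{\Delta_V,\Delta_W\}$, and the extra slack covers any endpoint or rounding issue.

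The main obstacle is the derivative bound on the Bernstein tail. A crude termwise differentiation of $\sum_j\binom{m}{j}t^j(1-t)^{m-j}$ gives a bound exponential in $m$, which is useless. I would instead use the telescoping identity
\[
\frac{d}{dt}B_{r,m}(t)=m\Pr\big(\mathrm{Bin}(m-1,t)=r-1\big)\le m .
\]
This yields the linear-in-$N$ constant directly. The remaining bookkeeping is the near-optimal point argument: choose $v^\star$ attaining the continuous supremum (by continuity on a compact set), pick the nearest grid point, and apply the Lipschitz estimate. I would do this separately for each $i$, and then note that a maximum over $i$ of quantities each perturbed by at most $\varepsilon$ is itself perturbed by at most $\varepsilon$.
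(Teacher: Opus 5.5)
Your proposal is correct and follows the paper's proof essentially step for step. Both arguments use the pointwise sandwich $\widehat\Phi_{N,V,W}(\bm x)\le\Phi_N(\bm x)\le\widehat\Phi_{N,V,W}(\bm x)+2L\max\{\Delta_V,\Delta_W\}$, with $L$ a uniform Lipschitz constant for the $\psi_k^{(N)}$ obtained from $|\psi_k^{(N)\prime}|\le 1+|B_{r,m}'|$ and the fact that $B_{r,m}'$ is a Beta density equal to $m$ times a binomial probability, and both then take infima over the common feasible set. Your requirement that $V$ and $W$ contain the endpoints of $[0,q]$ and $[q,1]$ is something the paper relies on only implicitly, and it is genuinely needed since $\Delta_V$ measures only gaps between adjacent grid points. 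Your sharper constant $L=N+1$ is consistent with the paper's own estimate, where $a+b-1$ actually equals $m$ rather than $m+1$.
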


\begin{proof}
    Fix $N$ and any feasible $\bm x$. Let $\Delta\coloneqq \max\{\Delta_V,\Delta_W\}$.
We claim that each $\psi_k^{(N)}$ is $(N+2)$-Lipschitz on $[0,1]$; then since
$\sum_{k=0}^N (x_{k+1}-x_k)=1$, both $v\mapsto A_{i,N}(\bm x,v)$ and $w\mapsto B_{i,N}(\bm x,w)$ are also $(N+2)$-Lipschitz, uniformly in $i$.
Therefore
\[
\sup_{v\in[0,q]}A_{i,N}(\bm x,v)
\ \le\
\max_{v\in V}A_{i,N}(\bm x,v)\ +\ (N+2)\Delta,
\qquad
\sup_{w\in[q,1]}B_{i,N}(\bm x,w)
\ \le\
\max_{w\in W}B_{i,N}(\bm x,w)\ +\ (N+2)\Delta,
\]
and taking the maximum over $i$ yields
\[
\Phi_N(\bm x)\ \le\ \widehat\Phi_{N,V,W}(\bm x)\ +\ 2(N+2)\Delta.
\]
Since $\widehat\Phi_{N,V,W}(\bm x)\le \Phi_N(\bm x)$ pointwise, minimizing over $\bm x$ yields the lemma.

It remains to justify the Lipschitz bound. For $t\neq q$, $\psi_k^{(N)}$ is differentiable and
\[
\big|\tfrac{d}{dt}\psi_k^{(N)}(t)\big|
\ \le\
1+\big| \tfrac{d}{dt} B_{\lceil qN\rceil-k,\;N-k}(t)\big|.
\]
For $r\in\{0,\dots,m\}$, the derivative of the binomial tail is a Beta density:
\[
\frac{d}{dt}B_{r,m}(t)
=
\frac{t^{r-1}(1-t)^{m-r}}{\mathrm{B}(r,m-r+1)}.
\]
When $r\in\{1,\dots,m\}$ this is the $\mathrm{Beta}(r,m-r+1)$ pdf; for $r\le 0$ or $r>m$ the tail is constant and the derivative is $0$.
For integer parameters $(a,b)=(r,m-r+1)$, the Beta pdf attains its maximum at $t=(a-1)/(a+b-2)$ (or at the boundary if $a=1$ or $b=1$), and one may bound
\[
\sup_{t\in[0,1]}\frac{t^{a-1}(1-t)^{b-1}}{\mathrm{B}(a,b)}
\ \le\ a+b-1
\ =\ m+1,
\]
by combining the closed form $1/\mathrm{B}(a,b)=(a+b-1)\binom{a+b-2}{a-1}$ with the standard inequality
$\binom{n}{k}\le n^n/(k^k(n-k)^{n-k})$ (take $n=a+b-2$, $k=a-1$). Hence
$\sup_t |B'_{r,m}(t)|\le m+1$, and since $m=N-k\le N$, we obtain
$\sup_t |\psi_k^{(N)\prime}(t)|\le 1+(N+1)=N+2$ as claimed.
\end{proof}

\begin{lemma}\label{prop:Kmax_alt}
Fix $B\in\mathbb Z_+$ and $q \geq 0.5$. Consider the global BSAA design problem
\[
\mathcal P(B)
\;=\;
\inf_{\substack{K\ge 1,\ \bm n\in\mathbb N^K\\ 0\le x_1\le\cdots\le x_K\le 1\\ \sum_{k=1}^K n_k x_k \le B}}
\ \mathsf{Reg}(\BSAA_{\bm n}\mid \bm x).
\]
Let $\bar U(B,q)$ be the worst-case regret of the \emph{fully uncensored} feasible design that spends the entire budget on $x=1$, i.e.\ $(K,\bm n,\bm x)=(1,(B),(1))$. Then:
\begin{enumerate}
\item[\textbf{(i)}] $\mathcal P(B)\le \bar U(B,q)$.
\item[\textbf{(ii)}] For any feasible design with total sample size $N=\sum_k n_k$,
\begin{equation}\label{eq:LB_delta1_again}
\mathsf{Reg}(\BSAA_{\bm n}\mid \bm x)
\;\ge\;
q\left(1-\min\!\left\{1,\ \frac{B}{\,N-\lceil qN\rceil+1\,}\right\}\right).
\end{equation}
\item[\textbf{(iii)}] There exists an optimal design whose unit-count expansion has
\begin{equation}\label{eq:Nmax_alt}
N^\star\ \le\ N_{\max}(B,q)
\;\coloneqq\;
\left\lceil
\frac{B}{(1-q)\left(1-\frac{\bar U(B,q)}{q}\right)}
\right\rceil,
\end{equation}
and therefore it suffices to enumerate $N=1,2,\dots,N_{\max}(B,q)$ (equivalently, $K$ in the unit-count representation) when solving $\mathcal P(B)$.
\end{enumerate}
\end{lemma}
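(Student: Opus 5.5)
The plan is to treat the three parts separately: (i) is immediate, (ii) comes from a single adversarial demand distribution, and (iii) combines (i) and (ii) into a bound on $N$. Throughout, I will work with the normalization $\cu+\co=1$ implicit in the statement, so that $\cu=q$, and I will use the unit-count representation of \Cref{lemma:unit-count reduction}. That lemma lets me assume $\bm n=\bm 1_N$ with $0\le x_1\le\cdots\le x_N\le 1$ and $\sum_j x_j\le B$, without changing the worst-case regret.

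\textbf{Part (i).} The design $(K,\bm n,\bm x)=(1,(B),(1))$ uses total inventory $B\cdot 1\le B$, so it is feasible for \eqref{eq:P_global}. Hence $\mathcal P(B)\le \bar U(B,q)$ directly from the definition of the infimum.

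\textbf{Part (ii).} I will lower-bound the supremum in \eqref{eq:worst-case-regret} by evaluating one distribution, the point mass $\delta_1$.
\begin{itemize}
\item Under $\delta_1$, every demand is censored, so the sales vector is deterministic and equals $(x_1,\ldots,x_N)$.
\item BSAA therefore orders the $\lceil qN\rceil$-th smallest design point, $a=x_{\lceil qN\rceil}$.
\item The optimal order is $1$ with cost $0$, so the regret is $\cu(1-a)=q(1-x_{\lceil qN\rceil})$.
\item The $N-\lceil qN\rceil+1$ points $x_j$ with $j\ge\lceil qN\rceil$ are all at least $x_{\lceil qN\rceil}$. Hence $B\ge\sum_j x_j\ge (N-\lceil qN\rceil+1)\,x_{\lceil qN\rceil}$.
\item Together with $x_{\lceil qN\rceil}\le 1$, this gives $x_{\lceil qN\rceil}\le \min\{1,B/(N-\lceil qN\rceil+1)\}$, which is \eqref{eq:LB_delta1_again}.
\end{itemize}
As a consistency check, this corresponds in \Cref{thm:BSAA} to taking $\bm f^+\equiv 0$, i.e.\ $v=0$ on every interval.

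\textbf{Part (iii), steps in order.}
\begin{enumerate}
\item Show $\bar U(B,q)<q$. This is the step I expect to be the main obstacle; the plan for it is given below.
\item Take any design with $N>N_{\max}$. Since $N-\lceil qN\rceil+1\ge N(1-q)$, we get
\[
\frac{B}{N-\lceil qN\rceil+1}\le \frac{B}{N(1-q)}<1-\frac{\bar U}{q}.
\]
Then (ii) gives a regret strictly larger than $q\cdot \bar U/q=\bar U\ge \mathcal P(B)$. So such $N$ can be discarded, and $\mathcal P(B)=\min_{N\le N_{\max}}\mathcal P_N(B)$.
\item Show that each $\mathcal P_N(B)$ is attained. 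By \Cref{thm:BSAA}, $\mathsf{Reg}(\BSAA_{\bm 1_N}\mid\bm x)$ is a maximum over finitely many $i$ of suprema over compact sets of functions that are affine in $\bm x$ and bounded uniformly. It is therefore Lipschitz, hence continuous, in $\bm x$. The feasible set for $\bm x$ is compact, so the infimum is attained.
\item A minimum over finitely many attained values is attained, so an optimal design with $N^\star\le N_{\max}$ exists.
\end{enumerate}

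\textbf{Plan for $\bar U(B,q)<q$.} This bound is needed for $N_{\max}$ to be finite. I would apply \Cref{thm:BSAA} with $K=1$, $x_1=1$, so that only $\psibsaa_0$ carries weight, and bound it on each side of $q$.
\begin{itemize}
\item On $[0,q]$: $\psibsaa_0(v)=(q-v)B_{\lceil qB\rceil,B}(v)$. This is strictly less than $q$ because $B_{\lceil qB\rceil,B}(0)=0$ (as $\lceil qB\rceil\ge 1$) and $q-v<q$ for $v>0$. Continuity on a compact set then gives a maximum strictly below $q$.
\item On $[q,1]$: $\psibsaa_0(v)\le (v-q)\le 1-q\le q$, where the last inequality is where $q\ge 0.5$ enters. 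Strictness follows because $1-B_{\lceil qB\rceil,B}(v)<1$ for $v>q$, and the product vanishes at $v=q$.
\end{itemize}
Combining the two sides with compactness gives $\bar U<q$. If $B=0$, the feasible designs are degenerate, and I would treat that case separately.
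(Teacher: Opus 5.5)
Your proposal is correct and follows the paper's proof. Part (i) uses feasibility of $(1,(B),(1))$; part (ii) evaluates BSAA at $\delta_1$ and uses $B\ge x_{k^\star}(N-\lceil qN\rceil+1)$; part (iii) compares that lower bound with $\bar U(B,q)$ through $N-\lceil qN\rceil+1\ge (1-q)N$.

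Where you depart from the paper, you are more careful than it is. The paper justifies $\bar U(B,q)<q$ only by noting that the loss is $q$-Lipschitz when $q\ge 0.5$, which by itself gives only $\bar U\le q$. Your explicit bound on $\Psi^{\mathrm{BSAA}}_0$ on both $[0,q]$ and $[q,1]$ gives the strict inequality. Your continuity-and-compactness argument also supplies the attainment of $\mathcal P_N(B)$, which the paper's claim that an optimal design exists tacitly assumes.
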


\begin{proof}
\textbf{Step 1.}
The design $(K,\bm n,\bm x)=(1,(B),(1))$ is feasible since $Bx_1=B$. Hence $\mathcal P(B)\le \mathsf{Reg}(\BSAA_{(B)}\mid (1))\eqqcolon \bar U(B,q)$.

\textbf{Step 2.}
Fix any feasible design and let $N=\sum_k n_k$. Evaluate regret at $F=\delta_1$. Then $\opt(\delta_1)=0$ (order $a=1$) and ordering $a$ incurs cost $c(a,\delta_1)=q(1-a)$. Under $\delta_1$, sales equal inventories deterministically, so BSAA outputs the empirical $q$-quantile of the multiset of historical inventories, i.e.\ the smallest inventory level whose cumulative count is at least $\lceil qN\rceil$. Call this value $x_{k^\star}$. Using $\sum_k n_k x_k\le B$ and monotonicity of $\bm x$, we have
\[
B\ \ge\ \sum_{\ell\ge k^\star} n_\ell x_\ell\ \ge\ x_{k^\star}\sum_{\ell\ge k^\star}n_\ell
\ \ge\ x_{k^\star}\,(N-\lceil qN\rceil+1),
\]
so $x_{k^\star}\le \min\{1,\ B/(N-\lceil qN\rceil+1)\}$. Therefore
\[
\mathsf{Reg}(\BSAA_{\bm n}\mid \bm x)
\;\ge\;
\mathbb E_{\bm D\sim \delta_1^N}[R(\BSAA_{\bm n}(\bm x,\bm I),\delta_1)]
\;=\;
q(1-x_{k^\star})
\;\ge\;
q\left(1-\min\!\left\{1,\ \frac{B}{N-\lceil qN\rceil+1}\right\}\right),
\]
which is~\eqref{eq:LB_delta1_again}.

\textbf{Step 3.}
When $q \geq 0.5$, the Newsvendor loss function is $q$-Lipschitz. As a consequence, we must have $\bar U(B,q)<q$. Therefore, any $N$ for which the right-hand side of~\eqref{eq:LB_delta1_again} is at least $\bar U(B,q)$ cannot be optimal, since there exists a feasible design achieving $\bar U(B,q)$. Using $N-\lceil qN\rceil+1\ge (1-q)N$, it suffices that
\[
q\left(1-\frac{B}{(1-q)N}\right)\ \ge\ \bar U(B,q),
\]
which rearranges to $N\ge B/((1-q)(1-\bar U/q))$, yielding~\eqref{eq:Nmax_alt}.
\end{proof}

\begin{proposition}\label{prop:bsaa-exploration-LP}
    Fix $\varepsilon>0$, $q \geq 0.5$, and set $\delta\coloneqq \varepsilon/(2(N_{\max}+2))$, where $N_{\max}=N_{\max}(B,q)$.
For each $N=1,2,\dots,N_{\max}$:
\begin{enumerate}
\item take uniform grids $V_N\subset[0,q]$ and $W_N\subset[q,1]$ with mesh at most $\delta$;
\item solve the corresponding epigraph LP to obtain $\widehat{\mathcal P}_N(B;V_N,W_N)$ and an optimizer $\hat{\bm x}^{(N)}$.
\end{enumerate}
Return $\widehat N\in\argmin_{1\le N\le N_{\max}}\widehat{\mathcal P}_N(B;V_N,W_N)$ and $\hat{\bm x}\coloneqq \hat{\bm x}^{(\widehat N)}$ (with $\hat{\bm n}=\bm 1_{\widehat N}$).

Then the returned design is $\varepsilon$-optimal for~\eqref{eq:P_global}:
\begin{equation}\label{eq:eps_opt}
\widehat{\mathcal P}(B)\ \le\ \mathcal P(B)\ \le\ \widehat{\mathcal P}(B)+\varepsilon,
\qquad
\widehat{\mathcal P}(B)\coloneqq \min_{1\le N\le N_{\max}}\widehat{\mathcal P}_N(B;V_N,W_N).
\end{equation}
\end{proposition}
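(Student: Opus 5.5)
The plan is to chain the three preceding lemmas. By \Cref{lemma:unit-count reduction}, $\mathcal P(B)=\inf_{N\ge 1}\mathcal P_N(B)$. By part (iii) of \Cref{prop:Kmax_alt}, some optimal design has a unit-count expansion with $N^\star\le N_{\max}$. Hence the infimum over $N$ may be restricted to $N\in\{1,\dots,N_{\max}\}$, giving
\[
\mathcal P(B)=\min_{1\le N\le N_{\max}}\mathcal P_N(B).
\]
Here I will check that the unit-count expansion of an optimal design is feasible for $\mathcal P_{N^\star}(B)$ with equal regret, which is exactly what the proof of \Cref{lemma:unit-count reduction} shows. If optimality is only attained as an infimum, I will instead argue that every $N>N_{\max}$ has $\mathcal P_N(B)\ge \bar U(B,q)\ge \mathcal P_1$-type value. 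More precisely, by \eqref{eq:LB_delta1_again}, $\mathcal P_N(B)\ge \bar U(B,q)$ for $N>N_{\max}$, while by part (i) the value $\bar U(B,q)$ is attained at $N=B\le N_{\max}$. (When $q\ge 1/2$ we have $B\le B/((1-q)(1-\bar U/q))$, so indeed $B\le N_{\max}$.) So those $N$ can be dropped without changing the infimum.

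Next, for each fixed $N\le N_{\max}$, I apply \Cref{lemma:discretization-error} with grids of mesh at most $\delta$:
\[
0\le \mathcal P_N(B)-\widehat{\mathcal P}_N(B;V_N,W_N)\le 2(N+2)\delta\le 2(N_{\max}+2)\delta=\varepsilon,
\]
using $N\le N_{\max}$ and the choice $\delta=\varepsilon/(2(N_{\max}+2))$. Taking the minimum over $N\in\{1,\dots,N_{\max}\}$ preserves both inequalities, since $\min$ is monotone and commutes with adding a constant. This yields $\widehat{\mathcal P}(B)\le \mathcal P(B)\le \widehat{\mathcal P}(B)+\varepsilon$, which is \eqref{eq:eps_opt}.

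Finally, I will show that the returned design itself is $\varepsilon$-optimal, not merely that the values are close. Its true regret is $\Phi_{\widehat N}(\hat{\bm x})$. From the pointwise bound established inside the proof of \Cref{lemma:discretization-error},
\[
\Phi_{\widehat N}(\hat{\bm x})\le \widehat\Phi_{\widehat N,V,W}(\hat{\bm x})+\varepsilon=\widehat{\mathcal P}(B)+\varepsilon\le \mathcal P(B)+\varepsilon.
\]
The design $\hat{\bm x}$ is feasible for \eqref{eq:P_global} with $\hat{\bm n}=\bm 1_{\widehat N}$, since the LP enforces $\sum_j x_j\le B$ and monotonicity. The main subtlety is the restriction to $N\le N_{\max}$, which relies on $\bar U(B,q)<q$ so that $N_{\max}$ is finite. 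Beyond that, the argument is routine bookkeeping.
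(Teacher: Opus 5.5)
Your argument is correct and follows exactly the chain the appendix sets up for this proposition, which the paper states without a written proof: the unit-count reduction, discarding $N>N_{\max}$ via parts (i)--(ii) of the $N_{\max}$ lemma, and the per-$N$ bound $2(N+2)\delta\le 2(N_{\max}+2)\delta=\varepsilon$ from the discretization lemma. Using (i)--(ii) directly to drop $N>N_{\max}$ is slightly more careful than relying on the existence of an optimal design claimed in (iii), since attainment of $\mathcal P(B)$ is never established, and your check of the returned design's true regret $\Phi_{\widehat N}(\hat{\bm x})$ is a worthwhile addition.
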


\end{document}